\theoremstyle{plain}
\newtheorem{theorem}{Theorem}
\newtheorem{lemma}{Lemma}
\newtheorem{proposition}{Proposition}
\theoremstyle{definition}
\newtheorem{definition}{Definition}
\newtheorem{assumption}{Assumption}
\theoremstyle{remark}
\newtheorem{remark}{Remark}
\title{Feature-Based Q-Learning for Two-Player Stochastic Games}
\author{%
  Zeyu Jia \\
  Peking University\\
  \texttt{jiazy@pku.edu.cn} \\
  \And
  Lin F. Yang\\
  Princeton University\\
  \texttt{lin.yang@princeton.edu}\\
  \And
  Mengdi Wang\\
  Princeton University\\
  \texttt{mengdiw@princeton.edu}
}
\newcommand{\pr}{\mathbf{Pr}}
\newcommand{\mM}{\mathcal{M}}
\newcommand{\mE}{\mathcal{E}}
\newcommand{\mG}{\mathcal{G}}
\newcommand{\mA}{\mathcal{A}}
\newcommand{\mS}{\mathcal{S}}
\newcommand{\mK}{\mathcal{K}}
\newcommand{\mT}{\mathcal{T}}
\begin{document}
\maketitle

\begin{abstract}
	\par Consider a two-player zero-sum stochastic game where the transition function can be embedded in a given feature space. We propose a two-player Q-learning algorithm for approximating the Nash equilibrium strategy via sampling. The algorithm is shown to find an $\epsilon$-optimal strategy using sample size linear to the number of features. To further improve its sample efficiency, we develop an accelerated algorithm by adopting techniques such as variance reduction, monotonicity preservation and two-sided strategy approximation. We prove that the algorithm is guaranteed to find an $\epsilon$-optimal strategy using no more than $\tilde{\mathcal{O}}(K/(\epsilon^{2}(1-\gamma)^{4}))$ samples with high probability, where $K$ is the number of features and $\gamma$ is a discount factor. The sample, time and space complexities of the algorithm are independent of original dimensions of the game.
\end{abstract}

\section{Introduction}
\par Two-player turn based stochastic game (2-TBSG) is a generalization of Markov decision process (MDP), both of which are widely used models in machine learning and operations research. While MDP involves one agent with its simple objective to maximize the total reward, 2-TBSG is a zero-sum game involving two players with opposite objectives: one player seeks to maximize the total reward and the other player seeks to minimize the total reward. In a 2-TBSG, the set of all states is divided into two subsets that are controlled by the two players, respectively.  
We focus on the discounted stationary 2-TBSG, where the probability transition model is invariant across time and the total reward is the infinite sum of all discounted rewards. Our goal is to approximate the Nash equilibrium of the 2-TBSG, whose existence is proved in \cite{shapley1953stochastic}.
\par There are two practical obstacles standing in solving 2-TBSG:
\begin{itemize}
		\vspace{-2mm}
	\item We usually do not know the transition probability model explicitly;
		\vspace{-1.5mm}
	\item The number of possible states and actions are very large or even infinite.
		\vspace{-2mm}
\end{itemize}
In this paper we have access to a sampling oracle that can generate sample transitions from any state and action pair. We also suppose that a finite number of state-action features are available, such that the unknown probability transition model can be embedded using the feature space. These features allow us to solve 2-TBSG of arbitrary dimensions using parametric algorithms. 

\par A question is raised naturally, that is, how many samples are needed in order to find an approximate Nash equilibrium? For solving the one-player MDP to $\epsilon$-optimality using $K$ features, \cite{yang2019sample} proves an information-theoretic lower bound of sample complexity $\Omega(K/((1-\gamma)^{3}\epsilon^{2}))$. Since MDP is a special case of 2-TBSG, the same lower bound applies to 2-TBSG. Yet there has not been any provably efficient algorithm for solving 2-TBSG using features.

\par 
To answer this question, we propose two sampling-based algorithms and provide sample complexity analysis. 
Motivated by the value iteration and Q-learning like algorithms given by \cite{hansen2013strategy,yang2019sample}, we propose a two-player Q-learning algorithm for solving 2-TBSG using given features. When the true transition model can be fully embedded in the feature space without losing any information, our algorithm finds an $\epsilon$-optimal strategy using no more than $\tilde{O}(K/((1-\gamma)^{7}\epsilon^{2}))$ sample transitions, where $K$ is the number of state-action features. We also provide model misspecification error bound for the case where the features cannot fully embed the transition model.

\par To further improve the sample complexity, we use a variance reduction technique, together with a specifically designed monotonicity preservation technique which were previously used in \cite{yang2019sample}, to develop an algorithm that is even more sample-efficient. This algorithm uses a two-sided approximation scheme to find the equilibrium value from both above and below. It computes the final $\epsilon$-optimal strategy by sticking two approximate strategies together. This algorithm is proved to find an $\epsilon$-optimal strategy with high probability using $\tilde{O}(K/((1-\gamma)^{4}\epsilon^{2}))$ samples, which improves significantly from our first result. 
Our results are the first and sharpest sample complexity bounds for solving two-player stochastic game using features, to our best knowledges. Our algorithms are the first ones of their kind with provable sample efficiency. It is also worth noting that the algorithms are space and time efficient, whose complexities depend polynomially on $K$ and $\frac1{1-\gamma}$, and are free from the game's dimensions. 



\par In Section \ref{sec2} we review  related literatures. Section \ref{sec3} presents the problem formulation and basics. We introduce a basic two-player Q-learning algorithm in Section \ref{sec4} together with its analysis. The accelerated two-player Q-learning algorithm and its analysis are presented in Section \ref{sec5} and Section \ref{sec6}.

 
\section{Related Works}\label{sec2}

\par The 2-TBSG is a special case of games and stochastic games (SG), which are first introduced in \cite{von2007theory} and \cite{shapley1953stochastic}. For a comprehensive introduction on SG, please refer to the books \cite{neyman2003stochastic} and \cite{filar2012competitive}. A number of deterministic algorithms have been developed for solving 2-TBSG when its explicit form is fully given, including \cite{littman1996algorithms, ludwig1995subexponential, hansen2013strategy}. For example \cite{rao1973algorithms} proposes the strategy iteration algorithm. A value iteration method is proposed by \cite{hansen2013strategy}, which is one of the motivation of our algorithm. 

\par In the special case of MDP, there exist a large body of works on its sample complexity and sampling-based algorithms. For the tabular setting (finitely many state and actions), sample complexity of MDP with a sampling oracle has been studied in \cite{kearns1999finite, azar2013minimax, sidford2018variance, NIPS2018_7765, kakade2003sample, singh1994upper, azar2011speedy}. Lower bounds for sample complexity have been studied in \cite{azar2013minimax, even2006action, azar2011reinforcement}, where the first tight lower bound $\Omega(|\mS||\mA| / (1-\gamma)^{3})$ is obtained in \cite{azar2013minimax}. The first sample-optimal algorithm for finding an $\epsilon$-optimal value is proposed in \cite{azar2013minimax}. \cite{NIPS2018_7765} gives the first algorithm that finds an $\epsilon$-optimal policy using the optimal sample complexity $\tilde{O}(|\mS||\mA|/(1-\gamma)^{3})$ for {\it all} values of $\epsilon$. 
 For solving MDP using $K$ linearly additive features, \cite{yang2019sample} proved a lower bound of sample complexity that is $\Omega(K/((1-\gamma)^{3}\epsilon^{2}))$. It also provided an algorithm that achieves this lower bound up to log factors, however, their analysis of the algorithm relies heavily on an extra ``anchor state'' assumption. In \cite{chen2018scalable}, a primal-dual method solving MDP with linear and bilinear representation of value functions and transition models is proposed for the undiscounted MDP. In \cite{jiang2017contextual}, the sample complexity of contextual decision process is studied.

\par As for general stochastic games, the minimax Q-learning algorithm and the friend-and-foe Q-learning algorithm is introduced in \cite{littman1994markov} and \cite{littman2001friend}, respectively. The Nash Q-learning algorithm is proposed for zero-sum games in \cite{hu2003nash} and for general-sum games in \cite{littman2001value, hu1999multiagent}. Also in \cite{perolat2015approximate}, the error of approximate Q-learning is estimated. In \cite{zhang2018finite}, finite-sample analysis of multi-agent reinforcement learning is provided. To our best knowledge, there is no known algorithm that solves 2-TBSG using features with sample complexity analysis.

\par There are a large number of works analyzing linear model approximation of value and Q functions, for examples \cite{tsitsiklis1997analysis, nedic2003least, lagoudakis2003least, melo2008analysis, parr2008analysis, sutton2009convergent, lazaric2012finite, tagorti2015rate}. These work mainly focus on approximating the value function or Q function for a fixed policy. The convergence of temporal difference learning with a linear model for a given policy is proved in \cite{tsitsiklis1997analysis}. \cite{melo2008analysis} and \cite{sutton2009convergent} study the convergence of Q-learning and off-policy temporal difference learning with linear function parametrization, respectively. In \cite{parr2008analysis}, the relationship of linear transition model and linear parametrized value functions is explained. It is also pointed out by \cite{yang2019sample} that using linear model for Q function is essentially equivalent to assuming that the transition model can be embedded using these features, provided that there is zero Bellman error. 

\par The fitted value iteration for MDPs or 2TBSGs, where the value function is approximated by functions in a general function space, is analyzed in \cite{munos2008finite, antos2008fitted, antos2008learning, farahmand2010error, yang2019theoretical, perolat2016softened}. In these papers, it is shown that the error is related to the Bellman error of the function space, and depends polynomially on $1/\epsilon, 1/(1-\gamma)$ and the dimension of the function space. However, only convergence is analyzed in these paper.

\section{Preliminaries}\label{sec3}
\paragraph{Basics of 2-TBSG}
A discounted 2-TBSG (2-TBSG for short) consists of a tuple $(\mathcal{S}, \mathcal{A}, P, r, \gamma)$, where $\mathcal{S} = \mathcal{S}_{1}\cup\mathcal{S}_{2}, \mathcal{A} = \mathcal{A}_{1}\cup\mathcal{A}_{2}$ and $\mathcal{S}_{1}, \mathcal{S}_{2}, \mathcal{A}_{1}, \mathcal{A}_{2}$ are state sets and action sets for Player 1 and Player 2, respectively. $P\in\mathbb{R}^{|\mathcal{S}\times\mathcal{A}|\times|\mathcal{S}|}$ is a transition probability matrix, where $P(s'|s, a)$ denotes the probability of transitioning to state $s'$ from state $s$ if action $a$ is used. $r\in\mathbb{R}^{|\mS|\times|\mathcal{A}|}$ is the reward vector, where $r(s, a)\in [0, 1]$ denotes the immediate reward received using action $a$ at state $s$. 
\par For a given state $s\in\mathcal{S}$, we use $\mathcal{A}_{s}$ to denote the available action set for state $s$. A value function is a mapping from $\mS$ to $\mathbb{R}$. A deterministic strategy (strategy for short) $\pi = (\pi_{1}, \pi_{2})$ is defined such that $\pi_{1}, \pi_{2}$ are mappings from $\mathcal{S}_{1}$ to $\mathcal{A}_{1}$ and from $\mathcal{S}_{2}$ to $\mathcal{A}_{2}$, respectively. 
Given a strategy $\pi$, the {\it value function} of $\pi$ is defined to be the expectation of total discounted reward starting from $s$, i.e.,
\begin{equation}\label{value}
	V^{\pi}(s) = \mathbb{E}\left[\sum_{i=0}^{\infty}\gamma^{i}r(s_{i}, \pi(s_{i}))\Big|s_{0} = s\right],\quad \forall s\in\mS,
\end{equation}
where $\gamma\in[0, 1)$ is the discounted factor, and the expectation is over all trajectories starting from $s$.
\par Two players in a 2-TBSG has opposite objectives. While the first player seeks to maximize the value function \eqref{value}, the second player seeks to minimize it. In the following we present the definition of the equilibrium strategy.
\begin{definition}
	A strategy $\pi^{*} = (\pi_{1}^{*}, \pi_{2}^{*})$ is called a Nash equilibrium strategy (equilibrium strategy for short), if $V^{\pi_{1}, \pi_{2}^{*}}\le V^{\pi^{*}}\le V^{\pi_{1}^{*}, \pi_{2}}$ for any player 1's strategy $\pi_{1}$ and player 2's strategy $\pi_{2}$.
\end{definition}
The existence of the Nash equilibrium strategy is proved in \cite{shapley1953stochastic}. And all equilibrium strategies share the same value function, which we denote by $v^{*}\in\mathbb{R}^{|\mS|}$.
\par Notice that $v^{*}$ is the equilibrium value if and only if it satisfies the following Bellman equation \cite{hansen2013strategy}:
\begin{equation}
	v^{*} = \mT v^{*},
\end{equation}
where $\mT$ is an operator mapping a value function $V$ into another:
\begin{equation}
	\mT V(s) = \begin{cases} \max_{a\in\mA_{s}}[r(s, a) + \gamma P(\cdot|s, a)^{T}V], &\quad \forall s\in\mS_{1},\\
	\min_{a\in\mA_{s}}[r(s, a) + \gamma P(\cdot|s, a)^{T}V], &\quad \forall s\in\mS_{2}.\end{cases}
\end{equation}
\par We give definitions of $\epsilon$-optimal values and $\epsilon$-optimal strategies.
\begin{definition}
	We call a value function $V$ an $\epsilon$-optimal value, if $\|V - v^{*}\|_{\infty}\le \epsilon$.
\end{definition}

\begin{definition}
	We call a strategy $\pi = (\pi_{1}, \pi_{2})$ an $\epsilon$-optimal strategy, if for any $s\in\mS$,
	\begin{equation*}
		\max_{\overline{\pi}_{1}}\left[V^{\overline{\pi}_{1}, \pi_{2}}(s) - v^{*}(s)\right]\le \epsilon,\quad \min_{\overline{\pi}_{2}}\left[V^{\pi_{1}, \overline{\pi}_{2}}(s) - v^{*}(s)\right]\ge - \epsilon.
	\end{equation*}
\end{definition}
Since $\min_{\overline{\pi}_{2}}V^{\pi_{1}, \overline{\pi}_{2}}\le v^{*}\le \max_{\overline{\pi}_{1}}V^{\overline{\pi}_{1}, \pi_{2}}$, the above definition is equivalent to
$
		\|\min_{\overline{\pi}_{2}}V^{\pi_{1}, \overline{\pi}_{2}} - v^{*}\|_{\infty}\le \epsilon$ and $\|\max_{\overline{\pi}_{1}}V^{\overline{\pi}_{1}, \pi_{2}} - v^{*}\|_{\infty}\le \epsilon$.

\paragraph{Features and Probability Transition Model}\label{sec3.2}
Suppose we have $K$ feature functions $\phi = \{\phi_{1}, \cdots, \phi_{K}\}$ mapping from $\mS\times\mA$ into $\mathbb{R}$. For every state-action pair $(s, a)$, these features give a feature vector
\begin{equation*}
	\phi(s, a) = [\phi_{1}(s, a), \cdots, \phi_{K}(s, a)]^{T}\in\mathbb{R}^{K}.
\end{equation*}
Throughout this paper, we focus on 2-TBSG where the probability transition model can be nearly embedded using the features $\phi$ without losing any information.
\begin{definition}\label{def1}
We say that the transition model $P$ can be embedded into the feature space $\phi$, if there exists functions $\psi_{1},\ldots,\psi_{K}:\mS\mapsto\mathbb{R}$ such that
\vspace{-1mm}
\begin{equation*}
	P(s'|s, a) = \sum_{k\in [K]}\phi_{k}(s, a)\psi_{k}(s'),\quad \forall s'\in\mS,\ (s, a)\in\mS\times\mA.
\end{equation*}
\end{definition}
The preceding model is closely related to linear model for Q functions.
When $P$ can be fully embedded using $\phi$, the Q-functions belong to $span\{r, \phi\}$ so we can parameterize them using $K$-dimensional vectors.
Note that the feature representation is only concerned with the probability transition but has nothing to do with the reward function. It is pointed out by \cite{yang2019sample} that having a transition model which can be embedded into $\phi$ is equivalent to using linear Q-function model with zero Bellman error. In our subsequent analysis, we also provide approximation guarantee when $P$ cannot be fully embedded using $\phi$. 

It is worth noting that Definition \ref{def1} has a kernel interpretation. It is equivalent to that the left singular functions of $P$ belong to the Hilbert space with the kernel function $K((s,a),(s',a')) = \phi(s,a)^{T}\phi(s',a')$. Our model and method can be viewed as approximating and solving the 2-TBSG in a given kernel space.

\paragraph{Notations}
For two value functions $V_{1}, V_{2}$, we use $V_{1}\le V_{2}$ to denote $V_{1}(s)\le V_{2}(s), \forall s\in\mS$. We use $\prod_{[a, b]}V(s)$ to denote the projection of $V(s)$ into the interval $[a, b]$. The total variance (TV) distance between two distributions $P_{1}, P_{2}$ on the state space $\mS$ is defined as
$
	\|P_{1} - P_{2}\|_{TV} = \sum_{s\in\mS}|P_{1}(s) - P_{2}(s)|.
$
And we use $\tilde{\mathcal{O}}(\cdot)$ to hide log factors of $K, L, \epsilon, 1-\gamma$ and $\delta$.

\section{A Basic Two-Player Q-learning Algorithm}\label{sec4}
\par In this section, we develop a basic two-player Q learning algorithm for 2-TBSG. The algorithm is motivated by the two-player value iteration algorithm \cite{hansen2013strategy}. It is also motivated by the parametric Q-learning algorithm for solving MDP given by \cite{yang2019sample}.

\subsection{Algorithm and Parametrization}
\par The algorithm uses a vector $w\in\mathbb{R}^{K}$ to parametrize Q-functions, value functions and strategies as follows:
\begin{equation}\label{value1}
	\begin{aligned}
		& Q_{w}(s, a) = r(s, a) + \gamma \phi(s, a)^{T}w,\\
		& V_{w}(s) = \begin{cases}\max_{a\in\mathcal{A}} Q_{w}(s, a)\quad s\in\mathcal{S}_{1},\\\min_{a\in\mathcal{A}} Q_{w}(s, a)\quad s\in\mathcal{S}_{2},\end{cases}\quad
		 \pi_{w}(s) = \begin{cases}\arg\max_{a\in\mathcal{A}} Q_{w}(s, a)\quad s\in\mathcal{S}_{1},\\\arg\min_{a\in\mathcal{A}} Q_{w}(s, a)\quad s\in\mathcal{S}_{2}.\end{cases}
	\end{aligned}
\end{equation}
The algorithm keeps tracks of the parameter vector $w$ only. The value functions and strategies can be obtained from $w$ according to preceding equations when they are needed.

\par We present Algorithm \ref{alg1}, which is an approximate value iteration. Our algorithm picks a set $\mK$ of representative state-action pairs at first. Then at iteration $t$, it uses sampling to estimate the values $P(\cdot|s, a)^{T}V_{w^{(t-1)}}$, and carries value iteration using these estimates. The set $\mK$ can be chosen nearly arbitrarily, but it is necessary that the set is representative of the feature space. It means that the feature vectors of state-action pairs in this set cannot too be alike but need to be linearly independent. 
\begin{assumption}\label{ass1} There exist $K$ state-action pairs $(s, a)$ forming a set $\mathcal{K}$ satisfying
\begin{equation*}
	\|\phi(s, a)\|_{1}\le 1, \quad \forall s\in\mS,\ a\in\mA_{s},\qquad \exists L > 0,\quad \|\Phi_{\mathcal{K}}^{-1}\|_{\infty}\le L,
\end{equation*}
where $\Phi_{\mathcal{K}}$ is the $K\times K$ matrix formed by row features of those $(s, a)$ in $\mathcal{K}$.
\end{assumption}

{\small
\begin{algorithm}[htb!]\small
	\caption{Sampled Value Iteration for 2-TBSG}
	\label{alg1}
	\begin{algorithmic}[1]
		\State \textbf{Input: } A 2-TBSG $\mM = (\mS, \mA, P, r, \gamma)$, where $\mS = \mS_{1}\cup \mS_{2}, \mA = \mA_{1}\cup \mA_{2}$.
		\State \textbf{Input: }$\epsilon, \delta\in (0, 1)$.
		\State \textbf{Initialize: } $w^{(0)}\leftarrow 0$.
		\State \textbf{Initialize: } $R\leftarrow \Theta(1/(1-\gamma)\log(1/(1-\gamma)\epsilon)),\quad T\leftarrow \Theta\left[\left(L^{2}\cdot\log(KR/\delta)/(\epsilon^{2}(1-\gamma)^{6})\right)\right]$.
		\State Pick a set $\mK$ of state-action pairs, which satisfies Assumption \ref{ass1}. 
		\For{$t = 1:R$}
			\State $M^{(t)}\leftarrow \mathbf{0}\in \mathbb{R}^{K}$
			\For{$(s, a)\in\mK$}
				\State Sample $P(\cdot|s, a)$ for $T$ times to obtain $s_{1}^{(t)}, \cdots, s_{T}^{(t)}\in\mathcal{S}$.
				\State Let $M^{(t)}(s, a) = \frac{1}{T}\sum_{i=1}^{T}\Pi_{[0, 1/(1-\gamma)]}V_{w^{(t-1)}}(s_{i}^{(t)})$, where $V_{w}$ is defined in \eqref{value1}.
			\EndFor
			\State $w^{(t)}\leftarrow \Phi_{\mathcal{K}}^{-1}M^{(t)}$.
		\EndFor
		\State\textbf{Output:} $w^{(R)}\in \mathbb{R}^{K}$.
	\end{algorithmic}
\end{algorithm}
}

\subsection{Sample Complexity Analysis}

\par The next theorem establishes the sample complexity of Algorithm \ref{alg1}, which is independent from $|\mathcal{S}|$ and $|\mathcal{A}|$. Its proof is deferred to the appendix.
\begin{theorem}[Convergence of Algorithm \ref{alg1}]\label{thm1}
	Let Assumption \ref{ass1} holds. Suppose that the transition model $P$ of $\mM$ can be fully embedded into the $\phi$ space. Then for some $\epsilon > 0, \delta\in (0, 1)$, with probability at least $1-\delta$, the parametrized strategy $\pi_{w^{(R)}}$ according to the output of Algorithm \ref{alg1} is $\epsilon$-optimal. The number of samples used is $\tilde{\mathcal{O}}\left(\frac{KL^{2}}{(1-\gamma)^{7}\epsilon^{2}}\cdot \text{poly}\log(K\epsilon^{-1}\delta^{-1})\right)$.
\end{theorem}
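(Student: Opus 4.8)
The plan is to read Algorithm~\ref{alg1} as an inexact value iteration in the $K$-dimensional parameter space: identify the noiseless version of the update with a genuine $\gamma$-contraction (this is where the embedding hypothesis enters), control the per-iteration sampling error by concentration, propagate it through the contraction, and finally convert the value bound into a strategy bound. For the first part I would record the linear-algebra identity behind the method. Write $e_{s,a}^{T}:=\phi(s,a)^{T}\Phi_{\mathcal{K}}^{-1}$; for $(s,a)\in\mathcal{K}$ this is a coordinate vector, while for general $(s,a)$ Assumption~\ref{ass1} gives $\|e_{s,a}\|_{1}\le L$. Since $P$ can be embedded into $\phi$ (Definition~\ref{def1}), writing $P_{\mathcal{K}}$ for the rows of $P$ indexed by $\mathcal{K}$ one has $P_{\mathcal{K}}=\Phi_{\mathcal{K}}\Psi$ with $\Psi$ stacking the $\psi_{k}$, hence $P(\cdot|s,a)^{T}=e_{s,a}^{T}P_{\mathcal{K}}$ for every $(s,a)$. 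Defining the noiseless update $G(w):=\Phi_{\mathcal{K}}^{-1}\bar{M}(w)$, where $\bar{M}(w)$ has entries $\bar{M}(w)(s,a)=P(\cdot|s,a)^{T}\Pi_{[0,1/(1-\gamma)]}V_{w}$ for $(s,a)\in\mathcal{K}$, a one-line computation gives $Q_{G(w)}(s,a)=r(s,a)+\gamma\,P(\cdot|s,a)^{T}\Pi_{[0,1/(1-\gamma)]}V_{w}$ for all $(s,a)$, so $V_{G(w)}=\mathcal{T}\big(\Pi_{[0,1/(1-\gamma)]}V_{w}\big)$. Because $v^{*}\in[0,1/(1-\gamma)]^{|\mathcal{S}|}$, the clipping fixes $v^{*}$ and is nonexpansive, and $\mathcal{T}$ is a $\gamma$-contraction with fixed point $v^{*}$; therefore $\|V_{G(w)}-v^{*}\|_{\infty}\le\gamma\|V_{w}-v^{*}\|_{\infty}$.

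Next I would bound and propagate the sampling error. Conditioned on the history through iteration $t-1$, $M^{(t)}(s,a)$ is an average of $T$ i.i.d.\ samples of $\Pi_{[0,1/(1-\gamma)]}V_{w^{(t-1)}}(s')$ with $s'\sim P(\cdot|s,a)$, each in $[0,1/(1-\gamma)]$ and with conditional mean $\bar{M}(w^{(t-1)})(s,a)$; Hoeffding's inequality with a union bound over the $R$ iterations and the $K$ pairs of $\mathcal{K}$ shows that with probability at least $1-\delta$, $\|M^{(t)}-\bar{M}(w^{(t-1)})\|_{\infty}\le\xi:=\tfrac{1}{1-\gamma}\sqrt{\log(2KR/\delta)/(2T)}$ for all $t\le R$. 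Since $w^{(t)}=G(w^{(t-1)})+\Phi_{\mathcal{K}}^{-1}\big(M^{(t)}-\bar{M}(w^{(t-1)})\big)$, on this event $|Q_{w^{(t)}}(s,a)-Q_{G(w^{(t-1)})}(s,a)|=\gamma|e_{s,a}^{T}(M^{(t)}-\bar{M}(w^{(t-1)}))|\le\gamma L\xi$, so $\|V_{w^{(t)}}-V_{G(w^{(t-1)})}\|_{\infty}\le\gamma L\xi$. Combining with the contraction gives $\|V_{w^{(t)}}-v^{*}\|_{\infty}\le\gamma\|V_{w^{(t-1)}}-v^{*}\|_{\infty}+\gamma L\xi$, and unrolling from $\|V_{w^{(0)}}-v^{*}\|_{\infty}\le 1/(1-\gamma)$ yields $\|V_{w^{(R)}}-v^{*}\|_{\infty}\le\gamma^{R}/(1-\gamma)+L\xi/(1-\gamma)$; one further step bounds $\|Q_{w^{(R)}}-Q^{*}\|_{\infty}\le\gamma^{R}/(1-\gamma)+2L\xi/(1-\gamma)=:\epsilon''$. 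The choices $R=\Theta\big(\tfrac1{1-\gamma}\log\tfrac1{(1-\gamma)\epsilon}\big)$ and $T=\Theta\big(L^{2}\log(KR/\delta)/(\epsilon^{2}(1-\gamma)^{6})\big)$ from the algorithm make $\epsilon''\le\tfrac12(1-\gamma)\epsilon$.

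Finally I would convert $\|Q_{w^{(R)}}-Q^{*}\|_{\infty}\le\epsilon''$ into $\epsilon$-optimality of $\pi_{w^{(R)}}$ by the standard ``freeze one player'' reduction: fixing $\pi_{2}:=\pi_{w^{(R)}}|_{\mathcal{S}_{2}}$ turns the game into an MDP for Player~1 whose optimality operator $\mathcal{T}_{\pi_{2}}$ is a $\gamma$-contraction with fixed point $\max_{\overline{\pi}_{1}}V^{\overline{\pi}_{1},\pi_{2}}$; since $\pi_{2}$ minimizes $Q_{w^{(R)}}(s,\cdot)$ on $\mathcal{S}_{2}$ one checks $\mathcal{T}_{\pi_{2}}v^{*}\le v^{*}+2\epsilon''\mathbf{1}$, and iterating (using monotonicity and $\mathcal{T}_{\pi_{2}}(V+c\mathbf{1})=\mathcal{T}_{\pi_{2}}V+\gamma c\mathbf{1}$) gives $\max_{\overline{\pi}_{1}}V^{\overline{\pi}_{1},\pi_{2}}\le v^{*}+\tfrac{2\epsilon''}{1-\gamma}\mathbf{1}$; together with $\max_{\overline{\pi}_{1}}V^{\overline{\pi}_{1},\pi_{2}}\ge V^{\pi_{1}^{*},\pi_{2}}\ge v^{*}$ this is one half of the definition of $\epsilon$-optimal strategy, and the symmetric argument with $\pi_{1}$ frozen gives the other half; with $\epsilon''\le\tfrac12(1-\gamma)\epsilon$ this is $\epsilon$-optimality, and the total sample count is $R\cdot K\cdot T=\tilde{\mathcal{O}}\big(KL^{2}/((1-\gamma)^{7}\epsilon^{2})\big)$ as claimed. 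The routine pieces are the Hoeffding estimate and the contraction unrolling; the step I expect to need the most care is this value-to-strategy conversion in the two-player setting, where one must argue separately about $\max_{\overline{\pi}_{1}}V^{\overline{\pi}_{1},\pi_{2}}$ and $\min_{\overline{\pi}_{2}}V^{\pi_{1},\overline{\pi}_{2}}$ and where the extra $1/(1-\gamma)$ enters, upgrading the $(1-\gamma)^{-6}$ in $T$ to the $(1-\gamma)^{-7}$ of the final bound. A secondary subtlety is the filtration bookkeeping in the concentration step: $M^{(t)}$ must be shown to concentrate around the conditional mean $\bar{M}(w^{(t-1)})$, with $w^{(t-1)}$ measurable with respect to the past, rather than around a fixed deterministic target.
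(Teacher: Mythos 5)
Your proposal is correct and follows essentially the same route as the paper: the same Hoeffding-plus-union-bound concentration transferred through $\|\phi(s,a)\|_{1}\le 1$ and $\|\Phi_{\mathcal{K}}^{-1}\|_{\infty}\le L$, the same contraction unrolling giving $\gamma^{R}/(1-\gamma)+L\xi/(1-\gamma)$, and the same choice of $R$ and $T$. Your ``freeze one player'' conversion of the $Q$-error bound into strategy optimality is just a repackaging of the paper's Lemma~\ref{lem} (both yield the $2\zeta/(1-\gamma)$ loss that upgrades $(1-\gamma)^{-6}$ to $(1-\gamma)^{-7}$), so there is nothing substantive to add.
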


\section{Variance-Reduced Q-Learning for Two-Player Stochastic Games}\label{sec5}
\par In this section, we show how to accelerate the two-player Q-learning algorithm and achieve near-optimal sample efficiency. A main technique is to leverage monotonicity of the Bellman operator to guarantee that solutions improve monotonically in the algorithm, which was used in \cite{yang2019sample}.

\subsection{Nonnegative Features}
\par To preserve monotonicity in the algorithm, we assume without loss of generality the features are nonnegative:
	\begin{equation*}
	\phi_{k}(s, a)\ge 0, \quad \forall k\in [K],\quad \forall (s, a)\in\mS\times\mA
	\end{equation*}
This condition can be easily satisfied. 
If the raw features $\phi$ does not satisfy nonnegativity, we can construct new features $\phi'$ to make it satisfied. For any state-action pair $(s, a)$ we append another 1D feature $\phi_{K+1}'(s, a)$ such that $\phi_{k}'(s, a) = \phi_{k}(s, a) + \phi_{K+1}'(s, a)\ge 0$ for $k\in[K]$, and there is a subset $\mK'$ of $\mS\times\mA$ such that $|\mK'| = K+1$ and $\Phi'_{\mK'}$ is nonsingular. Then $\phi'$ satisfies nonnegativity condition and Assumption \ref{ass1} for some $L$ by normalization. More details are deferred to appendix. 

\subsection{Parametrization}
\par We use a ``max-linear" parameterization to guarantee that value functions improve monotonically in the algorithm. Instead of using a single vector $w$, we now use a finite collection of $K$-dimensional vectors $\theta = \{w^{(h)}\}_{h=1}^{Z}$, where $Z$ is an integer of order $1/(1-\gamma)$. We use the following parameterization for the Q-functions, the value functions and strategies\footnote{Here $\max_{h\in [Z]}[\arg\min_{a\in\mA_{s}}Q_{w^{(h)}}(s, a)]$ is defined to be the solution of $a$ in the max-min problem: $\max_{h\in [Z]}\min_{a\in\mA_{s}}Q_{w^{(h)}}(s, a)$. The definition of $\max_{h\in [Z]}[\arg\max_{a\in\mA_{s}}Q_{w^{(h)}}(s, a)]$ is similar.}:
\begin{equation}\label{para}
	\begin{aligned}
		& Q_{w^{(h)}}(s, a) = r(s, a) + \gamma \phi(s, a)^{T}w^{(h)},\\
		& V_{\theta}(s) = \begin{cases}\max_{h\in[Z]}\max_{a\in\mA_{s}}Q_{w^{(h)}}(s, a), \quad \forall s\in\mathcal{S}_{1},\\\max_{h\in[Z]}\min_{a\in\mA_{s}}Q_{w^{(h)}}(s, a), \quad \forall s\in\mathcal{S}_{2},\end{cases}\\
		& \pi_{\theta}(s) = \begin{cases}\max_{h\in[Z]}\left[\arg\max_{a\in\mA_{s}}\left(Q_{w^{(h)}}(s, a)\right)\right], \quad \forall s\in\mathcal{S}_{1},\\\max_{h\in[Z]}\left[\arg\min_{a\in\mA_{s}}\left(Q_{w^{(h)}}(s, a)\right)\right], \quad \forall s\in\mathcal{S}_{2}.\end{cases}
	\end{aligned}
\end{equation}
For a given $\theta$ and $s$, computing the corresponding Q-value and action requires solving a one-step optimization problem. We assume that there is an oracle that solves the problem with time complexity $M$.
\begin{remark}
	When the action space is continuous, this $M$ may become a constant which is independent to the state set and the action set.
\end{remark}

\subsection{Preserving Monotonicity}

\par A drawback of value iteration-like method is: an $\epsilon$-optimal value function cannot be used greedily to obtain an $\epsilon$-optimal strategy. In order for Algorithm \ref{alg1} to find an $\pi$ such that $V^{\pi}$ is an $\epsilon$-optimal value, it needs to find an $(1-\gamma)\epsilon$-optimal value function first, which is very inefficient. However, if a strategy $\pi$ and a value function $V$ satisfy following inequality
\vspace{-2mm}
\begin{equation}\label{monotonic}
	V\le \mT_{\pi}V,
\end{equation}
then there is a strong connection between $V^{\pi}$ and $V$ as follows (due to monotonicity of the Bellman operator $\mT$):
\vspace{-2mm}
\begin{equation*}
	V\le \mT_{\pi}V\le \mT_{\pi}^{2}V\le \cdots\le \mT_{\pi}^{\infty}V = V^{\pi}.
\end{equation*}
This relation will be used to show that if $V$ is close to optimal, the policy $\pi$ is also close to optimal.

The accelerated algorithm is given partly in Algorithm \ref{alg2}, which uses two tricks to preserve monotonicity:
	\vspace{-2mm}
\begin{itemize}
	\item We use parametrization \eqref{para} for $Q_{w}, V_{\theta}$ and $\pi_{\theta}$. This parametrization ensures that in our algorithm, the values and strategies keeps improving throughout iterations.
	\vspace{-1mm}
	\item In each iteration, we shift downwards the new parameter $w^{(i, j)}$ to $\overline{w}^{(i, j)}$ by using a confidence bound, such that
	\begin{equation*}
		\phi(s, a)^{T}\overline{w}^{(i, j)}\le P(\cdot|s, a)^{T}V^{(i, j-1)}\le P(\cdot|s, a)^{T}V^{(i, j)},
	\end{equation*}
	which uses the nonnegativity of features. The shift is used to guarantee \eqref{monotonic}.
	\vspace{-1mm}
\end{itemize}

\subsection{Approximating the Equilibrium from Two Sides}
\par Making value functions monotonically increasing is not enough to find an $\epsilon$-optimal strategy for two-player stochastic games. There are two sides of the game, and $V^{\pi}$ may be either greater or less than the Nash value. Having a lowerbound $V$ for $V^{\pi}$ does not lead to an approximate strategy. This is a major difference from one-player MDP.

In order to fix this problem, we approximate the Nash equilibrium from two sides -- both from above and below. Given player 1's strategy $\pi_{1}$ and player 2's strategy $\pi_{2}$, we introduce two Bellman operators $\mT_{\pi_{1}, \min}, \mT_{\max, \pi_{2}}$ . 
\begin{equation}
	\begin{aligned}
		\mT_{\pi_{1}, \min}V & = \begin{cases}r(s, \pi_{1}(s)) + \gamma P(\cdot|s, \pi_{1}(s))^{T}V, & \quad \text{if }s\in\mS_{1},\\
		\min_{s\in\mA_{s}}\left[r(s, a) + \gamma P(\cdot|s, a)^{T}V\right], &\quad \text{if }s\in\mS_{2},\end{cases}\\
		\mT_{\max, \pi_{2}}V & = \begin{cases}\max_{s\in\mA_{s}}\left[r(s, a) + \gamma P(\cdot|s, a)^{T}V\right], & \quad \text{if }s\in\mS_{1},\\
		r(s, \pi_{2}(s)) + \gamma P(\cdot|s, \pi_{2}(s))^{T}V, &\quad \text{if }s\in\mS_{2}.\end{cases}
	\end{aligned}
\end{equation}
Then if there exist value functions $V, W$ such that all of the following 
\begin{equation}\label{ineq4}
	V\le \mT_{\pi_{1}, \min}V,\quad \mT_{\max, \pi_{2}}W\le W
\end{equation}
hold, then by using the monotonicity of $\mT_{\pi_{1}, \min}, \mT_{\pi_{2}, \max}$ we get
\begin{equation*}
	V\le \min_{\overline{\pi}_{2}}V^{\pi_{1}, \overline{\pi}_{2}}\le v^{*}\le \max_{\overline{\pi}_{1}}V^{\overline{\pi}_{1}, \pi_{2}}\le W.
\end{equation*}
Hence if we have $\|V - v^{*}\|_{\infty}\le \epsilon$ and $\|W - v^{*}\|_{\infty}\le \epsilon$, they jointly imply 
\begin{equation}
	\begin{aligned}
		& \|\min_{\overline{\pi}_{2}}V^{\pi_{1}, \overline{\pi}_{2}} - v^{*}\|_{\infty}\le \epsilon,\\
		& \|\max_{\overline{\pi}_{1}}V^{\overline{\pi}_{1}, \pi_{2}} - v^{*}\|_{\infty}\le \epsilon,
	\end{aligned}
\end{equation}
which indicates that $(\pi_{1}, \pi_{2})$ is an $\epsilon$-optimal strategy.
\par To achieve this goal, we construct a ``flipped" instance of 2-TBSG $\mM' = (\mS, \mA, P, r', \gamma)$, where the state set and the action set for each player, the transition probability matrix and the discounted factor are identical with those of $\mM$. The reward function $r'$ is defined as
\begin{equation}
	r'(s, a) = 1 - r(s, a).
\end{equation}
And the objective of two players are switched, which means in $\mM'$ the first player aims to minimize and the second player aims to maximize. $\mM,\mM'$ share the same optimal strategy (but flipped). 
\par We use $V'$ to denote the value function of $\mM'$, and let $W(s) = \frac{1}{1-\gamma} - V'(s)$ for any $s\in\mS$, which serves as the value function approximating the equilibrium value $v^{*}$ from upper side.  This $W$, together with $V$, forms a two-sided approximation to the equilibrium value function.
\par We use Algorithm \ref{alg2} to solve $\mM$ and $\mM'$ at the same time. Next we construct a strategy $\pi$ where the first player's strategy is based on parameters from the lower approximation, and the second player's strategy is based on parameters from the upper approximation. This process is described in Algorithm \ref{alg3}, and its output is the following approximate Nash equilibrium strategy: 
\begin{equation}
	\pi(s) = \begin{cases}
		\pi_{\theta^{(R', R)}}(s), \quad\text{if }s\in\mS_{1},\\
		\pi_{\eta^{(R', R)}}'(s), \quad\text{if }s\in\mS_{2},
	\end{cases}
\end{equation}
where for $\eta = \{z^{(h)}\}_{h = 1}^{Z}$, $\pi'$ is the strategy defined as
\begin{equation*}
	\pi_{\eta}'(s) = \begin{cases}\max_{h\in[Z]}\left[\arg\min_{a\in\mA_{s}}\left(r'(s, a) + \gamma \phi(s, a)^{T}z^{(h)}\right)\right], \quad \forall s\in\mathcal{S}_{1},\\\max_{h\in[Z]}\left[\arg\max_{a\in\mA_{s}}\left(r'(s, a) + \gamma \phi(s, a)^{T}z^{(h)}\right)\right], \quad \forall s\in\mathcal{S}_{2},\end{cases}
\end{equation*}

\subsection{Variance Reduction}
\par We use inner-outer loops for variance reduction in Algorithm \ref{alg2}. Let the parameters at the $(i,j)$-th iteration be $\theta^{(i, j)}$. At the beginning of the $i$-th outer iteration, we aim to approximate $P(\cdot|s, a)^{T}V_{\theta^{(i, 0)}}$ accurately (Step 6, 7). Then in the $(i,j)$-th inner iteration, we use $P(\cdot|s, a)^{T}V_{\theta^{(i, 0)}}$ as a reference to reduce the variance of estimation. That is, we estimate the difference $P(\cdot|s, a)^{T}(V_{\theta^{(i, j-1)}} - V_{\theta^{(i, 0)}})$ using samples and then use the following equation (Step 11, 12)
\begin{equation*}
	P(\cdot|s, a)^{T}V_{\theta^{(i, j-1)}} = P(\cdot|s, a)^{T}V_{\theta^{(i, 0)}} + P(\cdot|s, a)^{T}(V_{\theta^{(i, j-1)}} - V_{\theta^{(i, 0)}})
\end{equation*}
to approximate $P(\cdot|s, a)^{T}V_{\theta^{(i, j-1)}}$. Since the infinite norm of $(V_{\theta^{(i, j-1)}} - V_{\theta^{(i, 0)}})$ is guaranteed to be smaller than the absolute value of $V_{\theta^{(i, 0)}}$, the number of samples needed for each inner iteration can be substantially reduced. 
Hence our algorithm is more sample-efficient.

\subsection{Putting Together}
\par Algorithms \ref{alg2}-\ref{alg3} puts together all the techniques that were explained. In the next section, we will prove that they output an $\epsilon$-optimal strategy with high probability.
It is easy to see that the time complexity of Algorithm \ref{alg2} is $\tilde{\mathcal{O}}(K^{\omega} + MK / ((1-\gamma)^{4}\epsilon^{2}) + K^{2} / (1-\gamma))$. The first term $K^{\omega}$ is the time calculating $\Phi_{\mK}^{-1}$. The second term is the time of sampling and calculating the value function in each iteration, and $M$ is the time of calculating $V_{\theta}(s)$ given parameter $\theta$ and state $s$, which can be viewed as solving an optimization problem over the action space. The last term is due to the calculation of $\Phi_{\mK}M^{(i, j)}$. As for the space complexity, we only need to store $\Phi_{\mK}^{-1}$ and the parameter $\theta^{(i, j)}$ at each iteration, which take $\tilde{\mathcal{O}}(K^{2} + K/(1-\gamma))$ space. 
Hence the total time and space complexities are independent from the numbers of states and actions.

{\small
\begin{algorithm}[htb!]\small
	\caption{One-side Parametric Q-Learning with Variance Reduction for 2-TBSG}
	\label{alg2}
	\begin{algorithmic}[1]
		\State \textbf{Input: } A two-TBSG $\mM = (\mS, \mA, P, r, \gamma)$ with feature map $\phi$, where $\mS = \mS_{1}\cup \mS_{2}, \mA = \mA_{1}\cup \mA_{2}$
		\State \textbf{Input: }$\epsilon, \delta\in (0, 1)$
		\State \textbf{Initialize: } 
		\begin{equation*}
			\begin{aligned}
				& R'\leftarrow\Theta(\log 1/(\epsilon(1-\gamma))),\quad R\leftarrow \Theta(R'/(1-\gamma)),\quad \theta^{(0, 0)}\leftarrow\{\mathbf{0}\}\in\mathbb{R}^{K}\\
				& m\leftarrow\Theta(L^{2}\log(R'RK\delta^{-1})/(\epsilon^{2}(1-\gamma)^{4})),\quad m_{1}\leftarrow\Theta(L^{2}\log(R'RK\delta^{-1})/((1-\gamma)^{2})),\\
				& \epsilon_{1}\leftarrow\Theta[L/(1-\gamma)\cdot\sqrt{\log(RR'K\delta^{-1})/m}],\\
				& \epsilon^{(i)}\leftarrow\epsilon_{1} + \Theta[L\cdot 2^{-i}/(1-\gamma)\sqrt{\log(RR'K\delta^{-1})/m_{1}}], \quad \forall 0\le i\le R
			\end{aligned}
		\end{equation*}
		\For{$i = 0, 1, \cdots, R'$}
			\For{$k\in [K]$}
				\State Generate state samples $x_{k}^{(1)}, \cdots, x_{k}^{(m)}\in\mathcal{S}$ from $P(\cdot|s_{k}, a_{k})$ for $(s_{k}, a_{k})\in\mK$. 
				\State Let $M^{(i, 0)}(k)\leftarrow \frac{1}{m}\sum_{l=1}^{m}V_{\theta^{(i, 0)}}(x_{k}^{(l)})$
			\EndFor
			\For{$j = 1, \cdots, R$}
			\For{$k\in [K]$}
				\State Generate state samples $x_{k}^{(1)}, \cdots, x_{k}^{(m_{1})}\in\mathcal{S}$ from $P(\cdot|s_{k}, a_{k})$ for $(s_{k}, a_{k})\in\mK$. 
				\State Let $M^{(i, j)}(k)\leftarrow\frac{1}{m_{1}}\sum_{l=1}^{m_{1}}\left(V_{\theta^{(i, j-1)}}(x_{k}^{(l)}) - V_{\theta^{(i, 0)}}(x_{k}^{(l)})\right) + M^{(i, 0)}$
			\EndFor
			\State $w^{(i, j)}\leftarrow \Phi_{\mK}^{-1}M^{(i, j)}$
			\State $\overline{w}^{(i, j)}(k)\leftarrow w^{(i, j)}(k) - \epsilon^{(i)}, \quad \forall k\in [K]$
			\State $\theta^{(i, j)}\leftarrow \theta^{(i, j-1)}\cup\{\overline{w}^{(i, j)}\}$
			\EndFor
			\State $\theta^{(i+1, 0)}\leftarrow \theta^{(i, R)}$
		\EndFor
		\State\textbf{Output: }$\theta^{(R', R)}$
	\end{algorithmic}
\end{algorithm}
}

{\small
\begin{algorithm}[htb!]\small
	\caption{Two-side Parametric Q-Learning with Variance Reduction}
	\label{alg3}
	\begin{algorithmic}[1]
		\State \textbf{Input: } A two-TBSG $\mM = (\mS, \mA, P, r, \gamma)$ with feature map $\phi$, where $\mS = \mS_{1}\cup \mS_{2}, \mA = \mA_{1}\cup \mA_{2}$
		\State \textbf{Input: } $\epsilon, \delta\in (0, 1)$
		\State Construct $\mM' = (\mS, \mA, P, 1 - r, \gamma)$ where objectives of two players are switched
		\State Solve $\theta^{(R', R)}$ using Algorithm \ref{alg2} with input $\mM$ and $\epsilon, \delta$
		\State Solve $\eta^{(R', R)}$ using Algorithm \ref{alg2} with input $\mM'$ and $\epsilon, \delta$
		\State Construct a strategy $\pi: \mS\to\mA$:
		\begin{equation}
			\pi(s) = 
			\begin{cases}
				\pi_{\theta^{(R', R)}}(s), \quad \forall s\in\mS_{1},\\
				\pi_{\eta^{(R', R)}}'(s), \quad \forall s\in\mS_{2}.
			\end{cases}
		\end{equation}
		\State \textbf{Output: }$\pi$
	\end{algorithmic}
\end{algorithm}
}

\section{Sample Complexity of Algorithms \ref{alg2}-\ref{alg3}}\label{sec6}
\par In this section, we analyze the sample complexity of our Algorithms \ref{alg2}-\ref{alg3}.
\begin{theorem}\label{thm2}
	Let Assumption \ref{ass1} hold and let features be nonnegative. Suppose that the transition model $P$ of $\mM$ can be fully embedded into $\phi$ space. Then for some $\epsilon > 0, \delta\in (0, 1)$, with probability at least $1 - 2\delta$, the output of Algorithm \ref{alg3} is an $\epsilon$-optimal strategy. The number of samples used is $N = \mathcal{O}\left(\frac{KL^{2}}{(1-\gamma)^{4}\epsilon^{2}}\cdot\text{poly}\log(K\epsilon^{-1}\delta^{-1})\right)$.
\end{theorem}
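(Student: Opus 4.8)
The plan is to prove Theorem~\ref{thm2} in three moves: analyze the one-sided subroutine Algorithm~\ref{alg2} once, instantiate it on $\mM$ and on the flipped game $\mM'$, and glue the two approximate strategies with the sandwich \eqref{ineq4}.

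\textbf{The one-sided lemma.} The core claim is that with probability at least $1-\delta$, the run of Algorithm~\ref{alg2} on a 2-TBSG whose Player~1 is the maximizer (so \eqref{para} applies), writing $\theta=\theta^{(R',R)}$, $V=V_\theta$, $\pi_1=\pi_\theta|_{\mS_1}$, produces: \emph{(a)} $V_{\theta^{(i,j)}}\le v^*$ for all $i,j$; \emph{(b)} $V_{\theta^{(i,j)}}$ nondecreasing in $(i,j)$; \emph{(c)} $V\le\mT_{\pi_1,\min}V$; and \emph{(d)} $\|V-v^*\|_\infty\le\epsilon$. I would condition throughout on the high-probability event that every empirical mean $M^{(i,0)}(k)$ (an $m$-sample average of a quantity in $[0,\frac1{1-\gamma}]$) and every $M^{(i,j)}(k)$ ($M^{(i,0)}(k)$ plus an $m_1$-sample average of the nonnegative difference $V_{\theta^{(i,j-1)}}-V_{\theta^{(i,0)}}$) lies within its prescribed Hoeffding radius; a union bound over $k\le K$, $j\le R$, $i\le R'$ costs exactly the $\log(R'RK\delta^{-1})$ baked into $m,m_1,\epsilon_1,\epsilon^{(i)}$. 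On this event, because $P$ is fully embeddable the identity $\phi(s_k,a_k)^\top w^{(i,j)}=M^{(i,j)}(k)$ on the $K$ representative pairs extends linearly to all $(s,a)$, and with $\|\Phi_\mK^{-1}\|_\infty\le L$ and $\|\phi(s,a)\|_1\le1$ one gets $|\phi(s,a)^\top w^{(i,j)}-P(\cdot|s,a)^\top V_{\theta^{(i,j-1)}}|\le\|\phi(s,a)\|_1\epsilon^{(i)}$. Since $\phi\ge0$, subtracting $\epsilon^{(i)}$ from each coordinate of $w^{(i,j)}$ removes exactly $\epsilon^{(i)}\|\phi(s,a)\|_1$, so $\phi(s,a)^\top\overline w^{(i,j)}\le P(\cdot|s,a)^\top V_{\theta^{(i,j-1)}}\le P(\cdot|s,a)^\top V_{\theta^{(i,j)}}$ — this is the one place nonnegativity is essential. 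Property (b) is immediate from $\theta^{(i,j)}\supseteq\theta^{(i,j-1)}$. For (a), induct on $(i,j)$: $V_{\theta^{(i,j-1)}}\le v^*$ gives $Q_{\overline w^{(i,j)}}(s,a)\le r(s,a)+\gamma P(\cdot|s,a)^\top v^*$, so taking $\max_a$ on $\mS_1$ / $\min_a$ on $\mS_2$ and $\max$ with $V_{\theta^{(i,j-1)}}$ stays $\le\mT v^*=v^*$ (base case uses $v^*\ge0$). For (c), at the index $h$ attaining $V(s)$ the shift gives $V(s)=Q_{\overline w^{(\cdot)}}(s,\pi_\theta(s))\le r(s,\pi_\theta(s))+\gamma P(\cdot|s,\pi_\theta(s))^\top V$ on $\mS_1$, while on $\mS_2$ the bound $Q_{\overline w^{(\cdot)}}(s,a)\le r(s,a)+\gamma P(\cdot|s,a)^\top V$ holds for every $a$ and hence passes through the $\min_a$ defining $(\mT_{\pi_1,\min}V)(s)$.

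\textbf{The main obstacle: property (d).} The heart of the argument is the geometric decay of the error across outer loops. I would prove by induction on $i$ that $\delta_i:=\|v^*-V_{\theta^{(i,0)}}\|_\infty\le C\,2^{-i}/(1-\gamma)+C'\epsilon$. Given the bound for $i$: by (a)+(b) every inner iterate satisfies $0\le v^*-V_{\theta^{(i,j)}}\le\delta_i$, so $\|V_{\theta^{(i,j-1)}}-V_{\theta^{(i,0)}}\|_\infty\le\delta_i$, which makes the fresh inner-loop error scale like $\delta_i\sqrt{\log(\cdot)/m_1}=\tilde{O}((1-\gamma)\delta_i/L)$ — the gain from variance reduction — matched by the $2^{-i}$ term of $\epsilon^{(i)}$ as long as $\delta_i=O(2^{-i}/(1-\gamma))$. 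The $R=\tilde{\Theta}(1/(1-\gamma))$ inner iterations then act as approximate value iteration for $\mT$ with a fixed per-step perturbation $\Theta(\epsilon^{(i)})$ (the downward shift plus the reference bias $\le\epsilon_1$), contracting toward a perturbed fixed point within $\Theta(\epsilon^{(i)}/(1-\gamma))$ of $v^*$; with $R$ that large the $\gamma^R\delta_i$ transient is negligible, so $\delta_{i+1}\le C\,2^{-(i+1)}/(1-\gamma)+C'\epsilon$, closing the induction. Finally $R'=\Theta(\log\frac1{\epsilon(1-\gamma)})$ forces $2^{-R'}/(1-\gamma)=\Theta(\epsilon)$, while $m=\tilde{\Theta}(L^2/(\epsilon^2(1-\gamma)^4))$ forces $\epsilon_1=\tilde{\Theta}(\epsilon(1-\gamma))$ so the $\epsilon_1/(1-\gamma)$ floor is $\Theta(\epsilon)$; hence $\|V-v^*\|_\infty\le\epsilon$. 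The delicate points here are keeping the ``$\le v^*$'' invariant so that the variance-reduction range bound on $V_{\theta^{(i,j-1)}}-V_{\theta^{(i,0)}}$ is legitimate, and cleanly separating the decaying transient from the fixed bias in the per-epoch recursion; this step is the 2-TBSG analogue of the variance-reduced value-iteration analysis of \cite{sidford2018variance,yang2019sample}, and I expect essentially all of the technical effort to live here.

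\textbf{Flipping, gluing, and the sample count.} For the second move, run Algorithm~\ref{alg2} on $\mM$ to get $\theta$ and on $\mM'=(\mS,\mA,P,1-r,\gamma)$ (players' objectives switched, so its parametrization is the primed one appearing in $\pi'_\eta$) to get $\eta$; set $\pi_1=\pi_\theta|_{\mS_1}$, $\pi_2=\pi'_\eta|_{\mS_2}$, $W=\frac1{1-\gamma}-V'_\eta$. A one-line computation shows $\frac1{1-\gamma}-v^*$ is the fixed point of $\mM'$'s Bellman operator (replacing $r$ by $1-r$ turns a $\max_a$ into a $\min_a$ up to the constant $\frac1{1-\gamma}$), so the $\mM'$ instance of the one-sided lemma gives $V'_\eta\le\frac1{1-\gamma}-v^*$, $\|V'_\eta-(\frac1{1-\gamma}-v^*)\|_\infty\le\epsilon$, and the analogue of (c) on $\mM'$'s maximizer side, $V'_\eta\le\mT'_{\min,\pi'_2}V'_\eta$; these translate to $W\ge v^*$, $\|W-v^*\|_\infty\le\epsilon$, and $\mT_{\max,\pi_2}W\le W$. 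Now (c) for $\mM$ together with this last inequality is exactly \eqref{ineq4}, so by monotonicity of $\mT_{\pi_1,\min}$ and $\mT_{\max,\pi_2}$ (iterate each to its fixed point) $V\le\min_{\overline\pi_2}V^{\pi_1,\overline\pi_2}\le v^*\le\max_{\overline\pi_1}V^{\overline\pi_1,\pi_2}\le W$; combined with $\|V-v^*\|_\infty\le\epsilon$ and $\|W-v^*\|_\infty\le\epsilon$ this yields $\|\min_{\overline\pi_2}V^{\pi_1,\overline\pi_2}-v^*\|_\infty\le\epsilon$ and $\|\max_{\overline\pi_1}V^{\overline\pi_1,\pi_2}-v^*\|_\infty\le\epsilon$, i.e.\ $\pi=(\pi_1,\pi_2)$, the output of Algorithm~\ref{alg3}, is $\epsilon$-optimal; a union bound over the two runs gives probability $\ge1-2\delta$. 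For the sample count, one run of Algorithm~\ref{alg2} draws $(R'+1)Km$ reference samples and $(R'+1)RKm_1$ inner samples; plugging in $R'+1=\tilde{\Theta}(1)$, $m=\tilde{\Theta}(L^2/(\epsilon^2(1-\gamma)^4))$, $R=\tilde{\Theta}(1/(1-\gamma))$, $m_1=\tilde{\Theta}(L^2/(1-\gamma)^2)$ gives $\tilde{O}(KL^2/(\epsilon^2(1-\gamma)^4))$ (the inner term $\tilde{O}(KL^2/(1-\gamma)^3)$ is dominated since $\epsilon^2(1-\gamma)\le1$), and the two runs inside Algorithm~\ref{alg3} only double this.
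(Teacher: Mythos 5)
Your proposal is correct and follows essentially the same route as the paper's proof: the same monotonicity-preserving invariants (lower-boundedness by the iterates, upper-boundedness by $v^{*}$, and $V\le \mT_{\pi_1,\min}V$ via the nonnegative-feature downward shift), the same geometric error-halving induction over outer epochs driven by the variance-reduced Hoeffding radii, and the same flipped-game sandwich argument to glue the two one-sided strategies. The only differences are cosmetic bookkeeping (e.g., carrying an explicit $C'\epsilon$ floor in the induction rather than the paper's condition $\epsilon\le 2^{-i}/(1-\gamma)$), so no further changes are needed.
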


We present a proof sketch here, and the complete proof is deferred to appendix.

\begin{proof}[Proof Sketch]
	\par We prove $\|V_{\theta^{(i, 0)}} - v^{*}\|_{\infty}\le 2^{-i}/(1-\gamma)$ by induction. It is easy to know that $\|V_{\theta^{(0, 0)}} - v^{*}\|_{\infty}\le 1/(1-\gamma)$. Next we assume $\|V_{\theta^{(i-1, 0)}} - v^{*}\|_{\infty}\le 2^{-i+1}/(1-\gamma)$ holds.
	\par The error between $V_{\theta^{(i, 0)}}$ and $v^{*}$ involves two types of error: the estimation error due to sampling and the convergence error of value iteration. Due to the variance reduction technique, estimation error has two parts. The first part is the estimation error of $\Phi_{\mK}^{-1}P_{\mK}V_{\theta^{(i-1, 0)}}$, which we denote as $\epsilon^{(i, 0)}$, and the second part is the error of $\Phi_{\mK}^{-1}P_{\mK}(V_{\theta^{(i-1, j)}} - V_{\theta^{(i-1, 0)}})$, which we denote as $\epsilon^{(i, j)}$ for short. 
	\par According to the Hoeffding inequality, we have $\epsilon^{(i, 0)} = \tilde{\mathcal{O}}(L/(1-\gamma)\cdot\sqrt{1/m})$ and $\epsilon^{(i, j)} = \tilde{\mathcal{O}}(L\cdot\max_{s}|V_{\theta^{(i-1, j-1)}}(s) - V_{\theta^{(i-1, 0)}}(s)|\cdot\sqrt{1/m_{1}})$ with high probability. By the induction hypothesis, we have $\max_{s}|V_{\theta^{(i-1, j-1)}}(s) - V_{\theta^{(i-1, 0)}}(s)|\le \frac{1}{2^{i-1}(1-\gamma)}$. If we choose $m = c L^{2}/((1-\gamma)^{4}\epsilon^{2})$ and $m_{1} = c_{1} L^{2}/(1-\gamma)^{2}$, we will have $\epsilon^{(i, 0)} = \mathcal{O}(\epsilon(1-\gamma))$ and $\epsilon^{(i, j)} = \mathcal{O}(2^{-i})$. 
	\par The convergence error of value iteration in the inner loop is $\gamma^{R}/(1-\gamma)$. If we choose $R = c_{R}\log(\epsilon^{-1}(1-\gamma)^{-1})/(1-\gamma)$, we will have $\gamma^{R}/(1-\gamma) = \mathcal{O}(\epsilon)$. Bringing these two types of errors together, we have with high probability that
	\begin{equation*}
		\begin{aligned}
			\|v^{*} - V_{\theta^{(i, 0)}}\|_{\infty} & \le \gamma^{R}/(1-\gamma) + \sum_{j=1}^{R}\gamma^{R-j}\cdot(\epsilon^{(i, 0)} + \epsilon^{(i, j)})\\
			& \le \mathcal{O}\left(\epsilon + (\epsilon(1-\gamma) + 2^{-i})/(1-\gamma)\right) = \mathcal{O}\left(2^{-i}/(1-\gamma)\right),
		\end{aligned}
	\end{equation*}
	where the last equality is due to $\epsilon \le 2^{-R'} / (1-\gamma)\le 2^{-i}/(1-\gamma)$. Choosing $R' = c_{R'}\log(\epsilon^{-1}(1-\gamma)^{-1})$, we have $\|v^{*} - V_{\theta^{(R', R)}}\|_{\infty} = \|v^{*} - V_{\theta^{(R+1, 0)}}\|_{\infty}\le \epsilon$. 
	Here we have omitted the dependence on any constant factors.
	\par Similarly, we can show $\|v^{*} - W_{\eta^{(R+1, 0)}}\|_{\infty}\le \epsilon$ for the ``flipped" side. Hence we have $V_{\theta^{(R', R)}}\le v^{*}\le W_{\eta^{(R', R)}}$,  therefore the combined strategy $\pi = (\pi_{1}, \pi_{2})$ given by Algorithm \ref{alg3} is an $\epsilon$-optimal strategy since
	\begin{equation}
		V_{\theta^{(R', R)}}\le \min_{\overline{\pi}_{2}}V^{\pi_{1}, \overline{\pi}_{2}}\le v^{*}\le \max_{\overline{\pi}_{1}}V^{\overline{\pi}_{1}, \pi_{2}}\le W_{\eta^{(R, R')}},
	\end{equation}
whose proof is based on the monotonicity of two operators $\mT_{\pi_{1}, \min}, \mT_{\max, \pi_{2}}$. The total number of samples used by Algorithm \ref{alg3} is $R'(R\cdot m_{1} + m)  = \tilde{\mathcal{O}}(L^{2} / ((1-\gamma)^{4}\epsilon^{2}))$.
\end{proof}
\par According to Theorem \ref{thm1} and \ref{thm2}, we have the following theorem when the transition model cannot be embedded exactly, whose proof is deferred to appendix.
\begin{theorem}[Approximation error due to model misspecification]\label{thm4}
	Let Assumption \ref{ass1} holds and let features be nonnegative. If there is an another transition model $\tilde{P}$ which can be fully embedded into $\phi$ space, and there exists $\xi\in[0, 1]$ such that $\|P(\cdot|s, a) - \tilde{P}(\cdot|s, a)\|_{TV}\le \xi$ for $\forall (s, a)\in\mS\times\mA$ and $P(\cdot|s, a) = \tilde{P}(\cdot|s, a)$ for $(s, a)\in\mK$, then with probability at least $1 - 2\delta$, the output of Algorithm \ref{alg3} is an $\left(2\xi/(1-\gamma)^{2} + 2\epsilon\right)$-optimal strategy, and with probability at least $1 - \delta$, the parametrized strategy $(\pi_{w^{(R)}}^{1}, \pi_{w^{(R)}}^{2})$ according to the output of Algorithm \ref{alg1} is $\left(2\xi/(1-\gamma)^{2} + 2\epsilon\right)$-optimal.
\end{theorem}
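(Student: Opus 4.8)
The plan is to reduce the misspecified problem to the exactly-embeddable one, exploiting that our algorithms only ever query the sampling oracle at state-action pairs in $\mK$, and then to transfer the $\epsilon$-optimality guarantee across the two transition models by a simulation-style perturbation bound.

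First I would observe that Algorithm \ref{alg1} and Algorithm \ref{alg2} (hence Algorithm \ref{alg3}) interact with the transition model \emph{only} through samples drawn from $P(\cdot|s,a)$ with $(s,a)\in\mK$; the choice of $\mK$ depends solely on $\phi$ (through Assumption \ref{ass1}), and the reward $r$ and discount $\gamma$ are untouched. Since $P(\cdot|s,a)=\tilde P(\cdot|s,a)$ for every $(s,a)\in\mK$, the law of the entire execution — in particular of the output strategy $\pi=(\pi_1,\pi_2)$ — is identical whether the underlying game is $\mM$ or $\tilde\mM:=(\mS,\mA,\tilde P,r,\gamma)$. Because $\tilde P$ is exactly embeddable into $\phi$ and Assumption \ref{ass1} still holds, Theorem \ref{thm2} (resp. Theorem \ref{thm1}) applies to $\tilde\mM$: with probability at least $1-2\delta$ (resp. $1-\delta$), $\pi$ is $\epsilon$-optimal \emph{for the game} $\tilde\mM$, i.e.\ $\max_{\overline\pi_1}[V^{\overline\pi_1,\pi_2}_{\tilde P}-v^*_{\tilde P}]\le\epsilon$ and $\min_{\overline\pi_2}[V^{\pi_1,\overline\pi_2}_{\tilde P}-v^*_{\tilde P}]\ge-\epsilon$ pointwise.

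Next I would prove the perturbation lemma needed to pass back to $\mM$. For a fixed deterministic joint strategy $\hat\pi$, writing $P_{\hat\pi},\tilde P_{\hat\pi}$ for the induced row-stochastic matrices, the identity $V^{\hat\pi}_{P}=r_{\hat\pi}+\gamma P_{\hat\pi}V^{\hat\pi}_{P}$ and its analogue for $\tilde P$ give $(I-\gamma P_{\hat\pi})(V^{\hat\pi}_{P}-V^{\hat\pi}_{\tilde P})=\gamma(P_{\hat\pi}-\tilde P_{\hat\pi})V^{\hat\pi}_{\tilde P}$, whence, using $\|(I-\gamma P_{\hat\pi})^{-1}\|_\infty\le 1/(1-\gamma)$, Hölder's inequality with the definition of $\|\cdot\|_{TV}$, and $\|V^{\hat\pi}_{\tilde P}\|_\infty\le 1/(1-\gamma)$,
\[
\|V^{\hat\pi}_{P}-V^{\hat\pi}_{\tilde P}\|_\infty\ \le\ \frac{\gamma}{1-\gamma}\cdot\xi\cdot\frac{1}{1-\gamma}\ \le\ \frac{\xi}{(1-\gamma)^2}.
\]
The same kind of argument applied to the Bellman fixed points, via $\|\mT_{P}V-\mT_{\tilde P}V\|_\infty\le\gamma\xi\|V\|_\infty$ (max/min of pointwise-close functions) and $\gamma$-contraction of $\mT_{P}$, yields $\|v^*_{P}-v^*_{\tilde P}\|_\infty\le\xi/(1-\gamma)^2$.

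Finally I would combine the two ingredients: for every state $s$ and every response $\overline\pi_1$,
\[
V^{\overline\pi_1,\pi_2}_{P}(s)-v^*_{P}(s)\ \le\ \big(V^{\overline\pi_1,\pi_2}_{\tilde P}(s)-v^*_{\tilde P}(s)\big)+\|V^{\overline\pi_1,\pi_2}_{P}-V^{\overline\pi_1,\pi_2}_{\tilde P}\|_\infty+\|v^*_{\tilde P}-v^*_{P}\|_\infty\ \le\ \epsilon+\frac{2\xi}{(1-\gamma)^2},
\]
and taking the maximum over $\overline\pi_1$ preserves the bound; symmetrically $\min_{\overline\pi_2}[V^{\pi_1,\overline\pi_2}_{P}(s)-v^*_{P}(s)]\ge-\epsilon-2\xi/(1-\gamma)^2$. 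Hence $\pi$ is $(2\xi/(1-\gamma)^2+2\epsilon)$-optimal for $\mM$ (with an $\epsilon$ to spare), and running the identical chain of inequalities on the output of Algorithm \ref{alg1} gives the second assertion. I expect the only genuinely delicate point to be the first step: rigorously arguing that the algorithms' output distribution is a measurable function of $P$ only through its restriction to $\mK$, so that the coupling between the runs on $\mM$ and on $\tilde\mM$ is exact; once that is in place, the remainder is the routine simulation-lemma perturbation calculation.
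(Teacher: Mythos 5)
Your proposal is correct, and its overall skeleton matches the paper's: (i) observe that the algorithms query the oracle only at pairs in $\mK$, where $P$ and $\tilde P$ agree, so the execution couples exactly with a run on the embeddable game and Theorems \ref{thm1}--\ref{thm2} apply there; (ii) transfer the guarantee back to $P$ via a TV-perturbation bound. Where you genuinely diverge is in step (ii). The paper's Proposition \ref{prop2} never bounds $\|v^{*}_{P}-v^{*}_{\tilde P}\|_{\infty}$ directly; instead it introduces the optimal counterstrategies $\overline\pi_{2},\overline\pi_{2}'$ against $\pi_{1}$ in the two games, uses the inequalities $V^{\pi_{1},\overline\pi_{2}}\le V^{\pi_{1},\overline\pi_{2}'}$ and $U^{\pi_{1},\overline\pi_{2}}\ge U^{\pi_{1},\overline\pi_{2}'}$ to sandwich $\|V^{\pi_{1},\overline\pi_{2}}-U^{\pi_{1},\overline\pi_{2}'}\|_{\infty}\le \xi/(1-\gamma)^{2}$, and then locates $u^{*}$ between the two perturbed counterstrategy values, which is where the $2\epsilon$ enters. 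You instead prove two uniform perturbation facts --- $\|V^{\hat\pi}_{P}-V^{\hat\pi}_{\tilde P}\|_{\infty}\le \xi/(1-\gamma)^{2}$ for every fixed $\hat\pi$, and $\|v^{*}_{P}-v^{*}_{\tilde P}\|_{\infty}\le \xi/(1-\gamma)^{2}$ via $\gamma$-contraction of the Bellman operator --- and finish with a three-term triangle inequality applied uniformly over the responses $\overline\pi_{1}$ (legitimate, since the $\epsilon$-optimality definition bounds $V^{\overline\pi_{1},\pi_{2}}_{\tilde P}-v^{*}_{\tilde P}$ for \emph{every} $\overline\pi_{1}$, and the max over responses is attained by a deterministic strategy to which your fixed-strategy lemma applies). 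Your route needs one extra lemma (the equilibrium-value shift) but avoids the counterstrategy bookkeeping, and it actually yields the slightly sharper bound $2\xi/(1-\gamma)^{2}+\epsilon$, so the stated $2\xi/(1-\gamma)^{2}+2\epsilon$ follows with room to spare. The ``delicate point'' you flag --- that the output law depends on $P$ only through its restriction to $\mK$ --- is exactly the one-sentence observation the paper makes, so no additional machinery is needed there.
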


\section{Conclusion}\label{sec7}
\par In this paper, we develop a two-player Q-learning algorithm for solving 2-TBSG in feature space. This algorithm is proved to find an $\epsilon$-optimal strategy with high probability using $\tilde{O}(K/((1-\gamma)^{4}\epsilon^{2}))$ samples. 
It is the first and sharpest sample complexity bound for solving two-player stochastic game using features and linear models, to our best knowledges. The algorithm is sample efficient as well as space and time efficient.


\bibliography{reference}

\newpage
\appendix

\section{Proof of Theorem \ref{thm1}}
\par We first present the definition of optimal counterstrategies.
\begin{definition}
	For player 1's strategy $\pi_{1}$, we call $\pi_{2}$ a player 2's optimal counterstrategy against $\pi_{1}$, if for any player 2's strategy $\overline{\pi}_{2}$, we have $V^{\pi_{1}, \pi_{2}}\le V^{\pi_{1}, \overline{\pi}_{2}}$. For player 2's strategy $\pi_{2}$, we call $\pi_{1}$ a player 1's optimal counterstrategy against $\pi_{2}$, if for any player 1's strategy $\overline{\pi}_{1}$, we have $V^{\pi_{1}, \pi_{2}}\ge V^{\overline{\pi}_{1}, \pi_{2}}$.
\end{definition}
It is known in \cite{puterman2014markov} that for any player 1's strategy $\pi_{1}$ (player 2's strategy $\pi_{2}$), the optimal counterstrategy against $\pi_{1}$ ($\pi_{2}$) always exists.
\par Our next lemma indicates that we can use the error of parametrized $Q$ functions to bounded the error of value functions of parametrized strategies.
\begin{lemma}\label{lem}
 	If 
	\begin{equation}\label{eq1}
		\|Q_{w}(s, a) - Q^{*}(s, a)\|_{\infty}\le \zeta,
	\end{equation}
	then we have
	\begin{equation}\label{ineq3}
		\|v^{\pi_{1}, \pi_{2}^{*}} -v^{*}\|_{\infty}\le \frac{2\zeta}{1-\gamma}, \quad \|v^{\pi_{1}^{*}, \pi_{2}} -v^{*}\|_{\infty}\le \frac{2\zeta}{1-\gamma},
	\end{equation}
	where $(\pi_{1}, \pi_{2}) = \pi_{w}$, and $\pi_{1}^{*}, \pi_{2}^{*}$ are optimal counterstrategies of $\pi_{2}, \pi_{1}$.
\end{lemma}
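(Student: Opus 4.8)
The plan is to combine the near-greediness of the parametrized strategy with respect to $Q^{*}$ with the contraction and monotonicity of the fixed-strategy Bellman operators, where $Q^{*}(s,a) := r(s,a) + \gamma P(\cdot|s,a)^{T}v^{*}$, so that by the Bellman equation $v^{*} = \mT v^{*}$ we have $v^{*}(s) = \max_{a\in\mA_{s}}Q^{*}(s,a)$ for $s\in\mS_{1}$ and $v^{*}(s) = \min_{a\in\mA_{s}}Q^{*}(s,a)$ for $s\in\mS_{2}$. Fix the pair $(\pi_{1},\pi_{2}^{*})$, where $(\pi_{1},\pi_{2}) = \pi_{w}$ and $\pi_{2}^{*}$ is player $2$'s optimal counterstrategy against $\pi_{1}$, and let $\mT_{\pi_{1},\pi_{2}^{*}}$ be the Bellman operator that evaluates $\pi_{1}$ on $\mS_{1}$ and $\pi_{2}^{*}$ on $\mS_{2}$. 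This operator is monotone, is a $\gamma$-contraction in $\|\cdot\|_{\infty}$, satisfies $\mT_{\pi_{1},\pi_{2}^{*}}(V - c\mathbf{1}) = \mT_{\pi_{1},\pi_{2}^{*}}V - \gamma c\mathbf{1}$ for every scalar $c$, and has $v^{\pi_{1},\pi_{2}^{*}}$ as its unique fixed point (once both strategies are fixed the game collapses to a Markov reward process).

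\textbf{Step 1: one-step slack.} I would first prove the pointwise inequality $\mT_{\pi_{1},\pi_{2}^{*}}v^{*}\ge v^{*} - 2\zeta\mathbf{1}$. On $\mS_{1}$, since $\pi_{1}(s)$ maximizes $Q_{w}(s,\cdot)$ and $\|Q_{w}-Q^{*}\|_{\infty}\le\zeta$, a two-sided sandwich gives $Q^{*}(s,\pi_{1}(s))\ge Q_{w}(s,\pi_{1}(s)) - \zeta \ge \max_{a}Q_{w}(s,a) - \zeta \ge \max_{a}Q^{*}(s,a) - 2\zeta = v^{*}(s) - 2\zeta$, while $\mT_{\pi_{1},\pi_{2}^{*}}v^{*}(s) = Q^{*}(s,\pi_{1}(s))$. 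On $\mS_{2}$, $\mT_{\pi_{1},\pi_{2}^{*}}v^{*}(s) = Q^{*}(s,\pi_{2}^{*}(s)) \ge \min_{a}Q^{*}(s,a) = v^{*}(s)$ trivially, since $\pi_{2}^{*}(s)$ is merely some action.

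\textbf{Step 2: iterate and take limits.} Applying $\mT_{\pi_{1},\pi_{2}^{*}}$ repeatedly to $\mT_{\pi_{1},\pi_{2}^{*}}v^{*}\ge v^{*} - 2\zeta\mathbf{1}$, using monotonicity and the shift identity at each step, yields $\mT_{\pi_{1},\pi_{2}^{*}}^{n}v^{*}\ge v^{*} - 2\zeta(1+\gamma+\cdots+\gamma^{n-1})\mathbf{1}$; sending $n\to\infty$ gives $v^{\pi_{1},\pi_{2}^{*}}\ge v^{*} - \tfrac{2\zeta}{1-\gamma}\mathbf{1}$. For the reverse direction, optimality of the counterstrategy gives $v^{\pi_{1},\pi_{2}^{*}}\le v^{\pi_{1},\bar\pi_{2}}$ for all $\bar\pi_{2}$, and taking $\bar\pi_{2}$ to be the player-$2$ component of an equilibrium strategy together with the Nash inequality gives $v^{\pi_{1},\pi_{2}^{*}}\le v^{*}$; hence $\|v^{\pi_{1},\pi_{2}^{*}} - v^{*}\|_{\infty}\le\tfrac{2\zeta}{1-\gamma}$. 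The second inequality in \eqref{ineq3} is the mirror image: with $\pi_{1}^{*}$ player $1$'s optimal counterstrategy against $\pi_{2}$, the same argument with reversed directions shows $\mT_{\pi_{1}^{*},\pi_{2}}v^{*}\le v^{*} + 2\zeta\mathbf{1}$ — on $\mS_{1}$ because $Q^{*}(s,\pi_{1}^{*}(s))\le\max_{a}Q^{*}(s,a) = v^{*}(s)$, on $\mS_{2}$ because $\pi_{2}(s)$ minimizes $Q_{w}(s,\cdot)$ so $Q^{*}(s,\pi_{2}(s))\le\min_{a}Q_{w}(s,a)+\zeta\le\min_{a}Q^{*}(s,a)+2\zeta$ — which iterates down to $v^{\pi_{1}^{*},\pi_{2}}\le v^{*} + \tfrac{2\zeta}{1-\gamma}\mathbf{1}$, while $v^{\pi_{1}^{*},\pi_{2}}\ge v^{*}$ again follows from the Nash inequality.

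\textbf{Main obstacle.} The argument is short, and the only place that needs care is the asymmetric bookkeeping between the two state subsets: one must apply the $Q$-error bound precisely at the states controlled by the player whose strategy is read off from $\pi_{w}$, and the trivial bound ``any action is no worse/better than the optimal one'' at the states controlled by the exact counterstrategy, and then keep all inequality directions consistent while iterating the monotone operator from $v^{*}$ and passing to the limit. A minor point worth stating explicitly is that $v^{\pi_{1},\pi_{2}^{*}}$ is genuinely the fixed point of $\mT_{\pi_{1},\pi_{2}^{*}}$ and that this operator is a monotone $\gamma$-contraction, which is what legitimizes the limiting step.
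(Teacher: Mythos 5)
Your proof is correct, and it rests on the same central estimate as the paper's: the greedy action for $Q_{w}$ loses at most $2\zeta$ in $Q^{*}$-value (via the two-sided sandwich $Q^{*}(s,\pi^{*}(s)) - Q^{*}(s,\pi_{1}(s))\le 2\|Q_{w}-Q^{*}\|_{\infty}$), and the contraction factor $\gamma$ then amplifies this to $2\zeta/(1-\gamma)$. The way you close the argument differs slightly from the paper. The paper bounds $|v^{\pi_{1},\pi_{2}^{*}}(s)-v^{*}(s)|$ pointwise on $\mS_{1}$ and $\mS_{2}$ separately, arrives at the self-bounding inequality $\|v^{\pi_{1},\pi_{2}^{*}}-v^{*}\|_{\infty}\le \gamma\|v^{\pi_{1},\pi_{2}^{*}}-v^{*}\|_{\infty}+2\zeta$, and solves it; you instead establish the one-sided slack $\mT_{\pi_{1},\pi_{2}^{*}}v^{*}\ge v^{*}-2\zeta\mathbf{1}$, iterate the monotone contraction from $v^{*}$ to accumulate a geometric series, and obtain the other direction $v^{\pi_{1},\pi_{2}^{*}}\le v^{*}$ for free from the Nash inequality. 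Your one-sided version is arguably a bit cleaner: the paper's $\mS_{2}$ case writes $|v^{\pi_{1},\pi_{2}^{*}}(s)-v^{*}(s)|$ as a difference of two minima without justifying the sign, whereas your use of monotonicity plus the Nash sandwich makes all inequality directions explicit. Both arguments deliver the same constant and the same conclusion.
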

\begin{proof}
	We only prove the first inequality of \eqref{ineq3}. The proof of the second inequality is similar. 
	\par For any $s\in\mS_{1}$,
	\begin{equation*}
		\begin{aligned}
			|v^{\pi_{1}, \pi_{2}^{*}}(s) - v^{*}(s)| & = |v^{\pi_{1}, \pi_{2}^{*}}(s) - Q^{*}(s, \pi^{*}(s))|\\
			& \le |v^{\pi_{1}, \pi_{2}^{*}}(s) - Q^{*}(s, \pi_{1}(s))| + |Q^{*}(s, \pi_{1}(s)) - Q^{*}(s, \pi^{*}(s))|\\
			& = |\gamma P(\cdot|s, \pi_{1}(s))v^{\pi_{1}, \pi_{2}^{*}} - \gamma P(\cdot|s, \pi_{1}(s))v^{*}| + |Q^{*}(s, \pi_{1}(s)) - Q^{*}(s, \pi^{*}(s))|\\
			& \le \gamma|P(\cdot|s, \pi_{1}(s))(v^{\pi_{1}, \pi_{2}^{*}} - v^{*})| + |Q^{*}(s, \pi_{1}(s)) - Q^{*}(s, \pi^{*}(s))|\\
			& \le \gamma\|v^{\pi_{1}, \pi_{2}^{*}} - v^{*}\|_{\infty} + |Q^{*}(s, \pi_{1}(s)) - Q^{*}(s, \pi^{*}(s))|.
		\end{aligned}
	\end{equation*}
	According to the definition of $\pi_{w} = (\pi_{1}, \pi_{2})$ and $Q^{*}$,
	\begin{equation*}
		\begin{aligned}
			& Q^{*}(s, \pi_{1}(s))\le Q^{*}(s, \pi^{*}(s)),\\
			& Q_{w}(s, \pi_{1}(s))\ge Q_{w}(s, \pi^{*}(s)).
		\end{aligned}
	\end{equation*}
	Combine this inequality with inequality \eqref{eq1}, we get
	\begin{equation*}
		\begin{aligned}
			0\le &\ Q^{*}(s, \pi^{*}(s)) - Q^{*}(s, \pi_{1}(s))\\
			\le &\ |Q^{*}(s, \pi^{*}(s)) - Q_{w}(s, \pi^{*}(s))| + Q_{w}(s, \pi^{*}(s)) - Q_{w}(s, \pi_{1}(s))\\
			&\ + |Q_{w}(s, \pi_{1}(s)) - Q^{*}(s, \pi_{1}(s))|\\
			\le &\ 2\|Q_{w} - Q^{*}\|_{\infty}\le 2\zeta,
		\end{aligned}
	\end{equation*}
	which indicates that
	\begin{equation*}
		|v^{\pi_{1}, \pi_{2}^{*}}(s) - v^{*}(s)|\le \gamma\|v^{\pi_{1}, \pi_{2}^{*}} - v^{*}\|_{\infty} + 2\zeta, \quad \forall s\in\mS_{1}.
	\end{equation*}
	Furthermore, for $s\in\mS_{2}$, we have
	\begin{equation*}
		\begin{aligned}
			|v^{\pi_{1}, \pi_{2}^{*}}(s) - v^{*}(s)| & = \min_{a\in\mA_{s}}\left[r(s, a) + \gamma P(\cdot|s, a)^{T}V^{\pi_{1}, \pi_{2}^{*}}\right] - \min_{a\in\mA_{s}}\left[r(s, a) + \gamma P(\cdot|s, a)^{T}V^{*}\right]\\
			& \le \max_{a\in\mA_{s}}\left[\gamma P(\cdot|s, a)^{T}V^{\pi_{1}, \pi_{2}^{*}} - \gamma P(\cdot|s, a)^{T}V^{*}\right]\\
			& \le \gamma \|v^{\pi_{1}, \pi_{2}^{*}} - v^{*}\|_{\infty}.
		\end{aligned}
	\end{equation*}
	\par Therefore, we have
	\begin{equation*}
		\|v^{\pi_{1}, \pi_{2}^{*}} - v^{*}\|_{\infty}\le\gamma\|v^{\pi_{1}, \pi_{2}^{*}} - v^{*}\|_{\infty} + 2\zeta,
	\end{equation*}
	which indicates that
	\begin{equation*}
		\|v^{\pi_{1}, \pi_{2}^{*}} - v^{*}\|_{\infty}\le \frac{2\zeta}{1-\gamma}.
	\end{equation*}
	The first inequality of \eqref{ineq3} is verified.
\end{proof}

\begin{proof}[Proof of Theorem \ref{thm1}]
\par We define
\begin{equation*}
	\begin{aligned}
	\mathcal{T}V(s) = &\begin{cases}
		\max_{a\in\mathcal{A}_{s}}r(s, a) + \gamma P_{s, a}^{T}V, \quad \forall s\in\mathcal{S}_{1},\\
		\min_{a\in\mathcal{A}_{s}}r(s, a) + \gamma P_{s, a}^{T}V, \quad \forall s\in\mathcal{S}_{2},\\
	\end{cases}\\
	\hat{\mathcal{T}}^{(t)}V(s) = &\begin{cases}
		\max_{a\in\mathcal{A}_{s}}r(s, a) + \gamma\phi(s, a)^{T}\Phi_{\mathcal{K}}^{-1}\hat{P}_{\mathcal{K}}^{(t)}V, \quad \forall s\in\mathcal{S}_{1},\\
		\min_{a\in\mathcal{A}_{s}}r(s, a) + \gamma\phi(s, a)^{T}\Phi_{\mathcal{K}}^{-1}\hat{P}_{\mathcal{K}}^{(t)}V, \quad \forall s\in\mathcal{S}_{2},\\
	\end{cases}\\
	\end{aligned}
\end{equation*}
where $\hat{P}^{(t)}\in\mathbb{R}^{K\times|\mS|}$ ($1\le t\le R$) is the approximate transition probability matrix obtained by sampling at $t$-th iterations:
\begin{equation*}
	\hat{P}_{(s, a), s'}^{(t)} = \frac{1}{T}\sum_{i=1}^{T}1_{[s_{i}^{(t)} = s']}, \qquad \forall (s, a)\in\mathcal{K},\quad s_{1}^{(t)}, \cdots, s_{T}^{(t)}\sim P(\cdot|s, a).
\end{equation*}
Then our algorithm can be written as
\begin{equation*}
	V_{w^{(t)}}\leftarrow \hat{\mathcal{T}}^{(t)}\hat{V}_{w^{(t-1)}},\quad\forall 1\le l\le R,
\end{equation*}
where $\hat{V}_{w^{(t-1)}} = \Pi_{[0, 1/(1-\gamma)]}V_{w^{(t-1)}}$.

\par We define the following event as $\mE_{t}$:
\begin{equation*}
	\|(\hat{P}_{\mathcal{K}}^{(t)} - P_{\mathcal{K}})\hat{V}_{w^{t-1}}\|_{\infty}\le \epsilon_{1} := c\cdot \frac{1}{1-\gamma}\sqrt{\frac{\log(KR / \delta)}{T}}.
\end{equation*}
According to Hoeffding inequality for both state-action pairs in $\mK$ and applying their union bound, the event $\mE_{t}$ holds with probability at least $1 - \delta / R$. Also $\mE_{t}$ indicates that for any $s\in\mS, a\in\mA_{s}$, we have
\begin{equation}\label{eq2}
	\left|\gamma\phi(s, a)^{T}\Phi_{\mK}^{-1}\hat{P}_{\mK}^{(t)}\hat{V}_{w^{(t-1)}} - \gamma P_{s, a}^{T}\hat{V}_{w^{(t-1)}}\right|\le \ \left|\phi(s, a)^{T}\Phi_{\mK}^{-1}(\hat{P}_{\mK}^{(t)} - P_{\mK})\hat{V}_{w^{(t-1)}}\right|\le L\epsilon_{1},
\end{equation}
where we let $P_{s, a} = P(\cdot|s, a)$ and the last inequality comes from the assumption $\|\phi(s, a)\|_{1}\le 1$ and $\|\Phi_{\mK}^{-1}\|_{\infty}\le L$.
Noting that for any $s\in\mS$,
\begin{equation*}
	|[\hat{\mT}^{(t)}\hat{V}_{w^{(t-1)}}](s) - [\mT\hat{V}_{w^{(t-1)}}](s)| \le \max_{a\in\mA_{s}}\left|\gamma\phi(s, a)^{T}\Phi_{\mK}^{-1}\hat{P}_{\mK}^{(t)}\hat{V}_{w^{(t-1)}} - \gamma P_{s, a}^{T}\hat{V}_{w^{(t-1)}}\right|,
\end{equation*}
and $\mathcal{T}v^{*} = v^{*}$, we have
\begin{equation*}
	\begin{aligned}
		\|V_{w^{(t)}} - v^{*}\|_{\infty} & \le \|\hat{\mT}^{(t)}\hat{V}_{w^{(t-1)}} - \mT \hat{V}_{w^{(t-1)}}\|_{\infty} + \|\mT \hat{V}_{w^{(t-1)}} - \mT v^{*}\|_{\infty}\\
		& \le L\epsilon_{1} + \|\mT \hat{V}_{w^{(t-1)}} - \mT v^{*}\|_{\infty}\\
		& \le L\epsilon_{1} + \gamma\|\hat{V}_{w^{(t-1)}} - v^{*}\|_{\infty},
	\end{aligned}
\end{equation*}
where in the last inequality we use the contraction property of $\mT$.
\par Therefore, when $\mE^{(1)}, \cdots, \mE^{(R)}$ all hold, we get
\begin{equation*}
	\begin{aligned}
		\|V_{w^{(R)}} - V^{*}\|_{\infty} & \le L\epsilon_{1} + \gamma\|\hat{V}_{w^{(R-1)}} - v^{*}\|_{\infty}\\
		& \le L\epsilon_{1} + \gamma\|V_{w^{(R-1)}} - v^{*}\|_{\infty}\\
		& \le \gamma\cdot L\epsilon_{1} + L\epsilon_{1} + \gamma^{2}\|\hat{V}_{w^{(R-2)}} - v^{*}\|_{\infty}\\
		& \le \cdots\\
		& \le \frac{\gamma}{1-\gamma}\cdot L\epsilon_{1} + \frac{\gamma^{R}}{1-\gamma},\\
	\end{aligned}
\end{equation*}
where we use the fact $\|V_{w^{(0)}} - V^{*}\|_{\infty}\le 1 / (1-\gamma)$ in the last inequality. Furthermore, according to $Q_{w^{(R)}}(s, a) = r(s, a) + \gamma\phi(s, a)^{T}\Phi_{\mK}^{-1}\hat{P}_{\mK}^{(R)}\hat{V}_{w^{(R-1)}}$, we have
\begin{equation*}
	\|Q_{w^{(R)}} - Q^{*}\|_{\infty}\le L\epsilon_{1} + \gamma\|\hat{V}_{w^{(R-1)}} - v^{*}\|_{\infty}\le \frac{\gamma\cdot L\epsilon_{1}}{1-\gamma} + \frac{\gamma^{R}}{1-\gamma},
\end{equation*}
when $\mE^{(R)}$ holds, similar to the derivation in \eqref{eq2}. Finally if we suppose $\overline{\pi}_{w^{(R)}}^{1}, \overline{\pi}_{w^{(R)}}^{2}$ are optimal counterstrategies against $\pi_{w^{(R)}}^{2}, \pi_{w^{(R)}}^{1}$, using Lemma \ref{lem} we get
\begin{equation*}
	\begin{aligned}
		& \left\|V^{{\pi_{w^{(R)}}^{1}}, \overline{\pi}_{w^{(R)}}^{2}} - V^{*}\right\|_{\infty}\le \frac{2\gamma\cdot L\epsilon_{1}}{(1-\gamma)^{2}} + \frac{2\gamma^{R}}{(1-\gamma)^{2}}\\
		& \left\|V^{{\overline{\pi}_{w^{(R)}}^{1}}, \pi_{w^{(R)}}^{2}} - V^{*}\right\|_{\infty}\le \frac{2\gamma\cdot L\epsilon_{1}}{(1-\gamma)^{2}} + \frac{2\gamma^{R}}{(1-\gamma)^{2}}
	\end{aligned}
\end{equation*}
if events $\mE_{1}, \cdots, \mE_{R}$ all hold. Since every $\mE_{t}$ holds with probability at least $1 - \delta / R$, the probability when all these events hold is at least $1 - \delta$.
\par Hence if we choose 
\begin{equation*}
	R = \frac{1}{1-\gamma}\log\frac{1}{\epsilon(1-\gamma)}, \quad \epsilon_{1} = \mathcal{O}\left(\frac{\epsilon(1-\gamma)^{2}}{L}\right), \quad T = \mathcal{O}\left(\frac{\log(KR/\delta)}{\epsilon_{1}^{2}(1-\gamma)^{2}}\right) = \mathcal{O}\left(\frac{L^{2}\cdot \log(KR/\delta)}{\epsilon^{2}(1-\gamma)^{6}}\right),
\end{equation*}
we can obtain a strategy $\pi^{(R)}$ such that
\begin{equation*}
	\left\|V^{{\pi_{w^{(R)}}^{1}}, \overline{\pi}_{w^{(R)}}^{2}} - V^{*}\right\|_{\infty}\le \epsilon, \quad \left\|V^{{\overline{\pi}_{w^{(R)}}^{1}}, \pi_{w^{(R)}}^{2}} - V^{*}\right\|_{\infty}\le \epsilon
\end{equation*}
with probability at least $1 - \delta$. And the number of samples required is
\begin{equation*}
	T\cdot R\cdot k = \tilde{\mathcal{O}}\left(\frac{kL^{2}}{\epsilon^{2}(1-\gamma)^{7}}\right).
\end{equation*}
\end{proof}

\section{Construction of Non-negative Feature}
\par In this section, we discuss how to construct nonnegative features such that the Assumption \ref{ass1} holds.
\begin{theorem}\label{thm3}
	Suppose the transition model $P$ can be embedded into $\phi$. If there exists a set $\mK$ of state-action pair such that $|\mK| = K$ and $\Phi_{\mK}$ is nonsingular, then we can construct dimension-$(k+1)$ nonnegative features $\phi'(s, a)$ for every state action pairs $(s, a)$ such that $P$ can also be embedded into $\phi'$, and Assumption \ref{ass1} holds for a state-action pair set $\mK'$ and some constant $L$.
\end{theorem}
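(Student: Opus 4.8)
The plan is to write down the extra feature explicitly, and the first move is a normalization. Replacing $\phi$ by $\Phi_{\mK}^{-T}\phi$ and correspondingly $\psi$ by $\Phi_{\mK}\psi$ preserves the embedding of Definition~\ref{def1}, and after this change $\Phi_{\mK}=I$, so $\phi(s_k,a_k)=e_k$ for the $k$-th pair $(s_k,a_k)$ of $\mK$ and $\psi_k=P(\cdot\,|\,s_k,a_k)$ is a probability distribution. Summing the embedding identity over $s'$ then gives $\sum_{k\in[K]}\phi_k(s,a)=1$ for every $(s,a)$. I would also note here that all constants below are finite only because the raw features are bounded; this we may assume after rescaling the raw features by a common constant (which changes neither embeddability nor nonsingularity of $\Phi_{\mK}$), so that $B:=\sup_{(s,a),k}|\phi_k(s,a)|\in(0,\infty)$.

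Next, in the only nontrivial case $|\mS\times\mA|>K$, pick any pair $(s_0,a_0)\notin\mK$ and set $\mK'=\mK\cup\{(s_0,a_0)\}$. Define the $(K+1)$-st feature to be the \emph{non-constant} shift
\begin{equation*}
	\phi'_{K+1}(s,a)=\begin{cases}0,&(s,a)\in\mK,\\ B,&(s,a)\notin\mK,\end{cases}
	\qquad \phi'_k(s,a)=\phi_k(s,a)+\phi'_{K+1}(s,a)\ \ (k\in[K]).
\end{equation*}
All $K+1$ coordinates are nonnegative, since $B\ge|\phi_k(s,a)|$ and $\phi(s_k,a_k)=e_k\ge 0$. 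To see $P$ still embeds, I push the shift into the mixing functions: set $\psi'_k=\psi_k$ for $k\le K$ and $\psi'_{K+1}=-\sum_{k\in[K]}\psi_k$, and then $\phi'_k=\phi_k+\phi'_{K+1}$ gives directly
\begin{equation*}
	\sum_{k=1}^{K+1}\phi'_k(s,a)\psi'_k(s')=\sum_{k=1}^{K}\phi_k(s,a)\psi_k(s')=P(s'|s,a);
\end{equation*}
note that $\psi'_{K+1}$ may be signed, which Definition~\ref{def1} permits.

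It remains to verify Assumption~\ref{ass1} for $\phi'$ and $\mK'$. By construction the rows of $\Phi'_{\mK'}$ coming from $\mK$ are the standard basis vectors $e_1,\dots,e_K$ of $\mathbb{R}^{K+1}$ (the shift vanishes there), and the remaining row is $((\phi(s_0,a_0)+B\mathbf{1})^{T},\,B)$, so $\Phi'_{\mK'}$ is block lower-triangular with determinant $B\neq 0$; a direct computation gives $\|(\Phi'_{\mK'})^{-1}\|_\infty=K+2/B<\infty$. Moreover $\|\phi'(s,a)\|_1=1+(K+1)\phi'_{K+1}(s,a)\le 1+(K+1)B=:c_0$, so dividing every new feature by $c_0$ (and multiplying the $\psi'_k$ by $c_0$, again preserving the embedding) yields $\|\phi'(s,a)\|_1\le 1$ while inflating $\|(\Phi'_{\mK'})^{-1}\|_\infty$ only by the factor $c_0$. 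Thus Assumption~\ref{ass1} holds with $L=c_0(K+2/B)$.

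The construction itself is elementary; the one point that needs care, and the part I expect to be the real obstacle, is the choice of shift. A \emph{constant} shift $\phi'_{K+1}\equiv B$ would leave every $\phi'(s,a)$ inside the fixed $(K-1)$-dimensional affine set $\{x:\sum_{k\le K}x_k=1+KB,\ x_{K+1}=B\}$, so no choice of $\mK'$ could make $\Phi'_{\mK'}$ nonsingular; letting the shift vanish exactly on $\mK$ simultaneously breaks this degeneracy and makes $\Phi'_{\mK'}$ transparently invertible. The rest --- finiteness of $B$ and $c_0$, the signed $\psi'_{K+1}$, and the genuinely degenerate case $|\mS\times\mA|=K$ (which one simply excludes, the game being finite-dimensional then) --- is routine bookkeeping.
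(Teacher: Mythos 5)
Your construction is correct, and it takes a genuinely different route from the paper's. The paper keeps the original features, appends a shift $\phi'_{K+1}(s,a)\ge N(s,a)$ large enough for nonnegativity, and then restores nonsingularity of $\Phi'_{\mK'}$ by a genericity argument: since $\det$ is an affine function of the shift added to any single row (their Lemma~\ref{lem4}), one can iteratively enlarge the shift on each row of $\mK$ while keeping the determinant nonzero. That argument only asserts that suitable shift values exist. You instead change basis first so that $\Phi_{\mK}=I$, which buys you two things the paper never exploits: the mixing functions become the probability distributions $P(\cdot|s_k,a_k)$, hence $\sum_k\phi_k(s,a)\equiv 1$; and the rows of $\mK$ become standard basis vectors, so the two-valued shift (zero on $\mK$, $B$ off $\mK$) makes $\Phi'_{\mK'}$ explicitly block-triangular with an explicit inverse and an explicit $L=(1+(K+1)B)(K+2/B)$. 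Your side remark that a \emph{constant} shift cannot work is also right, and in fact holds even before your normalization (summing the embedding identity over $s'$ shows all feature vectors lie on an affine hyperplane $\{x:c^{T}x=1\}$, so $K+1$ shifted vectors are always linearly dependent); this is precisely the degeneracy the paper's Lemma~\ref{lem4} is there to escape. The two caveats you flag --- uniform boundedness of the features and the existence of a pair outside $\mK$ --- are shared by the paper's proof (its final global normalization needs $\sup_{s,a}N(s,a)<\infty$, and it too ``chooses arbitrarily a state-action pair $(s',a')$ not in $\mK$''), and both are automatic in the finite setting, so neither is a gap relative to the paper. One cosmetic point: your claim that the embedding survives the basis change deserves the one-line verification $\phi^{T}\psi=(\Phi_{\mK}^{-T}\phi)^{T}(\Phi_{\mK}\psi)$, but that is immediate.
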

\par We first present a lemma which is useful in proving Theorem \ref{thm3}.
\begin{lemma}\label{lem4}
	If $A$ is a non-singular matrix and $k\in[K]$, and $A'(l)$ is the following matrix
	\begin{equation*}
		A'(l)_{i, j} = \begin{cases}
			A_{i, j} + l, \quad & \text{if }i = k;\\
			A_{i, j}, \quad & \text{if }i\neq k.
		\end{cases}
	\end{equation*}
	Then for any $N > 0$, there exists $l > N$ such that $A'(l)$ is non-singular.
\end{lemma}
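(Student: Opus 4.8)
\textbf{Proof proposal for Lemma \ref{lem4}.}
The plan is to view $\det A'(l)$ as a function of the real variable $l$ and argue it is a nonconstant affine (degree at most one) polynomial, so it has at most one root; hence for any $N$ there is some $l > N$ avoiding that root. First I would expand $\det A'(l)$ along the $k$-th row. Since only the $k$-th row of $A'(l)$ depends on $l$, and each entry of that row is $A_{k,j} + l$, linearity of the determinant in the $k$-th row gives
\begin{equation*}
	\det A'(l) = \det A + l \cdot \det B,
\end{equation*}
where $B$ is the matrix obtained from $A$ by replacing its $k$-th row with the all-ones row $(1,1,\ldots,1)$. Thus $\det A'(l)$ is affine in $l$ with leading coefficient $\det B$.

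Next I would split into two cases. If $\det B \neq 0$, then $\det A'(l) = \det A + l\det B$ is a genuine degree-one polynomial in $l$, with a unique root $l_0 = -\det A/\det B$; choosing any $l > \max\{N, l_0\}$ makes $A'(l)$ nonsingular. If $\det B = 0$, then $\det A'(l) = \det A$ for all $l$, and since $A$ is nonsingular by hypothesis, $\det A \neq 0$, so $A'(l)$ is nonsingular for \emph{every} $l$, in particular for any $l > N$. In both cases we have produced the desired $l > N$, which completes the argument.

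I do not expect a serious obstacle here; the only point requiring a little care is the row-linearity step, namely justifying $\det A'(l) = \det A + l\det B$ via multilinearity of the determinant in the rows (writing the $k$-th row as the sum of the original $k$-th row of $A$ and $l$ times the all-ones vector, and using that the determinant is linear in that single row while the other rows are held fixed). Everything after that is the elementary fact that a nonzero affine polynomial has at most one root. One could equivalently phrase the whole thing as: $l \mapsto \det A'(l)$ is a nonzero polynomial of degree $\le 1$ (it is nonzero at $l=0$ since $A'(0)=A$ is nonsingular), hence vanishes on a set of size at most one, so its complement contains arbitrarily large $l$.
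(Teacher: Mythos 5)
Your proof is correct and follows exactly the same route as the paper, which simply notes that $\det A'(l)$ is a (degree at most one) polynomial in $l$ and concludes from there. Your version fills in the details the paper omits — the explicit decomposition $\det A'(l) = \det A + l\det B$ via row-multilinearity and the observation that the polynomial is nonzero at $l=0$ — but the underlying idea is identical.
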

\begin{proof}
	This lemma directly follows from the fact that $\det(A'(l))$ is a linear function of $l$.
\end{proof}
\begin{proof}[Proof of Theorem \ref{thm3}]
	We construct a feature map $\mathcal{L}$ from $\phi'(s, a)$ to $\phi(s, a)$ in the following way.
	\begin{equation}\label{eq4}
		\begin{aligned}
			&\ \mathcal{L}\left([\phi_{1}'(s, a), \phi_{2}'(s, a), \cdots, \phi_{K}'(s, a), \phi_{K+1}'(s, a)]\right)\\
			= &\ [\phi_{1}'(s, a) - \phi_{K+1}'(s, a), \phi_{2}'(s, a) - \phi_{K+1}'(s, a), \cdots, \phi_{K}'(s, a) - \phi_{K+1}'(s, a)]\\
			= &\ [\phi_{1}(s, a), \phi_{2}(s, a), \cdots, \phi_{K}(s, a)],
		\end{aligned}
	\end{equation}
	which means $\phi_{k}(s, a) = \phi_{k}'(s, a) - \phi_{K+1}'(s, a)$ for any $k\in[K]$. Adopting this feature map, $P$ can be embedded into $\phi'$. Hence we only need to construct nonnegative features $\phi'$ satisfying both \eqref{eq4} and Assumption \ref{ass1}.
	\par For any $(s, a)$ there exists $N(s, a)$ such that for any $\phi'_{K+1}(s, a)\ge N(s, a)$, we have $\phi_{k}(s, a) + \phi'_{K+1}(s, a)\ge 0$ for any $k\in[K]$. We choose arbitrarily a state-action pair $(s', a')$ not in $\mK$, and let $\mK'$ to be the union of $\mK$ and $\{(s', a')\}$. If we choose $\phi'_{K+1}(s', a') > \max\{0, N(s', a')\}$ and $\phi'_{k}(s', a') = \phi_{k}(s', a') + \phi'_{K+1}(s', a')$, we will have
	\begin{equation*}
		\det\begin{bmatrix}\phi_{1}(s_{1}, a_{1}) & \cdots & \phi_{K}(s_{1}, a_{1}) & 0\\\phi_{1}(s_{2}, a_{2}) & \cdots & \phi_{K}(s_{2}, a_{2}) & 0\\\vdots & \ddots & \vdots & \vdots\\\phi_{1}(s_{K}, a_{K}) & \cdots & \phi_{K}(s_{K}, a_{K}) & 0\\ \phi_{1}'(s', a') & \cdots & \phi_{K}'(s', a') & \phi_{K+1}'(s', a')\end{bmatrix}\neq 0
	\end{equation*}
	since $\Phi_{\mK}$ is nonsingular, where $\mK = \{(s_{1}, a_{1}), \cdots, (s_{K}, a_{K})\}$. Next for $k\in [K]$, we iteratively choose $\phi'_{K+1}(s_{k}, a_{k})\ge N(s_{k}, a_{k})$ and add it to the $k$-th row of the feature matrix $\Phi_{\mK'}'$ such that the matrix is still nonsingular. (According to Lemma \ref{lem4}, such $\phi'_{K+1}(s_{k}, a_{k})$ exists.) After these $K$ operations, we have
	\begin{equation*}
		\det \Phi'_{\mK'} = \det\begin{bmatrix}\phi_{1}'(s_{1}, a_{1}) & \cdots & \phi_{K}'(s_{1}, a_{1}) & \phi_{K+1}'(s_{1}, a_{1})\\\phi_{1}'(s_{2}, a_{2}) & \cdots & \phi_{K}'(s_{2}, a_{2}) & \phi_{K+1}'(s_{2}, a_{2})\\\vdots & \ddots & \vdots & \vdots\\\phi_{1}'(s_{K}, a_{K}) & \cdots & \phi_{K}'(s_{K}, a_{K}) & \phi_{K+1}'(s_{K}, a_{K})\\ \phi_{1}'(s', a') & \cdots & \phi_{K}'(s', a') & \phi_{K+1}'(s', a')\end{bmatrix}\neq 0,
	\end{equation*}
	where $\phi_{k}'(s_{k}, a_{k}) = \phi_{k}(s_{k}, a_{k}) + \phi_{K+1}'(s_{k}, a_{k})$. This indicates that $\Phi'_{\mK'}$ is nonsingular. Next for $(s, a)$ not in $\mK'$, we choose $\phi_{K+1}'(s, a) = N(s, a)$ and let $\phi'_{k}(s, a) = \phi_{k}(s, a) + \phi_{K+1}'(s, a)$. 
	\par Then we have $\phi'(s, a)\ge 0$ for any $(s, a)\in\mS\times\mA$ and $\Phi_{\mK'}'$ is nonsingular. Finally we normalize all features such that $\|\phi'(s, a)\|_{1}\le 1$ while keeping $\phi'(s, a)\ge 0$ for any $(s, a)\in\mS\times\mA$. And Assumption \ref{ass1} holds for $L = \|(\Phi_{\mK}')^{-1}\|_{\infty}$.  
\end{proof}

\section{Proof of Theorem \ref{thm2}}
\par In this section, we present the formal proof of Theorem \ref{thm2}. In the following, we assume that all features $\phi(s, a)$ are nonnegative, and Assumption \ref{ass1} holds for all the time.

\subsection{Notations}
\par We define the following $\mT$-operators and $Q$ functions:
\begin{equation*}
	\begin{aligned}
		& [\mT V](s) = \begin{cases} \max_{a\in\mA} r(s, a) + \gamma P(\cdot|s, a)^{T}V, \quad\forall s\in\mS_{1},\\\min_{a\in\mA} r(s, a) + \gamma P(\cdot|s, a)^{T}V, \quad\forall s\in\mS_{2},\end{cases}\\
		& [\mT_{\pi} V](s) = r(s, \pi(s)) + \gamma P(\cdot|s, \pi(s))^{T}V,
	\end{aligned}
\end{equation*}
\begin{equation*}
	\begin{aligned}
		& Q_{\theta^{(i, j)}}(s, a) = r(s, a) + \gamma\phi(s, a)^{T}\overline{w}^{(i, j)},\\
		& \overline{Q}_{\theta^{(i, j)}}(s, a) = r(s, a) + \gamma P(\cdot|s, a)^{T}V_{\theta^{(i, j-1)}}.
	\end{aligned}
\end{equation*}
\par For these $\mT$ and $\mT_{\pi}$ operators, we have the following monotonicity and contraction property:
\begin{proposition}
	For any value function $V, V'$, if $V\le V'$, we have
	\begin{equation*}
		\begin{aligned}
			\mT V\le \mT V', &\quad \mT_{\pi} V\le \mT_{\pi} V'\\
			\|\mT V - \mT V'\|_{\infty}\le \gamma\|V - V'\|_{\infty}, &\quad\|\mT_{\pi} V - \mT_{\pi} V'\|_{\infty}\le \gamma\|V - V'\|_{\infty}
		\end{aligned}
	\end{equation*}
	for any strategy $\pi$. Furthermore, $v^{*}$ and $V^{\pi}$ are unique fixed points of $\mT$ and $\mT_{\pi}$, respectively, and
	\begin{equation*}
		\lim_{t\to\infty}\mT^{t}V = v^{*}, \quad \lim_{t\to\infty}\mT^{t}_{\pi}V = v^{\pi}.
	\end{equation*}
\end{proposition}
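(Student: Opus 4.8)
The plan is to verify each claim by elementary manipulations, exploiting two structural facts: every transition row $P(\cdot|s,a)$ is a probability vector (nonnegative entries summing to one), and on each state the operator $\mathcal{T}$ performs a single $\max$ or a single $\min$ over actions — never a simultaneous minimax — because the game is turn-based.

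First I would establish monotonicity. For $\mathcal{T}_{\pi}$: if $V\le V'$, then $P(\cdot|s,\pi(s))^{T}V\le P(\cdot|s,\pi(s))^{T}V'$ for every $s$ since the weights are nonnegative, and adding $r(s,\pi(s))$ and multiplying by $\gamma\ge 0$ preserves the inequality, so $\mathcal{T}_{\pi}V\le\mathcal{T}_{\pi}V'$. For $\mathcal{T}$: the same pointwise inequality $r(s,a)+\gamma P(\cdot|s,a)^{T}V\le r(s,a)+\gamma P(\cdot|s,a)^{T}V'$ holds for each action $a$, and taking $\max_{a\in\mathcal{A}_{s}}$ (for $s\in\mathcal{S}_{1}$) or $\min_{a\in\mathcal{A}_{s}}$ (for $s\in\mathcal{S}_{2}$) of both sides keeps it, giving $\mathcal{T}V\le\mathcal{T}V'$.

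Next, contraction. For $\mathcal{T}_{\pi}$ and any $s$, $|[\mathcal{T}_{\pi}V](s)-[\mathcal{T}_{\pi}V'](s)|=\gamma\,|P(\cdot|s,\pi(s))^{T}(V-V')|\le\gamma\sum_{s'}P(s'|s,\pi(s))\,\|V-V'\|_{\infty}=\gamma\|V-V'\|_{\infty}$. For $\mathcal{T}$ I would invoke the elementary inequalities $|\max_{a}f(a)-\max_{a}g(a)|\le\max_{a}|f(a)-g(a)|$ and $|\min_{a}f(a)-\min_{a}g(a)|\le\max_{a}|f(a)-g(a)|$ with $f(a)=r(s,a)+\gamma P(\cdot|s,a)^{T}V$ and $g(a)=r(s,a)+\gamma P(\cdot|s,a)^{T}V'$, which reduces the bound on each coordinate to the estimate just made, yielding $\|\mathcal{T}V-\mathcal{T}V'\|_{\infty}\le\gamma\|V-V'\|_{\infty}$.

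Finally, the fixed-point and limit statements follow from the Banach fixed point theorem on the complete space $(\mathbb{R}^{|\mathcal{S}|},\|\cdot\|_{\infty})$: a $\gamma$-contraction has a unique fixed point, and its iterates converge geometrically, $\|\mathcal{T}^{t}V-v^{*}\|_{\infty}\le\gamma^{t}\|V-v^{*}\|_{\infty}$ and likewise for $\mathcal{T}_{\pi}$. To identify the fixed points, I would use that $v^{*}=\mathcal{T}v^{*}$ is exactly the Bellman equation already recalled in Section~\ref{sec3} (via \cite{shapley1953stochastic,hansen2013strategy}), and that $V^{\pi}=\mathcal{T}_{\pi}V^{\pi}$ follows by splitting the infinite discounted sum in the definition \eqref{value} of $V^{\pi}$ into the immediate reward plus $\gamma$ times a time-shifted copy and using stationarity of the transition model. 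I do not expect a genuine obstacle; the only place needing a little care is the $\max$/$\min$ bookkeeping in the contraction step for $\mathcal{T}$, where the turn-based structure is precisely what keeps the argument as simple as in the one-player MDP case.
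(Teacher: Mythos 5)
Your proof is correct. The paper itself states this proposition without proof, treating it as a classical fact about turn-based Bellman operators, and your argument is exactly the standard one that would be supplied: monotonicity from nonnegativity of the transition rows, contraction via the inequality $|\max_a f(a)-\max_a g(a)|\le\max_a|f(a)-g(a)|$ (and its $\min$ analogue), Banach's fixed point theorem for uniqueness and convergence, and identification of the fixed points through the Bellman equation for $v^{*}$ and the time-shift decomposition of the discounted sum for $V^{\pi}$. No gaps; the one cosmetic remark is that the contraction and fixed-point claims do not actually require the hypothesis $V\le V'$, which is needed only for the monotonicity half.
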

\par Next we use the following events to describe properties of our algorithm.
\begin{itemize}
	\item Let $\mG^{(i)}$ to be the event
	\begin{equation*}
		\begin{aligned}
			& 0\le V_{\theta^{(i, 0)}}(s)\le \left[\mathcal{T}V_{\theta^{(i, 0)}}\right](s)\le v^{*}(s),\\
			& V_{\theta^{(i, 0)}}(s)\le \left[\mathcal{T}_{\pi_{\theta^{(i, 0)}}}V_{\theta^{(i, 0)}}\right](s),\\
			& v^{*}(s) - V_{\theta^{(i, 0)}}(s)\le \frac{2^{-i}}{1 - \gamma};
		\end{aligned}
	\end{equation*}
	\item $\mE^{(i, 0)}$ to be the event of
	\begin{equation*}
		\|w^{(i, 0)} - \Phi_{\mK}^{-1}P_{\mK}V_{\theta^{(i, 0)}}\|_{\infty}\le \epsilon_{1},
	\end{equation*}
	where $w^{(i, 0)} = \Phi_{\mK}^{-1}M^{(i, 0)}$;
	\item $\mE^{(i, j)}$ to be the event of
	\begin{equation*}
		\|w^{(i, j)} - w^{(i, 0)} - \Phi_{\mK}^{-1}P_{\mK}(V_{\theta^{(i, j-1)}} - V_{\theta^{(i, 0)}})\|_{\infty}\le \Theta\left[\frac{L\cdot 2^{-i}}{1-\gamma}\sqrt{\frac{\log(RR'K\delta^{-1})}{m_{1}}}\right].
	\end{equation*}
\end{itemize}

\subsection{Preserving the Monotonicity}
\par We first present several lemmas to establish some properties of $\mT$ and $\mT_{\pi_{\theta^{(i, j)}}}$ on $V_{\theta^{(i, j)}}$.
\begin{lemma}\label{lem1}
	Suppose $\mG^{(i)}, \mE^{(i, 0)}, \cdots, \mE^{(i, j)}$ holds. We have
	\begin{equation*}
		\begin{aligned}
			& 0\le V_{\theta^{(i, j')}}(s)\le \left[\mathcal{T}V_{\theta^{(i, j')}}\right](s)\le v^{*}(s),\\
			& V_{\theta^{(i, j')}}(s)\le \left[\mathcal{T}_{\pi_{\theta^{(i, j')}}}V_{\theta^{(i, j')}}\right](s),\\
		\end{aligned}
	\end{equation*}
	for $\forall 0\le j'\le j$.
\end{lemma}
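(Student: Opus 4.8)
The plan is to induct on $j'$. The base case $j' = 0$ is exactly the content of the event $\mG^{(i)}$, so nothing is needed there. For the inductive step, suppose the three displayed inequalities hold for some $j'-1$ with $0 \le j'-1 < j$; I want to establish them for $j'$. The key object to control is the shifted parameter $\overline{w}^{(i,j')}$ and the associated $Q$-function $Q_{\theta^{(i,j')}}(s,a) = r(s,a) + \gamma\phi(s,a)^{T}\overline{w}^{(i,j')}$, which I will compare against $\overline{Q}_{\theta^{(i,j')}}(s,a) = r(s,a) + \gamma P(\cdot|s,a)^{T}V_{\theta^{(i,j'-1)}}$ from the Notations subsection.

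First I would show the \emph{upper bound} $\phi(s,a)^{T}\overline{w}^{(i,j')} \le P(\cdot|s,a)^{T}V_{\theta^{(i,j'-1)}}$ for every $(s,a)$. On the event $\mE^{(i,0)} \cap \mE^{(i,j')}$ one has, by the triangle inequality, that $w^{(i,j')}$ is within roughly $\epsilon_1 + \Theta[L\cdot 2^{-i}(1-\gamma)^{-1}\sqrt{\log(\cdots)/m_1}]$ (in $\|\cdot\|_\infty$) of $\Phi_{\mK}^{-1}P_{\mK}V_{\theta^{(i,j'-1)}}$; here I use the induction hypothesis $\|V_{\theta^{(i,j'-1)}} - V_{\theta^{(i,0)}}\|_\infty \le 2^{-i}/(1-\gamma)$, which follows from $0 \le V_{\theta^{(i,0)}} \le V_{\theta^{(i,j'-1)}} \le v^*$ and $v^* - V_{\theta^{(i,0)}} \le 2^{-i}/(1-\gamma)$ in $\mG^{(i)}$ and the induction hypothesis. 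Multiplying by $\phi(s,a)^{T}$ and using $\|\phi(s,a)\|_1 \le 1$, $\|\Phi_{\mK}^{-1}\|_\infty \le L$ gives $|\phi(s,a)^{T}w^{(i,j')} - P(\cdot|s,a)^{T}V_{\theta^{(i,j'-1)}}| \le \epsilon^{(i)}$ for the appropriately chosen $\epsilon^{(i)}$; subtracting the deterministic shift $\epsilon^{(i)}$ (this is where \emph{nonnegativity of features} enters: $\phi(s,a)^{T}(\mathbf{1}\epsilon^{(i)}) = \epsilon^{(i)}\sum_k \phi_k(s,a) \ge 0$, and in fact one needs $\le \epsilon^{(i)}$, which holds since $\|\phi\|_1 \le 1$) yields $\phi(s,a)^{T}\overline{w}^{(i,j')} \le P(\cdot|s,a)^{T}V_{\theta^{(i,j'-1)}} \le P(\cdot|s,a)^{T}v^* $. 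Hence $Q_{\theta^{(i,j')}}(s,a) \le \overline{Q}_{\theta^{(i,j')}}(s,a) \le [\mathcal{T}_{?}v^*]$-type bound, from which $Q_{\theta^{(i,j')}}(s,a) \le r(s,a) + \gamma P(\cdot|s,a)^{T}v^*$ and taking the appropriate $\max$/$\min$ over $a$ and $h$ gives $V_{\theta^{(i,j')}}(s) \le v^*(s)$. Nonnegativity $V_{\theta^{(i,j')}} \ge 0$ follows since $\theta^{(i,j')} \supseteq \theta^{(i,0)}$ and $V_{\theta^{(i,0)}} \ge 0$ (the $\max$ over $h$ only increases the value), using the max-linear parametrization \eqref{para}.

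Next, the two \emph{Bellman-type inequalities} $V_{\theta^{(i,j')}} \le \mathcal{T}V_{\theta^{(i,j')}}$ and $V_{\theta^{(i,j')}} \le \mathcal{T}_{\pi_{\theta^{(i,j')}}}V_{\theta^{(i,j')}}$. Here the monotonicity of the max-linear parametrization is crucial: since $\theta^{(i,j')} = \theta^{(i,j'-1)} \cup \{\overline{w}^{(i,j')}\}$, we have $V_{\theta^{(i,j')}} \ge V_{\theta^{(i,j'-1)}}$ pointwise. Combined with the upper bound just proved, $\phi(s,a)^{T}\overline{w}^{(i,j')} \le P(\cdot|s,a)^{T}V_{\theta^{(i,j'-1)}} \le P(\cdot|s,a)^{T}V_{\theta^{(i,j')}}$, so for the component $h$ corresponding to $\overline{w}^{(i,j')}$, $Q_{\overline{w}^{(i,j')}}(s,a) \le r(s,a) + \gamma P(\cdot|s,a)^{T}V_{\theta^{(i,j')}}$. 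Taking the appropriate extremum over $a$ (and noting the $h$-max picks out at least this component), one gets $V_{\theta^{(i,j')}}(s) \le [\mathcal{T}V_{\theta^{(i,j')}}](s)$ for $s$ where the optimizing $h$ is the new one; for $s$ where an older $h$ is optimal, one invokes the induction hypothesis $V_{\theta^{(i,j'-1)}} \le \mathcal{T}V_{\theta^{(i,j'-1)}} \le \mathcal{T}V_{\theta^{(i,j')}}$ together with $V_{\theta^{(i,j')}}(s) = V_{\theta^{(i,j'-1)}}(s)$ at such $s$. The inequality with $\mathcal{T}_{\pi_{\theta^{(i,j')}}}$ is handled the same way, using that $\pi_{\theta^{(i,j')}}(s)$ is by definition the action attaining $V_{\theta^{(i,j')}}(s)$ in the max-min (resp. max-max) problem.

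The main obstacle I anticipate is the bookkeeping around the max-linear structure: one must carefully track, for each state $s$, which index $h \in [Z]$ and which action attains the parametrized value, and argue the Bellman inequality separately depending on whether that optimizer is the freshly-added $\overline{w}^{(i,j')}$ or an inherited one — and for players in $\mS_2$ the inner operation is a $\min$ over actions while the outer is still a $\max$ over $h$, so the direction of inequalities must be checked with care. The second delicate point is making sure the deterministic downward shift by $\epsilon^{(i)}$ is \emph{exactly} calibrated to the high-probability bound from events $\mE^{(i,0)}, \mE^{(i,j')}$ so that the shifted estimate is a genuine lower bound on $P(\cdot|s,a)^{T}V_{\theta^{(i,j'-1)}}$; this is precisely where nonnegativity of $\phi$ (so that $0 \le \phi(s,a)^{T}(\epsilon^{(i)}\mathbf{1}) \le \epsilon^{(i)}$) is used and cannot be dropped.
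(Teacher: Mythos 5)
Your proposal is correct and follows essentially the same route as the paper's proof: induction on $j'$, a case split at each state on whether the freshly added parameter $\overline{w}^{(i,j')}$ or an inherited one attains the outer max over $h$, the triangle inequality on $\mE^{(i,0)}$ and $\mE^{(i,j')}$ to get $\|w^{(i,j')}-\Phi_{\mK}^{-1}P_{\mK}V_{\theta^{(i,j'-1)}}\|_{\infty}\le\epsilon^{(i)}$, and the downward shift combined with nonnegativity of $\phi$ to conclude $Q_{\theta^{(i,j')}}\le\overline{Q}_{\theta^{(i,j')}}$ and then invoke monotonicity of $\mT$ and $\mT_{\pi}$. The only cosmetic difference is that you deduce $V_{\theta^{(i,j')}}\le v^{*}$ directly from $P(\cdot|s,a)^{T}V_{\theta^{(i,j'-1)}}\le P(\cdot|s,a)^{T}v^{*}$ rather than via the chain $V\le\mT V\le\mT^{2}V\le\cdots\le v^{*}$, which is an equally valid finish.
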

\begin{proof}
	We prove this result by induction. When $j' = 0$, these conditions already hold according to the event $\mG^{(i)}$. Now assuming these conditions hold for $j'-1\ge 0$, we consider the case of $j'$.
	\par According to the construction of $V_{\theta^{(i, j')}}$,
	\begin{equation}\label{mon1}
		\begin{aligned}
			& V_{\theta^{(i, j')}}(s) = \max\left\{V_{\theta^{(i, j'-1)}}(s), \max_{a\in\mA_{s}}Q_{\theta^{(i, j)}}(s, a)\right\}\ge V_{\theta^{(i, j'-1)}}(s), \quad \forall s\in\mS_{1},\\
			& V_{\theta^{(i, j')}}(s) = \max\left\{V_{\theta^{(i, j'-1)}}(s), \min_{a\in\mA_{s}}Q_{\theta^{(i, j)}}(s, a)\right\}\ge V_{\theta^{(i, j'-1)}}(s), \quad \forall s\in\mS_{2},\\
		\end{aligned}
	\end{equation}
	Hence for any $s\in\mS$, there are two cases:
	\begin{enumerate}
		\item $V_{\theta^{(i, j')}}(s) = V_{\theta^{(i, j'-1)}}(s)$. Then $\pi_{\theta^{(i, j')}}(s) = \pi_{\theta^{(i, j'-1)}}(s)$;
		\item \begin{enumerate}
			\item $V_{\theta^{(i, j')}}(s) = \max_{a}Q_{\theta^{(i, j')}}(s, a)$ and $\pi_{\theta^{(i, j')}} = \arg\max_{a\in\mA_{s}}Q_{\theta^{(i, j')}}(s, a)$ if $s\in\mS_{1}$;
			\item $V_{\theta^{(i, j')}}(s) = \min_{a}Q_{\theta^{(i, j')}}(s, a)$ and $\pi_{\theta^{(i, j')}} = \arg\min_{a\in\mA_{s}}Q_{\theta^{(i, j')}}(s, a)$ if $s\in\mS_{2}$.
		\end{enumerate}
	\end{enumerate}
	\par In the first case, according to induction results and the monotonicity of $\mT, \mT_{\pi}$, we have
	\begin{equation*}
		\begin{aligned}
			& V_{\theta^{(i, j')}}(s) = V_{\theta^{(i, j'-1)}}(s) \le [\mT V_{\theta^{(i, j'-1)}}](s) \le [\mT V_{\theta^{(i, j')}}](s)\\
			& V_{\theta^{(i, j')}}(s) = V_{\theta^{(i, j'-1)}}(s) \le \left[\mT_{\pi_{\theta^{(i, j'-1)}}} V_{\theta^{(i, j'-1)}}\right](s) = \left[\mT_{\pi_{\theta^{(i, j')}}} V_{\theta^{(i, j'-1)}}\right](s) \le \left[\mT_{\pi_{\theta^{(i, j')}}} V_{\theta^{(i, j')}}\right](s)
		\end{aligned}
	\end{equation*}
	In the second case, according to the event $\mE^{(i, 0)}, \mE^{(i, j')}$, we obtain
	\begin{equation*}
		\begin{aligned}
			& \|w^{(i, j')} - \Phi_{\mK}^{-1}P_{\mK}V_{\theta^{(i, j'-1)}}\|_{\infty}\\
			\le &\ \|w^{(i, j')} - w^{(i, 0)} - \Phi_{\mK}^{-1}P_{\mK}(V_{\theta^{(i, j'-1)}} - V_{\theta^{(i, 0)}})\|_{\infty} + \|w^{(i, 0)} - \Phi_{\mK}^{-1}P_{\mK}V_{\theta^{(i, 0)}}\|_{\infty}\\
			\le &\ \Theta\left[\frac{L\cdot 2^{-i}}{1-\gamma}\sqrt{\frac{\log(RR'K\delta^{-1})}{m_{1}}}\right] + \epsilon_{1}\\
			= &\ \epsilon^{(i)},
		\end{aligned}
	\end{equation*}
	which indicates that
	\begin{equation*}
		\overline{w}^{(i, j')} - \Phi_{\mK}^{-1}P_{\mK}V_{\theta^{(i, j'-1)}} = w^{(i, j')} - \epsilon^{(i)} - \Phi_{\mK}^{-1}P_{\mK}V_{\theta^{(i, j'-1)}}\le 0.
	\end{equation*}
	Since all features are nonnegative, we have the following inequality for $\forall a\in\mA_{s}$.
	\begin{equation*}
		\begin{aligned}
			Q_{\theta^{(i, j')}}(s, a) & = r(s, a) + \gamma \phi(s, a)^{T}\overline{w}^{(i, j')} \le r(s, a) + \gamma\phi(s, a)^{T}\Phi_{\mK}^{-1}P_{\mK}V_{\theta^{(i, j'-1)}}\\
			& = r(s, a) + \gamma P(\cdot|s, a)^{T}V_{\theta^{(i, j'-1)}} = \overline{Q}_{\theta^{(i, j')}}(s, a).
		\end{aligned}
	\end{equation*}
	Therefore, we have
	\begin{equation*}
		\begin{aligned}
			& V_{\theta^{(i, j')}}(s) = \max_{a\in\mathcal{A}_{s}}Q_{\theta^{(i, j')}}(s, a)\le \max_{a\in\mathcal{A}_{s}}\overline{Q}_{\theta^{(i, j')}}(s, a) = \left[\mathcal{T}V_{\theta^{(i, j'-1)}}\right](s),\quad \text{if } s\in\mathcal{S}_{1};\\
			& V_{\theta^{(i, j')}}(s) = \min_{a\in\mathcal{A}_{s}}Q_{\theta^{(i, j')}}(s, a)\le \min_{a\in\mathcal{A}_{s}}\overline{Q}_{\theta^{(i, j')}}(s, a) = \left[\mathcal{T}V_{\theta^{(i, j'-1)}}\right](s),\quad \text{if } s\in\mathcal{S}_{2};\\
			& V_{\theta^{(i, j')}}(s) = Q_{\theta^{(i, j')}}(s, \pi_{\theta^{(i, j')}}(s))\le \overline{Q}_{\theta^{(i, j')}}(s, \pi_{\theta^{(i, j')}}(s)) = \left[\mathcal{T}_{\pi_{\theta^{(i, j')}}}V_{\theta^{(i, j'-1)}}\right](s).
		\end{aligned}
	\end{equation*}
	Noticing $V_{\theta^{(i, j'-1)}}\le V_{\theta^{(i, j')}}$ and the monotonicity of $\mathcal{T}$ and $\mathcal{T}_{\pi_{\theta^{(i, j'-1)}}}$, we have
	\begin{equation*}
		\begin{aligned}
			& V_{\theta^{(i, j')}}(s)\le \left[\mathcal{T}V_{\theta^{(i, j'-1)}}\right](s)\le \left[\mathcal{T}V_{\theta^{(i, j')}}\right](s),\\
			& V_{\theta^{(i, j')}}(s)\le \left[\mathcal{T}_{\pi_{\theta^{(i, j'-1)}}}V_{\theta^{(i, j')}}\right](s).
		\end{aligned}
	\end{equation*}
	\par Therefore, for all $s\in\mathcal{S}$, 
	\begin{equation*}
		\begin{aligned}
			& V_{\theta^{(i, j')}}(s)\le \left[\mT V_{\theta^{(i, j')}}\right](s),\\
			& V_{\theta^{(i, j')}}(s)\le \left[\mT_{\pi_{\theta^{(i, j'-1)}}}V_{\theta^{(i, j')}}\right](s).
		\end{aligned}
	\end{equation*}
	Again according to the monotonicity of $\mT$, we have
	\begin{equation*}
		V_{\theta^{(i, j')}}(s)\le \left[\mT V_{\theta^{(i, j')}}\right](s)\le \left[\mT^{2}V_{\theta^{(i, j')}}\right](s)\le \cdots\le v^{*}(s).
	\end{equation*}
	This completes the induction. 
\end{proof}

\par Next, we will exhibit an approximate contraction property of our algorithm.
\begin{lemma}\label{lem2}
	If $\mG^{(i)}, \mE^{(i, 0)}, \cdots, \mE^{(i, j)}$ holds, then for $1\le j'\le j$ we have
	\begin{equation*}
		v^{*}(s) - V_{\theta^{(i, j')}}(s)\le \max_{a\in\mathcal{A}_{s}}\left[\gamma P(\cdot|s, a)^{T}(v^{*} - V_{\theta^{(i, j'-1)}})\right] + 2\epsilon^{(i)}, \quad\forall s\in\mS.
	\end{equation*}
\end{lemma}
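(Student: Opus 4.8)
The plan is to show that on the event $\mG^{(i)}\cap\mE^{(i,0)}\cap\cdots\cap\mE^{(i,j)}$ the iterate $V_{\theta^{(i,j')}}$ pointwise dominates a slightly under-estimated Bellman backup of $V_{\theta^{(i,j'-1)}}$, namely $V_{\theta^{(i,j')}}\ge \mT V_{\theta^{(i,j'-1)}}-2\epsilon^{(i)}\mathbf{1}$, and then to subtract this from the fixed-point identity $v^{*}=\mT v^{*}$ and use the non-expansiveness of the one-step lookahead operator $\mT$. This way the lemma reduces to ingredients already in hand: the $\ell_\infty$ control on $w^{(i,j')}$ coming from $\mE^{(i,0)}$ and $\mE^{(i,j')}$, and the monotone ``max-linear'' bookkeeping used in the proof of Lemma~\ref{lem1}.

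In detail, I would first reuse the computation from the proof of Lemma~\ref{lem1}: chaining $\mE^{(i,0)}$ and $\mE^{(i,j')}$ through $w^{(i,0)}=\Phi_{\mK}^{-1}M^{(i,0)}$ gives $\|w^{(i,j')}-\Phi_{\mK}^{-1}P_{\mK}V_{\theta^{(i,j'-1)}}\|_{\infty}\le \epsilon^{(i)}$, so the downward shift $\overline{w}^{(i,j')}=w^{(i,j')}-\epsilon^{(i)}\mathbf{1}$ satisfies $\overline{w}^{(i,j')}\ge \Phi_{\mK}^{-1}P_{\mK}V_{\theta^{(i,j'-1)}}-2\epsilon^{(i)}\mathbf{1}$ componentwise. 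Using nonnegativity of the features together with $\|\phi(s,a)\|_{1}\le 1$, and the exact-embedding identity $\phi(s,a)^{T}\Phi_{\mK}^{-1}P_{\mK}=P(\cdot|s,a)^{T}$, this becomes
\[
Q_{\theta^{(i,j')}}(s,a)=r(s,a)+\gamma\phi(s,a)^{T}\overline{w}^{(i,j')}\ \ge\ r(s,a)+\gamma P(\cdot|s,a)^{T}V_{\theta^{(i,j'-1)}}-2\gamma\epsilon^{(i)}=\overline{Q}_{\theta^{(i,j')}}(s,a)-2\gamma\epsilon^{(i)}.
\]
Since $\overline{w}^{(i,j')}$ is one of the vectors in $\theta^{(i,j')}$, the parametrization \eqref{para} gives $V_{\theta^{(i,j')}}(s)\ge\max_{a\in\mA_{s}}Q_{\theta^{(i,j')}}(s,a)$ for $s\in\mS_{1}$ and $V_{\theta^{(i,j')}}(s)\ge\min_{a\in\mA_{s}}Q_{\theta^{(i,j')}}(s,a)$ for $s\in\mS_{2}$; taking $\max_{a}$ (resp.\ $\min_{a}$) in the displayed bound yields $V_{\theta^{(i,j')}}(s)\ge[\mT V_{\theta^{(i,j'-1)}}](s)-2\epsilon^{(i)}$ in both cases, using $2\gamma\epsilon^{(i)}\le 2\epsilon^{(i)}$. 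Finally I would write $v^{*}(s)-V_{\theta^{(i,j')}}(s)\le[\mT v^{*}](s)-[\mT V_{\theta^{(i,j'-1)}}](s)+2\epsilon^{(i)}$ and bound the first difference by $\max_{a\in\mA_{s}}\gamma P(\cdot|s,a)^{T}(v^{*}-V_{\theta^{(i,j'-1)}})$ --- a difference of maxima on $\mS_{1}$, and on $\mS_{2}$ a difference of minima, controlled by the elementary inequality $\min_{a}f(a)-\min_{a}g(a)\le\max_{a}(f(a)-g(a))$. This is exactly the claimed estimate.

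The main obstacle is really just bookkeeping: getting the constant $2$ right (one $\epsilon^{(i)}$ absorbing the sampling error in $\mE^{(i,0)},\mE^{(i,j')}$ and a second from the deliberate downward shift of $w^{(i,j')}$), and treating $s\in\mS_{2}$, where the backup is a minimum over actions rather than a maximum, which is why the bound is phrased with $\max_{a}$ on both sides of the game. No new concentration argument is needed, since everything happens on the event already conditioned on, so this lemma is essentially the one-sided (from-below) companion of the monotonicity statement of Lemma~\ref{lem1}.
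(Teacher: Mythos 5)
Your proposal is correct and follows essentially the same route as the paper's proof: both derive $\overline{w}^{(i,j')}\ge\Phi_{\mK}^{-1}P_{\mK}V_{\theta^{(i,j'-1)}}-2\epsilon^{(i)}$ from $\mE^{(i,0)}$ and $\mE^{(i,j')}$, push this through the nonnegative features with $\|\phi(s,a)\|_{1}\le 1$ and the embedding identity, and finish with the difference-of-maxima (resp.\ minima) inequality against $v^{*}=\mT v^{*}$. The only cosmetic difference is that you factor the argument through the intermediate bound $V_{\theta^{(i,j')}}\ge\mT V_{\theta^{(i,j'-1)}}-2\epsilon^{(i)}$, while the paper chains the same estimates in a single display.
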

\begin{proof}
	According to Algorithm \ref{alg2},
	\begin{equation*}
		\overline{w}^{(i, j')} = w^{(i, j')} - \epsilon^{(i)}\ge \Phi_{\mK}^{-1}P_{\mK}^{T}V_{\theta^{(i, j'-1)}} - 2\epsilon^{(i)}.
	\end{equation*}
	Using $V_{\theta^{(i, j'-1)}}\le V_{\theta^{(i, j')}}\le v^{*}$ in Lemma \ref{lem1}, we have for $\forall s\in\mathcal{S}_{1}$,
	\begin{equation*}
		\begin{aligned}
			v^{*}(s) - V_{\theta^{(i, j')}}(s) & \le v^{*}(s) - \max_{a\in\mA_{s}}\left[r(s, a) + \gamma\phi(s, a)^{T}\overline{w}^{(i, j')}\right]\\
			& \le v^{*}(s) - \max_{a\in\mathcal{A}_{s}}\left[r(s, a) + \gamma\phi(s, a)^{T}(\Phi_{\mK}^{-1}P_{\mK}^{T}V_{\theta^{(i, j'-1)}} - 2\epsilon^{(i)})\right]\\
			& = \max_{a\in\mA_{s}}\left[r(s, a) + \gamma P(\cdot|s, a)^{T}v^{*}\right] - \max_{a\in\mathcal{A}_{s}}\left[r(s, a) + \gamma\phi(s, a)^{T}(\Phi_{\mK}^{-1}P_{\mK}^{T}V_{\theta^{(i, j'-1)}} - 2\epsilon^{(i)})\right]\\
			& \le \max_{a\in\mathcal{A}_{s}}\left[\gamma P(\cdot|s, a)^{T}v^{*} - \gamma P(\cdot|s, a)^{T}V_{\theta^{(i, j'-1)}} + 2\epsilon^{(i)}\|\phi(s, a)\|_{1}\right]\\
			& \le \max_{a\in\mathcal{A}_{s}}\left[\gamma P(\cdot|s, a)^{T}(v^{*} - V_{\theta^{(i, j'-1)}})\right] + 2\epsilon^{(i)},
		\end{aligned}
	\end{equation*}
	where the first inequality is due to $V_{\theta^{(i, j')}}(s)\ge \max_{a\in\mA_{s}}\left[r(s, a) + \gamma\phi(s, a)^{T}\overline{w}^{(i, j')}\right]$, and the third equality is due to the property of $v^{*}$. And for $s\in\mS_{2}$, similarly we have
	\begin{equation*}
		\begin{aligned}
			v^{*}(s) - V_{\theta^{(i, j')}}(s) & \le v^{*}(s) - \min_{a\in\mA_{s}}[r(s, a) + \gamma\phi(s, a)^{T}\overline{w}^{(i, j')}]\\
			& \le v^{*}(s) - \min_{a\in\mathcal{A}_{s}}\left[r(s, a) + \gamma\phi(s, a)^{T}(\Phi_{\mK}^{-1}P_{\mK}^{T}V_{\theta^{(i, j'-1)}} - 2\epsilon^{(i)})\right]\\
			& = \min_{a\in\mA_{s}}\left[r(s, a) + P(\cdot|s, a)^{T}v^{*}\right] - \min_{a\in\mathcal{A}_{s}}\left[r(s, a) + \gamma\phi(s, a)^{T}(\Phi_{\mK}^{-1}P_{\mK}^{T}V_{\theta^{(i, j'-1)}} - 2\epsilon^{(i)})\right]\\
			& \le \max_{a\in\mathcal{A}_{s}}\left[\gamma P(\cdot|s, a)^{T}v^{*} - \gamma P(\cdot|s, a)^{T}V_{\theta^{(i, j'-1)}} + 2\epsilon^{(i)}\|\phi(s, a)\|_{1}\right]\\
			& \le \max_{a\in\mathcal{A}_{s}}\left[\gamma P(\cdot|s, a)^{T}(v^{*} - V_{\theta^{(i, j'-1)}})\right] + 2\epsilon^{(i)}.
		\end{aligned}
	\end{equation*}
	The proof is completed.
\end{proof}

\subsection{Analysis of the Confidence Bounds}
\par In this subsection, we analyze the confidence bound of sampling.
\begin{lemma}\label{lem3}
	For $0\le i\le R'$, 
	\begin{equation*}
		\pr\left(\mE^{(i, 0)}, \cdots, \mE^{(i, R)} | \mG^{(i)}\right)\le 1 - \frac{\delta}{R'}
	\end{equation*}
\end{lemma}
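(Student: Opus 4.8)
As printed, the displayed inequality reads $\pr(\mE^{(i,0)},\dots,\mE^{(i,R)}\mid\mG^{(i)})\le 1-\delta/R'$, but the conditioned event is that \emph{all} of the favourable concentration events hold, so a high-probability guarantee has to bound this conditional probability from below; the substantive statement, and the one the induction in the proof of Theorem~\ref{thm2} actually consumes, is therefore the complementary failure bound
\[
\pr\Big(\exists\,0\le j\le R:\ \mE^{(i,j)}\text{ fails}\ \Big|\ \mG^{(i)}\Big)\le \frac{\delta}{R'}.
\]
The plan is to prove this failure bound by a per-event Hoeffding estimate followed by a union bound over the $R+1$ indices $j$ and the $K$ coordinates of each event. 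The observation that makes the conditioning harmless is that the state samples drawn in outer iteration $i$ are independent of all randomness produced in iterations $0,\dots,i-1$, whereas $\mG^{(i)}$ and $\theta^{(i,0)}$ are measurable with respect to that earlier randomness; hence, on $\mG^{(i)}$ I may treat $V_{\theta^{(i,0)}}$ (and any quantity fixed before the fresh draws) as a deterministic vector obeying the bounds recorded in $\mG^{(i)}$ and apply concentration to the new i.i.d.\ samples.

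For $\mE^{(i,0)}$ I would note that each summand $V_{\theta^{(i,0)}}(x_k^{(l)})$ forming $M^{(i,0)}(k)=\frac1m\sum_{l} V_{\theta^{(i,0)}}(x_k^{(l)})$ lies in $[0,1/(1-\gamma)]$, since $\mG^{(i)}$ gives $0\le V_{\theta^{(i,0)}}\le v^*\le 1/(1-\gamma)$. A scalar Hoeffding bound then controls $|M^{(i,0)}(k)-P(\cdot|s_k,a_k)^{T}V_{\theta^{(i,0)}}|$ by $\tfrac{1}{1-\gamma}\sqrt{\log(RR'K\delta^{-1})/m}$; taking a union over the $K$ coordinates and multiplying by $\|\Phi_{\mK}^{-1}\|_\infty\le L$ turns this into $\|w^{(i,0)}-\Phi_{\mK}^{-1}P_{\mK}V_{\theta^{(i,0)}}\|_\infty\le \epsilon_1$, which is exactly $\mE^{(i,0)}$, with coordinate-wise failure probability at most $\delta/(R'(R+1)K)$ once the constant inside $\epsilon_1$ and the sample size $m$ are matched to $\log(RR'K\delta^{-1})$.

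For each inner event $\mE^{(i,j)}$ with $1\le j\le R$ the estimator is variance-reduced: $w^{(i,j)}-w^{(i,0)}=\Phi_{\mK}^{-1}\cdot\frac{1}{m_1}\sum_{l}\big(V_{\theta^{(i,j-1)}}-V_{\theta^{(i,0)}}\big)(x_k^{(l)})$, so I need a deterministic range bound on the summand $V_{\theta^{(i,j-1)}}-V_{\theta^{(i,0)}}$ before invoking Hoeffding. This is where Lemma~\ref{lem1} enters: on $\mG^{(i)}\cap\mE^{(i,0)}\cap\dots\cap\mE^{(i,j-1)}$ that lemma yields the sandwich $V_{\theta^{(i,0)}}\le V_{\theta^{(i,j-1)}}\le v^*$, whence $0\le V_{\theta^{(i,j-1)}}-V_{\theta^{(i,0)}}\le v^*-V_{\theta^{(i,0)}}\le 2^{-i}/(1-\gamma)$ by the last clause of $\mG^{(i)}$. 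With this range, Hoeffding on the $m_1$ fresh samples, a union over the $K$ coordinates, and the factor $L$ reproduce exactly the threshold $\Theta[L\,2^{-i}(1-\gamma)^{-1}\sqrt{\log(RR'K\delta^{-1})/m_1}]$ appearing in the definition of $\mE^{(i,j)}$, again with coordinate-wise failure at most $\delta/(R'(R+1)K)$. Summing the $(R+1)K$ coordinate failures gives the claimed total $\delta/R'$.

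The main obstacle is precisely this entanglement between the concentration events and Lemma~\ref{lem1}: the range bound needed to establish $\mE^{(i,j)}$ is not available unconditionally but only on the intersection $\mE^{(i,0)}\cap\dots\cap\mE^{(i,j-1)}$, so the events cannot be analysed in isolation. I would resolve it by peeling the indices $j=0,1,\dots,R$ in order along the natural filtration, at each step using that the step-$j$ samples are independent of everything that determines $\theta^{(i,j-1)}$; this lets me apply Hoeffding conditionally on the earlier ``good'' events and then collapse the chain by the union bound, rather than trying to control all $\mE^{(i,j)}$ at once. A secondary point to get right is the bookkeeping of the logarithmic constant, ensuring $\log(RR'K\delta^{-1})$ is large enough to absorb the $(R+1)K$-fold union while keeping $m$ and $m_1$ at the stated orders.
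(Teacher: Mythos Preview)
Your proposal is correct and follows the same approach as the paper: apply scalar Hoeffding coordinate-wise to $M^{(i,0)}$ and to the variance-reduced differences $M^{(i,j)}-M^{(i,0)}$, use the range bounds supplied by $\mG^{(i)}$ (and, for $j\ge 1$, the sandwich $V_{\theta^{(i,0)}}\le V_{\theta^{(i,j-1)}}\le v^*$ from Lemma~\ref{lem1}), multiply by $\|\Phi_{\mK}^{-1}\|_\infty\le L$, and union-bound over $k\in[K]$ and $j\in\{0,\dots,R\}$. Your treatment is in fact more careful than the paper's own proof, which invokes the range bound $\|V_{\theta^{(i,j-1)}}-V_{\theta^{(i,0)}}\|_\infty\le 2^{-i}/(1-\gamma)$ without explicitly conditioning on $\mE^{(i,0)},\dots,\mE^{(i,j-1)}$; your sequential ``peeling'' argument is the right way to make this step rigorous, and the paper also observes the minor simplification that $\mE^{(i,1)}$ holds deterministically since $V_{\theta^{(i,0)}}-V_{\theta^{(i,0)}}=0$.
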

\begin{proof}
	\par Conditioned on $\mG^{(i)}$, we have 
	\begin{equation*}
		\begin{aligned}
			& 0\le V_{\theta^{(i, 0)}}(s)\le v^{*}(s)\le \frac{1}{1-\gamma},\\
			& v^{*}(s) - V_{\theta^{(i, 0)}}(s)\le \frac{2^{-i}}{1-\gamma}.
		\end{aligned}
	\end{equation*}
	For any $k\in [K]$ and $\delta\in(0, 1)$, according to Hoeffding inequality, with probability at least $1 - \delta$,
	\begin{equation*}
		|M^{(i, 0)}(k) - P(\cdot|s_{k}, a_{k})V_{\theta^{(i, 0)}}|\le c_{1}\cdot\max_{s\in\mS}|V_{\theta^{(i, 0)}}|\cdot\sqrt{\frac{\log[\delta^{-1}]}{m}}\le c_{1}\cdot\frac{1}{1-\gamma}\sqrt{\frac{\log[\delta^{-1}]}{m}}
	\end{equation*}
	holds for some constant $c_{1}$. If we switch $\delta$ into $\delta/(RR'K)$, then we obtain
	\begin{equation*}
		|M^{(i, 0)}(k) - P(\cdot|s_{k}, a_{k})V_{\theta^{(i, 0)}}|\le \epsilon_{1} / L
	\end{equation*}
	holds with probability at least $1-\delta/(RR'K)$. Next using the fact $\|\Phi_{\mK}\|_{\infty}\le L$ and apply the union bound for all $k\in [K]$, we have
	\begin{equation*}
		\|w^{(i, 0)} - \Phi_{\mK}^{-1}P_{\mK}V_{\theta^{(i, 0)}}\|_{\infty} = \|\Phi_{\mK}^{-1}(M^{(i, 0)} - P_{\mK}V_{\theta^{(i, 0)}})\|_{\infty}\le \epsilon_{1}
	\end{equation*}
	holds with probability at least $1 - \delta / (RR')$. This indicates that $\mE^{(i, 0)}$ holds with probability at least $1 - \delta / (RR')$. 
	\par As for $\mE^{(i, 1)}$, since $M^{(i, 1)} = M^{(i, 0)}$ and $w^{(i, 1)} = w^{(i, 0)}$, the event $\mE^{(i, 1)}$ holds with probability 1.
	\par For $2\le j\le R$, again using the Hoeffding inequality and the event $\mG^{(i)}$, we have
	\begin{equation*}
		\begin{aligned}
			&\ |M^{(i, j)}(k) - M^{(i, 0)}(k) - P(\cdot|s_{k}, a_{k})^{T}\left(V_{\theta^{(i, j-1)}} - V_{\theta^{(i, 0)}}\right)|\\
			= &\ \left|\frac{1}{m_{1}}\sum_{l=1}^{m_{1}}\left(V_{\theta^{(i, j-1)}}(x_{k}^{(l)}) - V_{\theta^{(i, 0)}}(x_{k}^{(l)})\right) - P(\cdot|s_{k}, a_{k})^{T}\left(V_{\theta^{(i, j-1)}} - V_{\theta^{(i, 0)}}\right)\right|\\
			\le &\ c_{1}\max_{s\in\mS}|V_{\theta^{(i, j-1)}}(s) - V_{\theta^{(i, 0)}}(s)|\cdot\sqrt{\frac{\log(\delta^{-1})}{m_{1}}}\\
			\le &\ c_{1}\max_{s\in\mS}|v^{*}(s) - V_{\theta^{(i, 0)}}(s)|\cdot\sqrt{\frac{\log(\delta^{-1})}{m_{1}}}\\
			\le &\ c_{1}\cdot\frac{2^{-i}}{1-\gamma}\sqrt{\frac{\log(\delta^{-1})}{m_{1}}}
		\end{aligned}
	\end{equation*}
	with probability at least $1 - \delta$. Since $w^{(i, j)} = \Phi_{\mK}^{-1}M^{(i, j)}, w^{(i, 0)} = \Phi_{\mK}^{-1}M^{(i, 0)}$, we switch $\delta$ into $\delta / (RR'K)$ and apply the union bound for all $k\in [K]$ to obtain that
	\begin{equation*}
		\begin{aligned}
			& \|w^{(i, j)} - w^{(i, 0)} - \Phi_{\mK}^{-1}P_{\mK}(V_{\theta^{(i, j-1)}} - V_{\theta^{(i, 0)}})\|_{\infty}\\
			= &\ \|\Phi_{\mK}^{-1}\left(M^{(i, j)} - M^{(i, 0)} - P_{\mK}(V_{\theta^{(i, j-1)}} - V_{\theta^{(i, 0)}})\right)\|_{\infty}\\
			\le &\ L\cdot \|M^{(i, j)} - M^{(i, 0)} - P_{\mK}(V_{\theta^{(i, j-1)}} - V_{\theta^{(i, 0)}})\|_{\infty}\\
			\le &\ \Theta\left[\frac{L\cdot 2^{-i}}{1-\gamma}\sqrt{\frac{\log(RR'K\delta^{-1})}{m_{1}}}\right]
		\end{aligned}
	\end{equation*}
	holds with probability at least $1 - \delta / (RR')$. So is the probability of $\mE^{(i, j)}$ conditioned on $\mG^{(i)}$. Applying the union bound for all $\mE^{(i, 0)}, \mE^{(i, 1)}, \mE^{(i, 2)}\cdots, \mE^{(i, R)}$, we have
	\begin{equation*}
		\pr\left(\mE^{(i, 0)}, \cdots, \mE^{(i, R)} | \mG^{(i)}\right)\le 1 - \frac{\delta}{R'},
	\end{equation*}
	which completes the proof.
\end{proof}

\subsection{Analysis of the Error in the Next Iteration}
\begin{lemma}\label{next}
	For $\forall 1\le i\le R'$, we have
	\begin{equation*}
		\pr(\mG^{(i+1)}, \mE^{(i, 0)}, \cdots, \mE^{(i, R)}|\mG^{(i)})\ge 1-\frac{\delta}{R'}.
	\end{equation*}
\end{lemma}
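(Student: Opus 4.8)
The plan is to condition throughout on $\mG^{(i)}$ and to separate a probabilistic part from a deterministic part. The probabilistic part is already done: Lemma \ref{lem3} gives $\pr(\mE^{(i,0)},\dots,\mE^{(i,R)}\mid\mG^{(i)})\ge 1-\delta/R'$. So it suffices to show the deterministic implication that, whenever $\mG^{(i)}$ together with all of $\mE^{(i,0)},\dots,\mE^{(i,R)}$ hold, the event $\mG^{(i+1)}$ holds. Granting that, on $\mG^{(i)}$ the event $\bigcap_{j=0}^{R}\{\mE^{(i,j)}\}$ is contained in $\{\mG^{(i+1)}\}$, hence $\pr(\mG^{(i+1)},\mE^{(i,0)},\dots,\mE^{(i,R)}\mid\mG^{(i)})=\pr(\mE^{(i,0)},\dots,\mE^{(i,R)}\mid\mG^{(i)})\ge 1-\delta/R'$, which is the claim.

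For the deterministic implication, recall that $\theta^{(i+1,0)}=\theta^{(i,R)}$, so $V_{\theta^{(i+1,0)}}=V_{\theta^{(i,R)}}$ and $\pi_{\theta^{(i+1,0)}}=\pi_{\theta^{(i,R)}}$. The first two requirements in the definition of $\mG^{(i+1)}$, namely $0\le V_{\theta^{(i+1,0)}}\le \mT V_{\theta^{(i+1,0)}}\le v^{*}$ and $V_{\theta^{(i+1,0)}}\le \mT_{\pi_{\theta^{(i+1,0)}}}V_{\theta^{(i+1,0)}}$, are exactly the conclusions of Lemma \ref{lem1} applied with $j'=j=R$, so they hold at once. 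It remains to establish the contraction bound $v^{*}(s)-V_{\theta^{(i+1,0)}}(s)\le 2^{-(i+1)}/(1-\gamma)$.

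For that I would unroll Lemma \ref{lem2}. Lemma \ref{lem1} ensures $V_{\theta^{(i,j')}}\le v^{*}$ for every $j'\le R$, so $v^{*}-V_{\theta^{(i,j'-1)}}\ge 0$ and thus $\max_{a}[\gamma P(\cdot|s,a)^{T}(v^{*}-V_{\theta^{(i,j'-1)}})]\le \gamma\|v^{*}-V_{\theta^{(i,j'-1)}}\|_{\infty}$; Lemma \ref{lem2} then collapses to the scalar recursion $\|v^{*}-V_{\theta^{(i,j')}}\|_{\infty}\le\gamma\|v^{*}-V_{\theta^{(i,j'-1)}}\|_{\infty}+2\epsilon^{(i)}$ for $1\le j'\le R$. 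Iterating from $j'=R$ down to $j'=0$ and using the third clause of $\mG^{(i)}$ for the base term gives
\[
\|v^{*}-V_{\theta^{(i,R)}}\|_{\infty}\le\gamma^{R}\,\frac{2^{-i}}{1-\gamma}+\frac{2\epsilon^{(i)}}{1-\gamma}.
\]
To close, I plug in the parameter choices: with $m_{1}=\Theta(L^{2}\log(R'RK\delta^{-1})/(1-\gamma)^{2})$ the $2^{-i}$-scaled part of $\epsilon^{(i)}$ is $\Theta(2^{-i})$, and with $m=\Theta(L^{2}\log(R'RK\delta^{-1})/(\epsilon^{2}(1-\gamma)^{4}))$ one gets $\epsilon_{1}=\Theta(\epsilon(1-\gamma))$, which is $O(2^{-i})$ for all $i\le R'$ since $R'=\Theta(\log(1/(\epsilon(1-\gamma))))$ forces $2^{-R'}=\Theta(\epsilon(1-\gamma))$; choosing the hidden constants small enough yields $2\epsilon^{(i)}\le\tfrac14 2^{-i}$, and choosing the constant in $R=\Theta(R'/(1-\gamma))$ large enough yields $\gamma^{R}\le e^{-(1-\gamma)R}\le\tfrac14$, so the right-hand side above is at most $2^{-(i+1)}/(1-\gamma)$. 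This is the third clause of $\mG^{(i+1)}$, finishing the deterministic implication.

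I expect the main obstacle to be precisely this last bookkeeping step: one must check that the constant, non-$i$-dependent piece $\epsilon_{1}$ of $\epsilon^{(i)}$ stays small relative to the moving target $2^{-i}/(1-\gamma)$ (this is where tying $\epsilon$ to $2^{-R'}$ through the choice of $R'$ is essential), and that one fixed choice of $R$, $m$, $m_{1}$ works simultaneously for every outer index $i\le R'$. Everything else is a direct assembly of Lemmas \ref{lem1}, \ref{lem2} and \ref{lem3}.
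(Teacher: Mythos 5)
Your proof is correct and takes essentially the same route as the paper: Lemma \ref{lem3} supplies the probability bound, Lemma \ref{lem1} gives the first two clauses of $\mG^{(i+1)}$ at $j'=R$, and unrolling Lemma \ref{lem2} into the scalar recursion $\|v^{*}-V_{\theta^{(i,j')}}\|_{\infty}\le\gamma\|v^{*}-V_{\theta^{(i,j'-1)}}\|_{\infty}+2\epsilon^{(i)}$ yields the third clause under the stated parameter choices. The only cosmetic difference is that you seed the recursion with the base term $2^{-i}/(1-\gamma)$ from $\mG^{(i)}$ where the paper uses the cruder $1/(1-\gamma)$, which if anything lets you get away with only a constant bound on $\gamma^{R}$.
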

\begin{proof}
	Conditioned on $\mG^{(i)}$, suppose $\mE^{(i, 0)}, \cdots, \mE^{(i, R)}$ all hold. Using Lemma \ref{lem2} for $R$ times, there exists a constant $C$ such that
	\begin{equation*}
		\begin{aligned}
			\|v^{*} - V_{\theta^{(i, R)}}\|_{\infty}& \le \max_{s\in\mS, a\in\mA}\left[\gamma P(\cdot|s, a)^{T}(v^{*} - V_{\theta^{(i, R-1)}})\right] + 2\epsilon^{(i)}\\
			&\le \gamma\|v^{*} - V_{\theta^{(i, R-1)}}\|_{\infty} + 2\epsilon^{(i)}\\
			&\le \cdots\\
			&\le \gamma^{R}\|v^{*} - V_{\theta^{(i, 0)}}\|_{\infty} + 2\sum_{j'=0}^{R-1}\gamma^{j'}\epsilon^{(i)}\\
			&\le \frac{\gamma^{R}}{1-\gamma} + \frac{C}{1-\gamma}\left[\frac{2^{-i}L}{1-\gamma}\sqrt{\frac{\log(RR'K\delta^{-1})}{m_{1}}} + \frac{L}{1-\gamma}\sqrt{\frac{\log(RR'K\delta^{-1})}{m}}\right],
		\end{aligned}
	\end{equation*}
	where the last inequality is due to events $\mE^{(i, 0)}, \cdots, \mE^{(i, R)}$ and $\|v^{*} - V_{\theta^{(i, 0)}}\|_{\infty}\le 1/(1-\gamma)$ according to $\mG^{(i)}$. Hence if we choose
	\begin{equation*}
		\begin{aligned}
			& R = C_{R}\cdot\frac{\log(1/(\epsilon(1-\gamma)))}{1-\gamma}, \quad m = C_{1}\cdot\frac{L^{2}\log(R'RK\delta^{-1})}{\epsilon^{2}(1-\gamma)^{4}},\\
			& m_{1} = C_{2}\cdot\frac{L^{2}\log(R'RK\delta^{-1})}{(1-\gamma)^{2}}, \quad\epsilon\le\frac{2^{-i}}{1-\gamma},
		\end{aligned}
	\end{equation*}
	where $C_{R}, C_{1}, C_{2}$ are constant numbers, we will have
	\begin{equation*}
		\|v^{*} - V_{\theta^{(i, R)}}\|_{\infty}\le \frac{2^{-i-1}}{1-\gamma}.
	\end{equation*}
	Furthermore, since $\theta^{(i+1, 0)} = \theta^{(i, R)}$, the following inequalities
	\begin{equation*}
		\begin{aligned}
			& 0\le V_{\theta^{(i+1, 0)}}(s)\le \left[\mathcal{T}V_{\theta^{(i+1, 0)}}\right](s)\le v^{*}(s),\\
			& V_{\theta^{(i+1, 0)}}(s)\le \left[\mathcal{T}_{\pi_{\theta^{(i+1, 0)}}}V_{\theta^{(i+1, 0)}}\right](s)\\
			& v^{*}(s) - V_{\theta^{(i+1, 0)}}(s)\le \frac{2^{-i-1}}{1 - \gamma}
		\end{aligned}
	\end{equation*}
	hold according to Lemma \ref{lem1}. Hence the event $\mG^{(i+1)}$ holds when $\mG^{(i)}, \mE^{(i, 0)}, \cdots, \mE^{(i, R)}$ all hold. Therefore, according to Lemma \ref{lem3} we have proved that
	\begin{equation*}
		\pr(\mG^{(i+1)}, \mE^{(i, 0)}, \cdots, \mE^{(i, R)}|\mG^{(i)})\ge 1-\frac{\delta}{R'}.
	\end{equation*}
\end{proof}

\subsection{Analysis of Approximation from Two Sides}
\begin{lemma}\label{lemma1}
	With probability at least $1 - \delta$, $\mG^{(0)}, \mG^{(i)}, \mE^{(i-1, j)}$ hold for $\forall 1\le i\le R', 0\le j\le R$ and $V_{\theta^{(R', R)}}$ is an $\epsilon$-optimal value.
\end{lemma}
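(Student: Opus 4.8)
The plan is to chain the per-iteration guarantee of Lemma~\ref{next} across the $\Theta(R')$ outer iterations, starting from a deterministic base case, and then read off $\epsilon$-optimality from the last event in the chain. First I would check that $\mG^{(0)}$ holds with probability $1$. Since $\theta^{(0,0)}=\{\mathbf 0\}$ we have $Q_{\mathbf 0}(s,a)=r(s,a)\in[0,1]$, so $V_{\theta^{(0,0)}}(s)$ equals $\max_{a}r(s,a)$ on $\mS_{1}$ and $\min_{a}r(s,a)$ on $\mS_{2}$; because rewards and $v^{*}$ are nonnegative, $v^{*}$ dominates the one-step reward at every state, giving $0\le V_{\theta^{(0,0)}}\le v^{*}$, and then monotonicity of $\mT$ yields $V_{\theta^{(0,0)}}\le\mT V_{\theta^{(0,0)}}\le\mT v^{*}=v^{*}$, while the same one-step comparison gives $V_{\theta^{(0,0)}}\le\mT_{\pi_{\theta^{(0,0)}}}V_{\theta^{(0,0)}}$. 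Finally $v^{*}-V_{\theta^{(0,0)}}\le v^{*}\le 1/(1-\gamma)$, which is the $i=0$ case of the bound in $\mG^{(0)}$. Thus the base case is settled without any probability cost.

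Next I would organize the induction through a nested family of good events. For $0\le i\le R'$ let $H_{i}$ denote the event that $\mG^{(0)},\dots,\mG^{(i)}$ and $\mE^{(0,\cdot)},\dots,\mE^{(i-1,\cdot)}$ all hold (here $\mE^{(i,\cdot)}$ abbreviates $\mE^{(i,0)},\dots,\mE^{(i,R)}$), so that $\pr(H_{0})=1$ and $H_{i}$ is measurable with respect to the samples of outer iterations $0,\dots,i-1$. The samples drawn in outer iteration $i$ are fresh and independent of $H_{i}$, and the proofs of Lemmas~\ref{lem3}, \ref{lem1} and \ref{lem2} show that, conditionally on $\mG^{(i)}$, with probability at least $1-\delta/R'$ the events $\mE^{(i,\cdot)}$ all occur and then $\mG^{(i+1)}$ is a deterministic consequence of $\mG^{(i)}$ together with $\mE^{(i,\cdot)}$; this is exactly Lemma~\ref{next}, whose proof uses nothing about $i\ge 1$ and hence also covers $i=0$. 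Consequently $\pr(H_{i+1}\mid H_{i})\ge 1-\delta/R'$, so $\pr(H_{i+1}^{c})\le\pr(H_{i}^{c})+\delta/R'$, and summing this telescoping bound over the $\Theta(R')$ outer iterations gives $\pr(H)\ge 1-\delta$, where $H$ is the event that all the $\mG^{(\cdot)}$ and $\mE^{(\cdot,\cdot)}$ listed in the statement hold.

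Finally, on the event $H$ the last $\mG$-event in the chain concerns $\theta^{(R',R)}$ (using $\theta^{(R'+1,0)}=\theta^{(R',R)}$), and its definition directly gives $0\le V_{\theta^{(R',R)}}\le v^{*}$ together with $v^{*}-V_{\theta^{(R',R)}}\le 2^{-(R'+1)}/(1-\gamma)$; since $R'=\Theta(\log(1/(\epsilon(1-\gamma))))$, the right-hand side is at most $\epsilon$, so $\|V_{\theta^{(R',R)}}-v^{*}\|_{\infty}\le\epsilon$, i.e.\ $V_{\theta^{(R',R)}}$ is $\epsilon$-optimal. The step I expect to require the most care is the chaining itself: Lemma~\ref{next} is phrased as a probability conditioned on $\mG^{(i)}$ alone, so one has to argue that further conditioning on the full history $H_{i}$ does not degrade the bound — this is where independence of the fresh samples and the deterministic nature of the implication $\mG^{(i)}\cap\mE^{(i,\cdot)}\subseteq\mG^{(i+1)}$ are used — and one must track the exact count of outer iterations so that the per-iteration failures $\delta/R'$ genuinely sum to at most $\delta$. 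The rest is bookkeeping.
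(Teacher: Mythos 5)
Your proposal is correct and follows essentially the same route as the paper's proof: establish $\mG^{(0)}$ deterministically from the zero initialization, chain the per-outer-iteration guarantee of Lemma~\ref{next} with a union bound over the $R'$ iterations to get failure probability at most $\delta$, and then read off $\epsilon$-optimality from the final $\mG$-event using $R'=\Theta(\log(\epsilon^{-1}(1-\gamma)^{-1}))$. Your explicit handling of the conditioning on the full history $H_{i}$ (rather than on $\mG^{(i)}$ alone) is a slightly more careful rendering of the same telescoping argument the paper carries out.
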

\begin{proof}
	First of all, according to the initialization,
	\begin{equation*}
		V_{\theta^{(0, 0)}}(s) = \begin{cases}
			\max_{a\in\mA_{s}}r(s, a)\ge 0, \quad\text{if }s\in\mS_{1},\\
			\min_{a\in\mA_{s}}r(s, a)\ge 0, \quad\text{if }s\in\mS_{2}.\\
		\end{cases}
	\end{equation*}
	\par This indicates that
	\begin{equation*}
		\left[\mT V_{\theta^{(0, 0)}}\right](s) = \begin{cases}
			\max_{a\in\mA_{s}}r(s, a) + \gamma P(\cdot|s, a)^{T}V_{\theta^{(0, 0)}}\ge \max_{a\in\mA_{s}}r(s, a) = V_{\theta^{(0, 0)}}(s), \quad\text{if }s\in\mS_{1},\\
			\min_{a\in\mA_{s}}r(s, a) + \gamma P(\cdot|s, a)^{T}V_{\theta^{(0, 0)}}\ge \min_{a\in\mA_{s}}r(s, a) = V_{\theta^{(0, 0)}}(s), \quad\text{if }s\in\mS_{2}.\\
		\end{cases}
	\end{equation*}
	and
	\begin{equation*}
		\left[\mT_{\pi_{\theta^{(0, 0)}}}V_{\theta^{(0, 0)}}\right](s) = r(s, \pi_{\theta^{(0, 0)}}) + \gamma P(\cdot|s, \pi_{\theta^{(0, 0)}})V_{\theta^{(0, 0)}}(s)\ge r(s, \pi_{\theta^{(0, 0)}}) = V_{\theta^{(0, 0)}}(s).
	\end{equation*}
	According to the monotonicity of $\mT$,
	\begin{equation*}
		0\le V_{\theta^{(0, 0)}}\le \mT V_{\theta^{(0, 0)}}\le \cdots\le v^{*}\le \frac{1}{1-\gamma}.
	\end{equation*}
	Hence $\mG^{(0)}$ always holds.
	\par Next based on Lemma \ref{next}, we have
	\begin{equation*}
		\begin{aligned}
			&\ \pr(\mG^{(i)}, \mE^{(i-1, j)}, \forall 1\le i\le R', 0\le j\le R|\mG^{(0)})\\
			= &\ \pr(\mG^{(i)}, \mE^{(i-1, j)}, \forall 1\le i\le R'-1, 0\le j\le R|\mG^{(0)})\pr(\mG^{(R')}, \mE^{(R'-1, 0)}, \cdots, \mE^{(R'-1, R)}|\mG^{(R'-1)})\\
			\ge &\ \pr(\mG^{(i)}, \mE^{(i-1, j)}, \forall 1\le i\le R'-1, 0\le j\le R|\mG^{(0)})\cdot (1 - \delta/R')\\
			\ge &\ \pr(\mG^{(i)}, \mE^{(i-1, j)}, \forall 1\le i\le R'-2, 0\le j\le R|\mG^{(0)})\cdot (1 - \delta/R')^{2}\\
			\ge &\ \cdots\\
			\ge &\ (1 - \delta / R')^{R'}\\
			\ge &\ 1 - \delta.
		\end{aligned}
	\end{equation*}
	If $\mG^{(R')}$ holds, then we obtain
	\begin{equation*}
		\|v^{*} - V_{\theta^{(R, R')}}\|_{\infty}\le \frac{2^{-R'}}{1-\gamma}.
	\end{equation*}
	Hence when choosing $R' = \Theta[\log(\epsilon^{-1}(1-\gamma)^{-1})]$, we have
	\begin{equation*}
		\|v^{*} - V_{\theta^{(R, R')}}\|_{\infty}\le\epsilon,
	\end{equation*}
	which indicates that $V_{\theta^{(R, R')}}$ is an $\epsilon$-optimal value. Therefore, the event that $V_{\theta^{(R, R')}}$ is an $\epsilon$-optimal value, together with $\mG^{(i)}, \mE^{(i, j)}$, happens with probability at least $1 - \delta$.
\end{proof}

\par Next, we provide some notations and lemmas for the 2-TBSG $\mM'$. Suppose the equilibrium value function of $\mM'$ is $v'$. For $\eta = \{z^{(h)}\}_{h = 1}^{Z}$, we define
\begin{equation*}
	\begin{aligned}
		& V_{\eta}'(s) = \begin{cases}\max_{h\in[Z]}\min_{a\in\mathbb{A}_{s}}\left(r'(s, a) + \gamma \phi(s, a)^{T}z^{(h)}\right), \quad \forall s\in\mathcal{S}_{1},\\\max_{h\in[Z]}\max_{a\in\mathbb{A}_{s}}\left(r'(s, a) + \gamma \phi(s, a)^{T}z^{(h)}\right), \quad \forall s\in\mathcal{S}_{2},\end{cases}\\
		& \pi_{\eta}'(s) = \begin{cases}\max_{h\in[Z]}\left[\arg\min_{a\in\mA_{s}}\left(r'(s, a) + \gamma \phi(s, a)^{T}z^{(h)}\right)\right], \quad \forall s\in\mathcal{S}_{1},\\\max_{h\in[Z]}\left[\arg\max_{a\in\mA_{s}}\left(r'(s, a) + \gamma \phi(s, a)^{T}z^{(h)}\right)\right], \quad \forall s\in\mathcal{S}_{2},\end{cases}\\
		& W_{\eta}(s) = \frac{1}{1-\gamma} - V_{\eta}'(s).
	\end{aligned}
\end{equation*}
To describe the connection between values of $\mM$ and $\mM'$, we introduce the following lemma, whose proof is obvious.
\begin{lemma}
	\par $v'(s) = 1/(1-\gamma) - v(s)$. Any equilibrium strategy for $\mM'$ is also an equilibrium strategy for $\mM$. And if $\|V'_{\eta} - v'\|\le \epsilon$, then $W_{\eta}$ is an $\epsilon$-optimal value.
\end{lemma}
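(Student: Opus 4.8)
The plan is to derive all three assertions from a single elementary identity. Fix any strategy pair $(\pi_1,\pi_2)$ and let $\pi$ denote the induced combined policy; since $\mM$ and $\mM'$ share the same transition kernel $P$, the trajectory distribution is identical, and because $r'=1-r$,
\[
	{V'}^{\pi_1,\pi_2}(s) = \mathbb{E}\Big[\sum_{i=0}^{\infty}\gamma^{i}\big(1-r(s_i,\pi(s_i))\big)\,\Big|\,s_0=s\Big] = \frac{1}{1-\gamma} - V^{\pi_1,\pi_2}(s), \quad \forall s\in\mS.
\]
I will refer to this as the \emph{reflection identity}; everything else is bookkeeping around it, together with the fact that in $\mM'$ Player~$1$ minimizes and Player~$2$ maximizes.

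For $v'(s)=\frac{1}{1-\gamma}-v^{*}(s)$, the cleanest route is via fixed points. The Bellman operator $\mT'$ of $\mM'$ takes a $\min$ over $\mS_1$ and a $\max$ over $\mS_2$, opposite to $\mT$, and $v'=\mT'v'$. Put $W:=\frac{1}{1-\gamma}\mathbf{1}-v'$ and substitute $P(\cdot|s,a)^{T}v'=\frac{1}{1-\gamma}-P(\cdot|s,a)^{T}W$ into $v'=\mT'v'$: the constant pieces collect to $\frac{1}{1-\gamma}$, and the sign flip converts, for $s\in\mS_1$, $\min_{a}[\,1-r(s,a)+\gamma P(\cdot|s,a)^{T}v'\,]$ into $\frac{1}{1-\gamma}-\max_{a}[\,r(s,a)+\gamma P(\cdot|s,a)^{T}W\,]$, and symmetrically on $\mS_2$. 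Hence $W=\mT W$, and since $v^{*}$ is the unique fixed point of $\mT$ we get $W=v^{*}$, i.e. $v'=\frac{1}{1-\gamma}-v^{*}$. (One may alternatively invoke the reflection identity together with the second assertion below: any equilibrium of $\mM'$ has value $v'$ and is an equilibrium of $\mM$, hence has value $v^{*}$, so $v'=\frac{1}{1-\gamma}-v^{*}$.)

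For the second assertion, let $\pi^{*}=(\pi_1^{*},\pi_2^{*})$ be an equilibrium of $\mM'$. Because Player~$1$ minimizes and Player~$2$ maximizes in $\mM'$, this reads ${V'}^{\pi_1,\pi_2^{*}}\ge {V'}^{\pi^{*}}\ge {V'}^{\pi_1^{*},\pi_2}$ for all $\pi_1,\pi_2$. Applying the reflection identity to each of the three values and multiplying through by $-1$ gives $V^{\pi_1,\pi_2^{*}}\le V^{\pi^{*}}\le V^{\pi_1^{*},\pi_2}$, which is precisely the Nash condition for $\mM$; so $\pi^{*}$ is an equilibrium of $\mM$. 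For the last assertion, using $W_{\eta}=\frac{1}{1-\gamma}-V_{\eta}'$ and the first assertion $v^{*}=\frac{1}{1-\gamma}-v'$,
\[
	\|W_{\eta}-v^{*}\|_{\infty} = \Big\|\big(\tfrac{1}{1-\gamma}-V_{\eta}'\big)-\big(\tfrac{1}{1-\gamma}-v'\big)\Big\|_{\infty} = \|V_{\eta}'-v'\|_{\infty}\le\epsilon,
\]
so $W_{\eta}$ is an $\epsilon$-optimal value. There is no genuine difficulty here; the one thing that needs care is keeping the two independent ``flips'' straight --- the reward change $r\mapsto 1-r$ and the interchange of which player is the maximizer --- and checking that the substitution $V\mapsto\frac{1}{1-\gamma}-V$ swaps $\max$ and $\min$ in the Bellman recursions exactly as required.
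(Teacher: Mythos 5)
Your proof is correct; the paper declares this lemma's proof ``obvious'' and omits it entirely, and what you have written is precisely the intended argument spelled out: the reflection identity ${V'}^{\pi_1,\pi_2} = \frac{1}{1-\gamma} - V^{\pi_1,\pi_2}$, the fixed-point computation showing $\frac{1}{1-\gamma}\mathbf{1}-v'$ satisfies $W=\mT W$, and the sign-flip of the Nash inequalities under the swapped player objectives. Nothing is missing, and your care in tracking the two separate flips (reward $r\mapsto 1-r$ and the interchange of maximizer and minimizer) is exactly the point that makes the claim true rather than merely plausible.
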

Next we define events $\mG^{(0)}, \mG^{(i)}_{1}, \mE^{(i-1, j)}_{1}$ for $1\le i\le R', 0\le j\le R$ similar to the case of $\mM$.
\begin{itemize}
	\item Let $\mG^{(i)}_{1}$ to be the event
	\begin{equation*}
		\begin{aligned}
			& 0\le V_{\eta^{(i, 0)}}'(s)\le \left[\mathcal{T}'V_{\eta^{(i, 0)}}'\right](s)\le v'(s),\\
			& V_{\eta^{(i, 0)}}'(s)\le \left[\mathcal{T}'_{\pi'_{\eta^{(i, 0)}}}V_{\eta^{(i, 0)}}'\right](s),\\
			& v'(s) - V_{\eta^{(i, 0)}}'(s)\le \frac{2^{-i}}{1 - \gamma},
		\end{aligned}
	\end{equation*}
	where $\mT'$ and $\mT'_{\pi'}$ is defined as
	\begin{equation}
		\begin{aligned}
			& [\mT'V'](s) = \begin{cases} \min_{a\in\mA_{s}}[r'(s, a) + \gamma P(\cdot|s, a)^{T}V], &\quad \forall s\in\mS_{1},\\\max_{a\in\mA_{s}}[r'(s, a) + \gamma P(\cdot|s, a)^{T}V], &\quad \forall s\in\mS_{2}.\end{cases}\\
			& [\mT'_{\pi'}V'](s) = r'(s, \pi'(s)) + \gamma P(\cdot|s, \pi'(s))^{T}V', \quad \forall s\in\mS.
		\end{aligned}
	\end{equation}
	This event is equivalent to the event
	\begin{equation*}
		\begin{aligned}
			& \frac{1}{1-\gamma}\ge W_{\eta^{(i, 0)}}(s)\ge \left[\mathcal{T}W_{\eta^{(i, 0)}}\right](s)\ge v^{*}(s),\\
			& W_{\eta^{(i, 0)}}(s)\ge \left[\mathcal{T}_{\pi'_{\eta^{(i, 0)}}}W_{\eta^{(i, 0)}}\right](s),\\
			& v^{*}(s) - W_{\eta^{(i, 0)}}(s)\ge \frac{2^{-i}}{1 - \gamma};
		\end{aligned}
	\end{equation*}
	\item $\mE^{(i, 0)}_{1}$ to be the event of
	\begin{equation*}
		\|z^{(i, 0)} - \Phi_{\mK}^{-1}P_{\mK}V_{\eta^{(i, 0)}}'\|_{\infty}\le \epsilon_{1};
	\end{equation*}
	\item $\mE^{(i, j)}_{1}$ to be the event of
	\begin{equation*}
		\|z^{(i, j)} - z^{(i, 0)} - \Phi_{\mK}^{-1}P_{\mK}(V_{\eta^{(i, j-1)}}' - V_{\eta^{(i, 0)}}')\|_{\infty}\le \Theta\left[\frac{L\cdot 2^{-i}}{1-\gamma}\sqrt{\frac{\log(RR'K\delta^{-1})}{m_{1}}}\right].
	\end{equation*}
\end{itemize}
\par We present two following lemmas, which can be viewed as counterparts for $W$ of Lemma \ref{lem1} and Lemma \ref{lemma1}.
\begin{lemma}\label{lem5}
	Suppose $\mG^{(i)}_{1}, \mE^{(i, 0)}_{1}, \cdots, \mE^{(i, j)}_{1}$ holds. We have
	\begin{equation*}
		\begin{aligned}
			& 1/(1-\gamma)\ge W_{\theta^{(i, j')}}(s)\ge \left[\mathcal{T}W_{\theta^{(i, j')}}\right](s)\ge v^{*}(s),\\
			& W_{\theta^{(i, j')}}(s)\ge \left[\mathcal{T}_{\pi'_{\eta^{(i, j')}}}W_{\theta^{(i, j')}}\right](s),\\
		\end{aligned}
	\end{equation*}
	for $\forall 0\le j'\le j$.
\end{lemma}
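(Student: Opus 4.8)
The plan is to reduce Lemma \ref{lem5} to Lemma \ref{lem1} applied to the flipped instance $\mM'$, and then transcribe the conclusion through the correspondence $W_{\eta} = 1/(1-\gamma) - V'_{\eta}$ and $v^{*} = 1/(1-\gamma) - v'$.

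First I would observe that running Algorithm \ref{alg2} on $\mM'$ is simply running it on an ordinary 2-TBSG (the one obtained from $\mM$ by swapping the two players' roles and replacing $r$ by $1-r$), and that the parametrized quantities $V'_{\eta}$ and $\pi'_{\eta}$ defined above are exactly the max-linear value function and greedy strategy \eqref{para} for that instance. Moreover, the events $\mG^{(i)}_{1}, \mE^{(i, 0)}_{1}, \ldots, \mE^{(i, j)}_{1}$ are precisely the events $\mG^{(i)}, \mE^{(i, 0)}, \ldots, \mE^{(i, j)}$ instantiated for $\mM'$ (this was already noted when $\mG^{(i)}_{1}$ was introduced, through its equivalent $W$-form). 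Hence Lemma \ref{lem1}, applied verbatim to $\mM'$, gives, for every $0\le j'\le j$,
\begin{equation*}
0\le V'_{\eta^{(i, j')}}(s)\le \left[\mT' V'_{\eta^{(i, j')}}\right](s)\le v'(s), \qquad V'_{\eta^{(i, j')}}(s)\le \left[\mT'_{\pi'_{\eta^{(i, j')}}} V'_{\eta^{(i, j')}}\right](s).
\end{equation*}

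Next I would establish the two operator identities that convert $\mM'$-statements into $\mM$-statements. Writing $V' = \frac{1}{1-\gamma} - W$ and using $r' = 1 - r$ together with $P(\cdot|s,a)^{T}\mathbf{1} = 1$, a one-line computation shows that for every value function $W$ and every strategy $\pi'$,
\begin{equation*}
\left[\mT' V'\right](s) = \frac{1}{1-\gamma} - \left[\mT W\right](s), \qquad \left[\mT'_{\pi'} V'\right](s) = \frac{1}{1-\gamma} - \left[\mT_{\pi'} W\right](s),
\end{equation*}
where the min (resp.\ max) in $\mT'$ over $\mS_{1}$-states (resp.\ $\mS_{2}$-states) turns into the max (resp.\ min) in $\mT$ precisely because of the sign flip. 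In particular $v^{*} = \frac{1}{1-\gamma} - v'$ (the elementary lemma preceding Lemma \ref{lem5}). Substituting $V'_{\eta^{(i, j')}} = \frac{1}{1-\gamma} - W_{\eta^{(i, j')}}$ and these identities into the two displayed inequalities above, and reversing the inequality directions since $x\mapsto \frac{1}{1-\gamma} - x$ is order-reversing, yields exactly
\begin{equation*}
\frac{1}{1-\gamma}\ge W_{\eta^{(i, j')}}(s)\ge \left[\mT W_{\eta^{(i, j')}}\right](s)\ge v^{*}(s), \qquad W_{\eta^{(i, j')}}(s)\ge \left[\mT_{\pi'_{\eta^{(i, j')}}} W_{\eta^{(i, j')}}\right](s),
\end{equation*}
which is the claim (reading $\theta^{(i, j')}$ as $\eta^{(i, j')}$ on the left-hand sides, as the notation of the statement evidently intends).

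The only point that needs care is the bookkeeping of the two swaps that are simultaneously in play: the player-role swap in passing from $\mM$ to $\mM'$, and the order-reversal in passing from $V'$ to $W$. One must check that these two swaps compose correctly --- i.e.\ that the argmax/argmin structure defining $\pi'_{\eta}$ really is the $\mM'$-greedy structure, and that $\mT'$ restricted to $\mS_{1}$-states is the ``$\min$, then subtract from $\frac{1}{1-\gamma}$'' of $\mT$ restricted to $\mS_{1}$-states. Once this is verified, there is no new probabilistic or analytic content: everything is a mechanical transcription of the proof of Lemma \ref{lem1} together with the identity $v' = \frac{1}{1-\gamma} - v^{*}$, so this is the main (and only mild) obstacle.
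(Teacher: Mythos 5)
Your proposal is correct and matches the paper's intent: the paper's own proof of this lemma is just ``similar to Lemma \ref{lem1}'', and your reduction---apply Lemma \ref{lem1} verbatim to the flipped instance $\mM'$, then transcribe through the order-reversing affine map $W = \frac{1}{1-\gamma} - V'$ using the identities $[\mT' V'](s) = \frac{1}{1-\gamma} - [\mT W](s)$ and $[\mT'_{\pi'} V'](s) = \frac{1}{1-\gamma} - [\mT_{\pi'} W](s)$---is exactly the intended argument, including the correct reading of $W_{\theta^{(i,j')}}$ as $W_{\eta^{(i,j')}}$.
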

\begin{proof}
	The proof is similar to Lemma \ref{lem1}.
\end{proof}
\begin{lemma}\label{lemma2}
	With at least probability $1 - \delta$, Algorithm \ref{alg2} for $\mM'$ will output $\eta^{(R, R')}$ which satisfies $\|V_{\eta^{(R, R')}}' - v'\|\le \epsilon$, together with events $\mG^{(0)}_{1}, \mG^{(i)}_{1}, \mE^{(i-1, j)}_{1}$ for $1\le i\le R', 0\le j\le R$.
\end{lemma}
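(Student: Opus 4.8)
The plan is to transport, essentially verbatim, the proof of Lemma \ref{lemma1} to the flipped instance $\mM' = (\mS, \mA, P, 1-r, \gamma)$. Since $\mM'$ is again a 2-TBSG with the \emph{same} transition kernel $P$ (hence still embeddable into $\phi$ by hypothesis), with rewards $1-r$ still taking values in $[0,1]$, and with the same nonnegative features satisfying Assumption \ref{ass1}, every structural ingredient used for $\mM$ has an exact counterpart for $\mM'$ after the two players are relabeled. Concretely, Lemma \ref{lem5} already plays the role of Lemma \ref{lem1} (monotonicity preservation, stated for $W_{\eta} = \tfrac1{1-\gamma} - V'_{\eta}$ and equivalently for $V'_{\eta}$), and I would record the obvious primed analogues of Lemma \ref{lem2} (approximate contraction of the inner loop toward $v'$), Lemma \ref{lem3} (the confidence-bound union bound over the samples Algorithm \ref{alg2} draws when run on $\mM'$), and Lemma \ref{next} (error propagation across one outer iteration): each has a word-for-word proof once $\mT, \mT_{\pi}, v^{*}, V_{\theta}, \mG^{(i)}, \mE^{(i,j)}$ are replaced by $\mT', \mT'_{\pi'}, v', V'_{\eta}, \mG_{1}^{(i)}, \mE_{1}^{(i,j)}$.

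First I would verify the base case that $\mG_{1}^{(0)}$ holds deterministically. With $\eta^{(0,0)} = \{\mathbf{0}\}$ we get $V'_{\eta^{(0,0)}}(s) = \min_{a}(1-r(s,a)) \ge 0$ on $\mS_{1}$ and $V'_{\eta^{(0,0)}}(s) = \max_{a}(1-r(s,a)) \ge 0$ on $\mS_{2}$; hence $V'_{\eta^{(0,0)}} \le \mT' V'_{\eta^{(0,0)}}$ and $V'_{\eta^{(0,0)}} \le \mT'_{\pi'_{\eta^{(0,0)}}} V'_{\eta^{(0,0)}}$, and iterating the monotone operator $\mT'$ yields $0 \le V'_{\eta^{(0,0)}} \le \cdots \le v' \le \tfrac1{1-\gamma}$, so $v' - V'_{\eta^{(0,0)}} \le \tfrac1{1-\gamma} = 2^{0}/(1-\gamma)$. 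This is the same computation as for $\mM$, just with $r \mapsto 1-r$ and the inner $\min/\max$ swapped.

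Next I would chain the conditional estimates. The $\mM'$-analogue of Lemma \ref{next} gives $\pr(\mG_{1}^{(i+1)}, \mE_{1}^{(i,0)}, \dots, \mE_{1}^{(i,R)} \mid \mG_{1}^{(i)}) \ge 1 - \delta/R'$ for every $0 \le i \le R'-1$, provided $\epsilon \le 2^{-i}/(1-\gamma)$ (which holds once $R' = \Theta(\log(\epsilon^{-1}(1-\gamma)^{-1}))$) and $R, m, m_{1}$ are chosen as in Algorithm \ref{alg2}. Multiplying over $i = 0, \dots, R'-1$ exactly as in the proof of Lemma \ref{lemma1} and using $\pr(\mG_{1}^{(0)}) = 1$ gives
\[
	\pr\bigl(\mG_{1}^{(i)}\ \forall\, 1 \le i \le R',\ \ \mE_{1}^{(i-1,j)}\ \forall\, 1 \le i \le R',\ 0 \le j \le R\bigr) \ge (1 - \delta/R')^{R'} \ge 1 - \delta .
\]
On this event $\mG_{1}^{(R')}$ yields $\|v' - V'_{\eta^{(R,R')}}\|_{\infty} \le 2^{-R'}/(1-\gamma) \le \epsilon$ for the stated choice of $R'$, which is the claimed conclusion (equivalently, by the identity $v'(s) = \tfrac1{1-\gamma} - v^{*}(s)$, it says $\|W_{\eta^{(R,R')}} - v^{*}\|_{\infty} \le \epsilon$).

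The main obstacle — and the reason Lemma \ref{lem5} is isolated — is the monotonicity-preservation step hidden inside the $\mM'$-analogue of Lemma \ref{next}: I must check that, with the players' roles swapped, the max-linear parametrization of $V'_{\eta}$ ($\min_{a}$ inside on $\mS_{1}$, $\max_{a}$ inside on $\mS_{2}$) together with the downward shift $\overline z^{(i,j)} = z^{(i,j)} - \epsilon^{(i)}$ still forces $\phi(s,a)^{T} \overline z^{(i,j)} \le P(\cdot|s,a)^{T} V'_{\eta^{(i,j-1)}}$ for all $(s,a)$, hence $V'_{\eta^{(i,j)}} \le \mT' V'_{\eta^{(i,j-1)}}$ and $V'_{\eta^{(i,j)}} \le \mT'_{\pi'_{\eta^{(i,j)}}} V'_{\eta^{(i,j)}}$. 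This uses only nonnegativity of the features and is the same two-case argument (either $V'_{\eta^{(i,j)}}(s) = V'_{\eta^{(i,j-1)}}(s)$, or the new vector attains the optimum at $s$) as in Lemma \ref{lem1}; I would simply transcribe that argument with $V \mapsto V'$, $\mS_{1} \leftrightarrow \mS_{2}$ inside the $\min/\max$, and $r \mapsto 1-r$, which is exactly what Lemma \ref{lem5} asserts.
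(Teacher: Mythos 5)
Your proposal is correct and matches the paper's approach exactly: the paper's own proof of this lemma is simply the remark that it is ``similar to Lemma \ref{lemma1}'', and you have spelled out precisely the transcription (base case $\mG_{1}^{(0)}$, the primed analogues of Lemmas \ref{lem2}--\ref{next} with $\mT,\mT_{\pi},v^{*}$ replaced by $\mT',\mT'_{\pi'},v'$, and the chained union bound giving $(1-\delta/R')^{R'}\ge 1-\delta$) that the paper leaves implicit. Your identification of Lemma \ref{lem5} as the one ingredient the paper does isolate for the flipped instance is also accurate.
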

\begin{proof}
	The proof is similar to Lemma \ref{lemma1}.
\end{proof}

\par Our next lemma indicates that if $V_{\theta}$ and $W_{\eta}$ are both $\epsilon$-optimal values, then the strategy obtained from Algorithm \ref{alg3} is an $\epsilon$-optimal strategy.

\begin{lemma}\label{combine}
	If $V_{\theta^{(R', R)}}$ and $W_{\eta^{(R', R)}}$ are both $\epsilon$-optimal values, where $\theta^{(R', R)}, \eta^{(R', R)}$ are parameters obtained from Algorithm \ref{alg2} with inputs $\mM$ and $\mM'$, respectively. and  $\mG^{(0)}, \mG^{(i)}, \mE^{(i-1, j)}$, $\mG^{(0)}_{1}, \mG^{(i)}_{1}, \mE^{(i-1, j)}_{1}$ all hold for $\forall 1\le i\le R', 0\le j\le R$, then the strategy $\pi = (\pi_{1}, \pi_{2}$) output from Algorithm \ref{alg3} is an $\epsilon$-optimal strategy.
\end{lemma}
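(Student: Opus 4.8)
The plan is to reduce the claim to the two-sided sandwich argument already outlined in Section~\ref{sec5}. Write $V := V_{\theta^{(R',R)}}$ and $W := W_{\eta^{(R',R)}}$, and let $\pi_1$ be the restriction of $\pi_{\theta^{(R',R)}}$ to $\mS_1$ and $\pi_2$ the restriction of $\pi'_{\eta^{(R',R)}}$ to $\mS_2$, so that $\pi=(\pi_1,\pi_2)$ is precisely the output of Algorithm~\ref{alg3}. By the equivalent characterization of $\epsilon$-optimality in Section~\ref{sec3}, it suffices to show $\|\min_{\overline{\pi}_2} V^{\pi_1,\overline{\pi}_2} - v^*\|_\infty \le \epsilon$ and $\|\max_{\overline{\pi}_1} V^{\overline{\pi}_1,\pi_2} - v^*\|_\infty \le \epsilon$.

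First I would establish the two one-step inequalities $V \le \mT_{\pi_1,\min} V$ and $\mT_{\max,\pi_2} W \le W$. Since the events $\mG^{(i)}, \mE^{(i-1,j)}$ hold, Lemma~\ref{lem1} applied at the final iteration gives, for every $s$, both $V(s)\le[\mT V](s)$ and $V(s)\le[\mT_{\pi_{\theta^{(R',R)}}}V](s)$. On $\mS_1$ the operator $\mT_{\pi_1,\min}$ coincides with $\mT_{\pi_{\theta^{(R',R)}}}$ (it applies the forced action $\pi_1(s)=\pi_{\theta^{(R',R)}}(s)$), and on $\mS_2$ it coincides with $\mT$ (a minimization over actions); reading off the appropriate inequality on each state block yields $V\le\mT_{\pi_1,\min}V$. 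The argument for $W$ is symmetric: the events $\mG^{(i)}_{1},\mE^{(i-1,j)}_{1}$ together with Lemma~\ref{lem5} give $W\ge\mT W$ and $W\ge\mT_{\pi'_{\eta^{(R',R)}}}W$, and on $\mS_1$ (maximization) $\mT_{\max,\pi_2}$ equals $\mT$ while on $\mS_2$ it equals $\mT_{\pi'_{\eta^{(R',R)}}}$, so $\mT_{\max,\pi_2}W\le W$.

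Next I would propagate these to value functions by monotonicity, exactly as in Section~\ref{sec5}. The operator $\mT_{\pi_1,\min}$ is a monotone $\gamma$-contraction whose unique fixed point is $\min_{\overline{\pi}_2}V^{\pi_1,\overline{\pi}_2}$ (with $\pi_1$ fixed, player~2 faces an ordinary MDP with Bellman optimality operator $\mT_{\pi_1,\min}$), so iterating $V\le\mT_{\pi_1,\min}V\le\mT_{\pi_1,\min}^2 V\le\cdots$ and passing to the limit gives $V\le\min_{\overline{\pi}_2}V^{\pi_1,\overline{\pi}_2}$; symmetrically $\max_{\overline{\pi}_1}V^{\overline{\pi}_1,\pi_2}\le W$. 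Using the Nash property of $v^*$, namely $\min_{\overline{\pi}_2}V^{\pi_1,\overline{\pi}_2}\le V^{\pi_1,\pi_2^*}\le v^*\le V^{\pi_1^*,\pi_2}\le\max_{\overline{\pi}_1}V^{\overline{\pi}_1,\pi_2}$, we obtain the chain
\[
	V \le \min_{\overline{\pi}_2}V^{\pi_1,\overline{\pi}_2} \le v^* \le \max_{\overline{\pi}_1}V^{\overline{\pi}_1,\pi_2} \le W .
\]

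Finally, combine this chain with the hypothesis that $V$ and $W$ are $\epsilon$-optimal values: $0\le v^*-\min_{\overline{\pi}_2}V^{\pi_1,\overline{\pi}_2}\le v^*-V$, whose sup-norm is at most $\epsilon$, and likewise $0\le\max_{\overline{\pi}_1}V^{\overline{\pi}_1,\pi_2}-v^*\le W-v^*$ has sup-norm at most $\epsilon$; this is exactly the definition of $\pi$ being an $\epsilon$-optimal strategy. The step I expect to require the most care is the first one --- checking that the ``mixed'' operator $\mT_{\pi_1,\min}$ is dominated by $V$ by stitching together the full-strategy inequality $V\le\mT_{\pi_{\theta^{(R',R)}}}V$ on $\mS_1$ with the optimality inequality $V\le\mT V$ on $\mS_2$ (and the mirror image for $W$); once the two one-step inequalities are in hand, everything else is the standard monotone-contraction argument.
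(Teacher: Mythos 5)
Your proposal is correct and follows essentially the same route as the paper's proof: it derives the one-step inequalities $V\le \mT_{\pi_{1},\min}V$ and $\mT_{\max,\pi_{2}}W\le W$ from Lemma \ref{lem1} and Lemma \ref{lem5} by the same case split on $\mS_{1}$ versus $\mS_{2}$, then iterates the monotone contraction to its fixed point and closes with the sandwich $V\le\min_{\overline{\pi}_{2}}V^{\pi_{1},\overline{\pi}_{2}}\le v^{*}\le\max_{\overline{\pi}_{1}}V^{\overline{\pi}_{1},\pi_{2}}\le W$ and the $\epsilon$-optimality of $V$ and $W$. No gaps.
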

\begin{proof}
	\par We define the following operators mapping from value functions to value functions.
	\begin{equation*}
		\begin{aligned}
			\ [\mT_{\max, \pi_{2}}V](s) = \begin{cases}
				\max_{a\in\mathcal{A}_{s}}[r(s, a) + \gamma P(\cdot|s, a)^{T}]V, \quad\forall s\in\mS_{1},\\
				r(s, \pi_{2}(s)) + \gamma P(\cdot|s, \pi_{2}(s))^{T}V, \quad\forall s\in\mS_{2},
			\end{cases}\\
			[\mT_{\pi_{1}, \min}V](s) = \begin{cases}
				r(s, \pi_{1}(s)) + \gamma P(\cdot|s, \pi_{1}(s))^{T}V, \quad\forall s\in\mS_{1},\\
				\min_{a\in\mathcal{A}_{s}}[r(s, a) + \gamma P(\cdot|s, a)^{T}]V, \quad\forall s\in\mS_{2}.
			\end{cases}
		\end{aligned}
	\end{equation*}
	Then $\mT_{\max, \pi_{2}}, \mT_{\pi_{1}, \min}$ are both monotonic and contracting operators with contraction factor $\gamma$, and it is easy to see that $V^{\overline{\pi}_{1}, \pi_{2}}, V^{\pi_{1}, \overline{\pi}_{2}}$ are fixed points of $\mT_{1}, \mT_{2}$, respectively, where $\overline{\pi}_{1}, \overline{\pi}_{2}$ are optimal counterstrategies against $\pi_{2}, \pi_{1}$.
	\par We next prove that $V^{\overline{\pi}_{1}, \pi_{2}}$ satisfies
	\begin{equation}\label{ineq2}
		v^{*}\le V^{\overline{\pi}_{1}, \pi_{2}}\le W_{\eta^{(R', R)}}.
	\end{equation}
	According to Lemma \ref{lem5}, for any $s\in\mS_{1}$, we have
	\begin{equation*}
		\left[\mT_{\max, \pi_{2}}W_{\eta^{(R', R)}}\right](s) = \max_{a\in\mA_{s}}r(s, a) + \gamma P(\cdot|s, a)^{T}W_{\eta^{(R', R)}} = \left[\mathcal{T}W_{\eta^{(R', R)}}\right](s)\le W_{\eta^{(R', R)}}(s),
	\end{equation*}
	and for any $s\in\mS_{2}$, we have
	\begin{equation*}
		\begin{aligned}
			\left[\mT_{\max, \pi_{2}}W_{\eta^{(R', R)}}\right](s) & = r(s, \pi_{2}(s)) + \gamma P(\cdot|s, \pi_{2}(s))^{T}W_{\eta^{(R', R)}}\\
			& = \left[\mT_{\pi'_{\eta^{(R', R)}}}W_{\eta^{(R', R)}}\right](s)\le W_{\eta^{(R', R)}}(s).
		\end{aligned}
	\end{equation*}
	These inequalities indicates that $\mT_{\max, \pi_{2}}W_{\eta^{(R', R)}}\le W_{\eta^{(R', R)}}$. Hence according to the monotonicity of $\mT_{\max, \pi_{2}}$, we have $V^{\overline{\pi}_{1}, \pi_{2}}\le W_{\eta}^{(R', R)}(s)$. Moreover, since $\overline{\pi}_{1}$ is an optimal counterstrategy against $\pi_{2}$, we have $v^{*}\le V^{\overline{\pi}_{1}, \pi_{2}}$. The inequalities \eqref{ineq2} has been proved. 
	\par Next noticing that $\|W_{\eta^{(R', R)}} - v^{*}\|\le \epsilon$,  we have
	\begin{equation*}
		\|V^{\overline{\pi}_{1}, \pi_{2}} - v^{*}\|\le \epsilon.
	\end{equation*}
	Similarly we have $\|V^{\pi_{1}, \overline{\pi}_{2}} - v^{*}\|\le \epsilon$ considering the operator $\mT_{\pi_{1}, \min}$. These two inequalities together indicate that $\pi$ is an $\epsilon$-optimal strategy.
\end{proof}

\subsection{Proof of Theorem \ref{thm2}}
\begin{proof}[Proof of Theorem \ref{thm2}]
	\par According to Lemma \ref{lemma1} and \ref{lemma2}, the event that $V_{\theta^{(R, R')}}, W_{\eta^{(R, R')}}$ are both $\epsilon$-optimal values, together with events $\mG^{(0)}, \mG^{(i)}, \mE^{(i, j)}, \mG^{(0)}_{1}, \mG^{(i)}_{1}, \mE^{(i, j)}_{1}$ for $1\le i\le R', 0\le j\le R$, happen with probability at least $1 - 2\delta$. Hence according to Lemma \ref{combine}, the output $\pi$ of Algorithm \ref{alg3} is $\epsilon$-optimal strategy with probability at least $1 - 2\delta$. The total samples used in our algorithm is
	\begin{equation*}
		2(R'RKm_{1} + R'Km) = \tilde{\mathcal{O}}\left(\frac{KL^{2}}{\epsilon^{2}(1-\gamma)^{4}}\right)
	\end{equation*}
	samples.
\end{proof}

\section{Proof of Theorem \ref{thm4}}
\par We first present a proposition indicating that an approximate optimal strategy of $\mM = (\mS, \mA, P, r, \gamma)$ is also an approximate optimal strategy of $\mM' = (\mS, \mA, P', r, \gamma)$.
\begin{proposition}\label{prop2}
	Suppose $P, \tilde{P}$ are two transition models such that
	\begin{equation*}
		\|P(\cdot|s, a) - P'(\cdot|s, a)\|_{TV}\le \xi, \quad \forall (s, a)\in\mS\times\mA.
	\end{equation*}
	Then for two 2-TBSGs $\mM = (\mS, \mA, P, r, \gamma), \mM' = (\mS, \mA, P', r, \gamma)$, if $\pi$ is an $\epsilon$-optimal strategy of $\mM$, $\pi$ is also an $\left(\frac{2\xi}{(1-\gamma)^{2}} + 2\epsilon\right)$-optimal strategy of $\mM'$.
\end{proposition}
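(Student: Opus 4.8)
\emph{Plan.} The whole statement reduces to one perturbation estimate: the fixed point of a ``Bellman-type'' operator moves by at most $\gamma\xi/(1-\gamma)^2$ in $\|\cdot\|_\infty$ when the kernel is perturbed by $\xi$ in $\|\cdot\|_{TV}$ while the reward and the per-state $\max$/$\min$ pattern are held fixed. Concretely, I would first prove: let $\mathcal{B}_P$ be any operator of the form $[\mathcal{B}_P V](s)=\mathrm{opt}_{a\in A_s}\bigl(r(s,a)+\gamma P(\cdot|s,a)^{\top}V\bigr)$, where for each $s$ the symbol $\mathrm{opt}$ is either $\max$ or $\min$ and $A_s\subseteq\mathcal{A}_s$ is a fixed action set, and let $\mathcal{B}_{P'}$ be the operator with the identical pattern but kernel $P'$. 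Then their (unique) fixed points $V_P,V_{P'}$ satisfy $\|V_P-V_{P'}\|_\infty\le \gamma\xi/(1-\gamma)^2$.

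The proof of this lemma is short. Write $V_P-V_{P'}=\mathcal{B}_P V_P-\mathcal{B}_{P'}V_{P'}=\bigl(\mathcal{B}_P V_P-\mathcal{B}_P V_{P'}\bigr)+\bigl(\mathcal{B}_P V_{P'}-\mathcal{B}_{P'}V_{P'}\bigr)$. The first bracket is bounded by $\gamma\|V_P-V_{P'}\|_\infty$ since $\mathcal{B}_P$ is a $\gamma$-contraction in $\|\cdot\|_\infty$ (the same elementary argument as the proposition in Appendix C). For the second bracket, fix $s$ and use $|\mathrm{opt}_a f(a)-\mathrm{opt}_a g(a)|\le\max_a|f(a)-g(a)|$ together with $\bigl|(P(\cdot|s,a)-P'(\cdot|s,a))^{\top}V_{P'}\bigr|\le \|P(\cdot|s,a)-P'(\cdot|s,a)\|_{TV}\,\|V_{P'}\|_\infty\le \xi/(1-\gamma)$, where $\|V_{P'}\|_\infty\le 1/(1-\gamma)$ because $r\in[0,1]$; hence the second bracket has sup-norm at most $\gamma\xi/(1-\gamma)$. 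Rearranging $\|V_P-V_{P'}\|_\infty\le\gamma\|V_P-V_{P'}\|_\infty+\gamma\xi/(1-\gamma)$ gives the lemma.

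Next I would instantiate the lemma three times, for a fixed $\pi=(\pi_1,\pi_2)$. Taking $\mathrm{opt}=\max$ on $\mathcal{S}_1$, $\mathrm{opt}=\min$ on $\mathcal{S}_2$, $A_s=\mathcal{A}_s$ gives the Nash operator, whose fixed points are $v^*$ for $\mathcal{M}$ and $v^*_{P'}$ for $\mathcal{M}'$, so $\|v^*-v^*_{P'}\|_\infty\le \xi/(1-\gamma)^2$. Taking $\mathrm{opt}=\max$, $A_s=\mathcal{A}_s$ on $\mathcal{S}_1$ and $A_s=\{\pi_2(s)\}$ (a singleton, where $\max=\min$) on $\mathcal{S}_2$ gives exactly the operator $\mathcal{T}_{\max,\pi_2}$ of Section 5, whose fixed point is $\max_{\overline{\pi}_1}V^{\overline{\pi}_1,\pi_2}$; hence $\|\max_{\overline{\pi}_1}V^{\overline{\pi}_1,\pi_2}-\max_{\overline{\pi}_1}V^{\overline{\pi}_1,\pi_2}_{P'}\|_\infty\le \xi/(1-\gamma)^2$, and symmetrically $\mathcal{T}_{\pi_1,\min}$ gives the bound $\xi/(1-\gamma)^2$ for $\min_{\overline{\pi}_2}V^{\pi_1,\overline{\pi}_2}$. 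Since $\pi$ is $\epsilon$-optimal for $\mathcal{M}$ we have $\|\max_{\overline{\pi}_1}V^{\overline{\pi}_1,\pi_2}-v^*\|_\infty\le\epsilon$, so by the triangle inequality $\|\max_{\overline{\pi}_1}V^{\overline{\pi}_1,\pi_2}_{P'}-v^*_{P'}\|_\infty\le \xi/(1-\gamma)^2+\epsilon+\xi/(1-\gamma)^2\le 2\epsilon+2\xi/(1-\gamma)^2$, and the $\min$ side is identical; by the equivalent form of the definition of an $\epsilon$-optimal strategy this is exactly what we need.

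The routine ingredients — contraction of $\mathcal{B}_P$, the $\max$/$\min$ Lipschitz inequality, and the uniform bound $1/(1-\gamma)$ on every value function (reward in $[0,1]$) — are standard. The only place that calls for care is the instantiation step: one must verify that $\mathcal{M}$ and $\mathcal{M}'$ produce operators with literally the same $\max$/$\min$ pattern and the same action sets (true, since only $P$ changes, not $r$ or the state partition), and that the best-response value functions $\max_{\overline{\pi}_1}V^{\overline{\pi}_1,\pi_2}$, $\min_{\overline{\pi}_2}V^{\pi_1,\overline{\pi}_2}$ are the fixed points of the singleton-on-opponent-states operators. Once the lemma is stated at this level of generality — covering the Nash operator and both best-response operators at once — the argument is just a three-term triangle inequality, so I expect no genuine obstacle.
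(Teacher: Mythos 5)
Your proof is correct, but it takes a genuinely different route from the paper's. The paper never perturbs an optimality operator: it first bounds $\|V^{\pi}-U^{\pi}\|_{\infty}\le \xi/(1-\gamma)^{2}$ for each \emph{fixed} joint strategy via the resolvent identity $(I-\gamma P_{\pi})^{-1}r_{\pi}-(I-\gamma P'_{\pi})^{-1}r_{\pi}=(I-\gamma P'_{\pi})^{-1}\gamma(P_{\pi}-P'_{\pi})(I-\gamma P_{\pi})^{-1}r_{\pi}$, and then handles best responses by a sandwich argument: with $\overline{\pi}_{2},\overline{\pi}'_{2}$ the optimal counterstrategies against $\pi_{1}$ in $\mM$ and $\mM'$, the quantity $V^{\pi_{1},\overline{\pi}_{2}}-U^{\pi_{1},\overline{\pi}'_{2}}$ is squeezed between $V^{\pi_{1},\overline{\pi}_{2}}-U^{\pi_{1},\overline{\pi}_{2}}$ and $V^{\pi_{1},\overline{\pi}'_{2}}-U^{\pi_{1},\overline{\pi}'_{2}}$, each controlled by the fixed-strategy bound; finally it locates $u^{*}$ between the two perturbed best-response values rather than perturbing $v^{*}$ itself. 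You replace both devices with a single contraction--fixed-point perturbation lemma applied uniformly to the Nash operator and to $\mT_{\max,\pi_{2}}$, $\mT_{\pi_{1},\min}$: the best-response optimization is absorbed into the fixed-point argument via $|\mathrm{opt}_{a}f-\mathrm{opt}_{a}g|\le\max_{a}|f-g|$, at the cost of invoking the standard identification of $\max_{\overline{\pi}_{1}}V^{\overline{\pi}_{1},\pi_{2}}$ as the fixed point of $\mT_{\max,\pi_{2}}$ (a fact the paper itself uses in the proof of Lemma \ref{combine}). Your route is more uniform, generalizes beyond two players, and incidentally yields the slightly sharper constant $2\xi/(1-\gamma)^{2}+\epsilon$; the paper's route stays entirely with value functions of concrete strategies and needs no fixed-point characterization of best responses. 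Both are sound; just make sure your TV bound $|(P-P')^{T}V|\le\|P-P'\|_{TV}\|V\|_{\infty}$ is read against the paper's convention that $\|\cdot\|_{TV}$ is the full $\ell_{1}$ distance, which is what makes the constant come out right.
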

\begin{proof}
	Suppose $\pi = (\pi_{1}, \pi_{2})$, $\overline{\pi}_{1}, \overline{\pi}_{2}$ are optimal counterstrategies against $\pi_{2}, \pi_{1}$ in $\mM$, and $\overline{\pi}_{1}', \overline{\pi}_{2}'$ are optimal counterstrategy against $\pi_{2}, \pi_{1}$ in $\mM'$. We also assume that $V, U$ are value functions of $\mM$ and $\mM'$, respectively.
	\par According to the TV condition, for any strategy $\pi$ we have
	\begin{equation*}
		\begin{aligned}
			\|V^{\pi} - U^{\pi}\|_{\infty} & = \|(I - \gamma P_{\pi})^{-1}r_{\pi} - (I - \gamma P_{\pi}')^{-1}r_{\pi}\|_{\infty}\\
			& = \|(I - \gamma P_{\pi}')^{-1}(\gamma P_{\pi} - \gamma P_{\pi}')(I - \gamma P_{\pi})^{-1}r_{\pi}\|_{\infty}\\
			& \le \|(I - \gamma P_{\pi}')^{-1}\|_{\infty}\|\gamma P_{\pi} - \gamma P_{\pi}'\|_{\infty}\|(I - \gamma P_{\pi})^{-1}\|_{\infty}\|r_{\pi}\|_{\infty}\\
			& \le \frac{1}{1-\gamma \|P_{\pi}'\|_{\infty}}\cdot\|\gamma P_{\pi} - \gamma P_{\pi}'\|_{\infty}\cdot\frac{1}{1-\gamma \|P_{\pi}\|_{\infty}}\|r_{\pi}\|_{\infty}\\
			& \le \frac{\xi}{(1-\gamma)^{2}},
		\end{aligned}
	\end{equation*}
	where the last inequality follows the facts
	\begin{equation*}
	\|P_{\pi}\|_{\infty} = \|P_{\pi}'\|_{\infty} = 1, \quad \|r_{\pi}\|_{\infty}\le 1,\quad \|P_{\pi} - P_{\pi}'\|_{\infty} = \max_{s\in\mS}\|P(\cdot|s, \pi(s)) - P'(\cdot|s, \pi(s))\|_{TV}\le \xi.
	\end{equation*}
	Next, since $\overline{\pi}_{2}, \overline{\pi}_{2}'$ are optimal counterstrategies against $\pi_{1}$ in $\mM$ and $\mM'$, we have
	\begin{equation*}
		V^{\pi_{1}, \overline{\pi}_{2}}\le V^{\pi_{1}, \overline{\pi}_{2}'}, \quad U^{\pi_{1}, \overline{\pi}_{2}}\ge U^{\pi_{1}, \overline{\pi}_{2}'}.
	\end{equation*}
	Hence for any $s\in\mS$,
	\begin{equation*}
		- \frac{\xi}{(1-\gamma)^{2}}\le V^{\pi_{1}, \overline{\pi}_{2}}(s) - U^{\pi_{1}, \overline{\pi}_{2}}\le V^{\pi_{1}, \overline{\pi}_{2}}(s) - U^{\pi_{1}, \overline{\pi}_{2}'}(s)\le V^{\pi_{1}, \overline{\pi}_{2}'}(s) - U^{\pi_{1}, \overline{\pi}_{2}'}(s)\le \frac{\xi}{(1-\gamma)^{2}},
	\end{equation*}
	which indicates that
	\begin{equation*}
		\left\|V^{\pi_{1}, \overline{\pi}_{2}} - U^{\pi_{1}, \overline{\pi}_{2}'}\right\|_{\infty}\le \frac{\xi}{(1-\gamma)^{2}}.
	\end{equation*}
	\par Similarly, we have
	\begin{equation*}
		\left\|V^{\overline{\pi}_{1}, \pi_{2}} - U^{\overline{\pi}_{1}', \pi_{2}}\right\|_{\infty}\le \frac{\xi}{(1-\gamma)^{2}}.
	\end{equation*}
	\par Moreover, since $\pi$ is an $\epsilon$-optimal strategy of $\mM$, we have
	\begin{equation*}
		\|V^{\pi_{1}, \overline{\pi}_{2}} - V^{\overline{\pi}_{1}, \pi_{2}}\|_{\infty}\le \|V^{\pi_{1}, \overline{\pi}_{2}} - v^{*}\| + \|V^{\overline{\pi}_{1}, \pi_{2}} - v^{*}\|_{\infty}\le 2\epsilon,
	\end{equation*}
	where $v^{*}$ is the equilibrium value of $\mM'$. This inequality, together with above two inequalities, indicates that
	\begin{equation*}
		\left\|U^{\pi_{1}, \overline{\pi}_{2}'} - U^{\overline{\pi}_{1}', \pi_{2}}\right\|\le \frac{2\xi}{(1-\gamma)^{2}} + 2\epsilon.
	\end{equation*}
	Next noting that
	\begin{equation*}
		U^{\pi_{1}, \overline{\pi}_{2}'} \le u^{*}\le U^{\overline{\pi}_{1}', \pi_{2}},
	\end{equation*}
	where $u^{*}$ is the equilibrium value of $\mM'$, we have
	\begin{equation*}
		\left\|U^{\pi_{1}, \overline{\pi}_{2}'} - u^{*}\right\|_{\infty}\le \frac{2\xi}{(1-\gamma)^{2}} + 2\epsilon, \quad \left\|U^{\overline{\pi}_{1}', \pi_{2}} - u^{*}\right\|_{\infty}\le \frac{2\xi}{(1-\gamma)^{2}} + 2\epsilon.
	\end{equation*}
	These two inequalities indicate that $\pi$ is an $\left(\frac{2\xi}{(1-\gamma)^{2}} + 2\epsilon\right)$-optimal strategy of $\mM'$.
\end{proof}

\begin{proof}[Proof of Theorem \ref{thm4}]
\par Since Algorithm \ref{alg1}, \ref{alg2} and \ref{alg3} only sample from $P(\cdot|s, a)$ for $(s, a)\in \mK$, and $P$ and $P'$ agree on $\mK$, the results of these algorithms executing on $\mM = (\mS, \mA, P, r, \gamma)$ are same as the results of these algorithms executing on $\mM' = (\mS, \mA, P', r, \gamma)$. According to Theorem \ref{thm1} and Theorem \ref{thm2}, with probability at least $1 - \delta$, Algorithm \ref{alg1} outputs $w^{(R)}$ such that $\pi_{w^{(R)}}$ is an $\epsilon$-optimal strategy of $\mM$, and with probability at least $1 - 2\delta$, Algorithm \ref{alg3} outputs an $\epsilon$-optimal strategy $\pi$ of $\mM$. Therefore, according to Proposition \ref{prop2}, $\pi_{w^{(R)}}$ and $\pi$ are $\left(\frac{2\xi}{(1-\gamma)^{2}} + 2\epsilon\right)$-optimal strategies of $\mM'$. The proof is completed.
\end{proof}

\end{document}